\documentclass[11pt, letterpaper]{article}
\usepackage[margin=.5in]{geometry}
\usepackage[font=small,labelfont=bf]{caption}
\usepackage{microtype}
\usepackage{graphicx}
\usepackage{subcaption}
\usepackage{booktabs} 
\usepackage{algorithm}
\usepackage{algpseudocode}
\usepackage{multirow}

\usepackage{amsmath}
\usepackage{amssymb}
\usepackage{mathtools}
\usepackage{amsthm}

\usepackage{natbib}


\usepackage{amsmath,amsfonts,bm}
\usepackage{multirow}
\usepackage{makecell}
\usepackage{subcaption}
\usepackage[font={small}]{caption}
\usepackage{wrapfig}









\def\eqref#1{equation~\ref{#1}}
\usepackage{amsmath}
\usepackage{amssymb}
\usepackage{bbm}

\usepackage[colorlinks=true]{hyperref}
\hypersetup{%
,urlcolor=blue
,citecolor=blue
,linkcolor=red
}
\setcitestyle{square}

\newcommand{\calW}{\mathcal{W}}
\newcommand{\calX}{\mathcal{X}}
\newcommand{\calY}{\mathcal{Y}}

\newcommand{\calT}{\mathcal{T}}
\newcommand{\calD}{\mathcal{D}}

\newcommand{\calN}{\mathcal{N}}

\newcommand{\calB}{\mathcal{B}}

\newcommand{\calR}{\mathcal{R}}

\newcommand{\calH}{\mathcal{H}}

\newcommand{\bx}{\mathbf{x}}
\newcommand{\by}{\mathbf{y}}
\newcommand{\ba}{\mathbf{a}}

\newcommand{\bw}{\mathbf{w}}

\newcommand{\bz}{\mathbf{z}}
\newcommand{\bv}{\mathbf{v}}
\newcommand{\bp}{\mathbf{p}}

\newcommand{\bq}{\mathbf{q}}
\newcommand{\bs}{\mathbf{s}}

\newcommand{\bW}{\mathbf{W}}
\newcommand{\bD}{\mathbf{D}}

\newcommand{\bX}{\mathbf{X}}
\newcommand{\bI}{\mathbf{I}}
\newcommand{\bV}{\mathbf{V}}

\newcommand{\R}{\mathbb{R}}

\newcommand{\E}{\mathbb{E}}

\usepackage{amsthm}

\newtheorem{lem}{Lemma}

\newtheorem{prop}{Proposition}

\DeclareMathOperator*{\argmax}{argmax}
\DeclareMathOperator*{\argmin}{argmin}
\DeclareMathOperator*{\bin}{binary}

\usepackage{tikz}
\usetikzlibrary{arrows}

\newcommand{\vol}{\mathsf{vol}}
\newcommand{\var}{\mathsf{var}}
\newcommand{\bern}{\mathsf{bernoulli}}
\newcommand{\uniform}{\mathsf{uniform}}
\newcommand{\gaussian}{\mathsf{Gaussian}}
\newcommand{\softmax}{\mathsf{softmax}}
\newcommand{\Beta}{\mathsf{beta}}

\newcommand{\diag}{\mathsf{diag}}

\newcommand{\overbar}[1]{\mkern 1.5mu\overline{\mkern-1.5mu#1\mkern-1.5mu}\mkern 1.5mu}

\usepackage{booktabs}

\usepackage{hyperref}
\usepackage{url}
\usepackage{adjustbox}

\begin{document}
\title{Dropout-Based Rashomon Set Exploration for\\Efficient Predictive Multiplicity Estimation}
\date{}
\author{Hsiang Hsu${}^1$, Guihong Li${}^2$\thanks{Work done during internship at JPMorgan Chase Bank, N.A.}, Shaohan Hu${}^1$, and Chun-Fu (Richard) Chen${}^1$ \\
${}^1$JPMorgan Chase Bank, N.A., USA\\
${}^2$The University of Texas at Austin, TX, USA\\
\texttt{\{hsiang.hsu, shaohan.hu, richard.cf.chen\}@jpmchase.com}\\
\texttt{lgh@utexas.edu}
}
\maketitle

\begin{abstract}
Predictive multiplicity refers to the phenomenon in which classification tasks may admit multiple competing models that achieve almost-equally-optimal performance, yet generate conflicting outputs for individual samples.
This presents significant concerns, as it can potentially result in systemic exclusion, inexplicable discrimination, and unfairness in practical applications.
Measuring and mitigating predictive multiplicity, however, is computationally challenging due to the need to explore all such almost-equally-optimal models, known as the Rashomon set, in potentially huge hypothesis spaces. 
To address this challenge, we propose a novel framework that utilizes dropout techniques for exploring models in the Rashomon set.
We provide rigorous theoretical derivations to connect the dropout parameters to properties of the Rashomon set, and empirically evaluate our framework through extensive experimentation.
Numerical results show that our technique consistently outperforms baselines in terms of the effectiveness of predictive multiplicity metric estimation, with runtime speedup up to $20\times \sim 5000\times$.
With efficient Rashomon set exploration and metric estimation, mitigation of predictive multiplicity is then achieved through dropout ensemble and model selection.
\end{abstract}

\section{Introduction}
The Rashomon effect, first introduced in \citet{breiman2001statistical}, describes the phenomenon that there is not just one ``best'' explanation for the data, but many almost-equally-optimal models. 
The deliberation of Rashomon effect has recently prevailed due to the increasingly complicated learning architectures that exacerbate the under-specification\footnote{Under-specification means there is no unique solution to an optimization problem, e.g. the empirical risk minimization that is widely used in modern machine learning \citep{teney2022predicting}.} of optimization problems \citep{d2020underspecification}.
The collection of these almost-equally-optimal models is termed the Rashomon set \citep{semenova2019study, marx2020predictive}.
The existence of Rashomon sets facilitates finding almost-equally-optimal models that further carry practically-desired properties for practitioners, such as interpretability \citep{rudin2019stop}, epistemic patterns (e.g. causality) \citep{hancox2020robustness}, fairness \citep{coston2021characterizing}, or compliance with domain knowledge \citep{hasan2022mitigating}. 

Despite the newfound potential, the Rashomon effect introduces a burgeoning challenge in classification problems, known as \emph{predictive multiplicity}---the phenomenon where almost-equally-optimal classification models assign conflicting predictions to individual samples, i.e., the predictions given to an individual could be determined by an arbitrary model in the Rashomon set.
When predictive multiplicity is left out of account, an arbitrary choice of a single model in the Rashomon set may lead to systemic exclusion from critical opportunities, unexplainable discrimination, and unfairness, to individuals \citep{creel2021algorithmic}.
Moreover, recent studies suggest predictive multiplicity as an additional dimension to evaluate the trade-offs of privacy-assuring mechanisms such as differential privacy \citep{kulynych2023arbitrary} and fairness interventions \citep{long2023arbitrariness}. 
Therefore, measuring predictive multiplicity has been increasingly recognized as a key aspect when reporting model performance aside from its accuracy, for example, in Model Cards \citep{mitchell2019model}.

Several metrics have been proposed to measure predictive multiplicity by quantifying either the spread of output scores from models in the Rashomon set such as viable prediction range \citep{watson2023predictive} and Rashomon Capacity \citep{hsu2022rashomon}, or the inconsistency of decision such as ambiguity/discrepancy \citep{marx2020predictive} and disagreement \citep{kulynych2023arbitrary}, to name a few.
The exact computation of these metrics relies on full access to models in the Rashomon set, which is computationally infeasible and memory inefficient when the hypothesis space of candidate models is large and complex. 
Therefore, current strategies to compute predictive multiplicity metrics either focus on special hypothesis spaces (e.g., linear classifiers \citep{marx2020predictive}, sparse decision trees \citep{xin2022exploring}, and generalized additive models \citep{chen2023understanding}), or by repeatedly re-training models with different random seeds to explore the Rashomon set when there is no special structures to relieve the computational overheads. 
However, re-training is still time-consuming when learning with large datasets.
Thus, how to efficiently estimate predictive multiplicity metrics for highly complicated models, esp. neural networks, still remains an open problem.

In this paper, we propose an efficient dropout-based Rashomon set exploration framework that is easily applicable to neural networks without strenuously-repeated model re-training.
The dropout technique was first found useful to reduce over-fitting \citep{srivastava2014dropout}, and is later connected with Bayesian approximation of neural networks to measure prediction uncertainty \citep{gal2016dropout}.
However, \citet{gal2016dropout} do not consider prediction uncertainty with the notion of the Rashomon effect---they consider prediction uncertainty among all possible models in the hypothesis space instead of models in the Rashomon set (see Appendix~\ref{app:prediction-uncertainty} for further comparisons).
To the best of our knowledge, this work is the first in using dropout techniques for model exploration in Rashomon sets and efficient estimation of predictive multiplicity metrics.
We provide a rigorous theory on how hyper-parameters of dropout and network architectures control the loss deviations of Rashomon sets for feed-forward neural networks (FFNNs). 
Through meticulous parameter selection, a dropout model is \emph{highly likely} to belong to a Rashomon set as the number of model parameters approaches infinity.
Besides theoretical discussions, we conduct empirical studies of measuring predictive multiplicity metrics using dropout with multiple datasets from diverse domains, including financial analytics, medical prediction, large-scale images classification, and human detection.
Exploring models in Rashomon sets with our framework attains $20\times \sim 5000\times$ speedup compared with the baseline methods such as re-training.
Finally, we demonstrate the mitigation of predictive multiplicity using the proposed dropout-based Rashomon set exploration with (i) ensemble methods and (ii) model selection with smaller predictive multiplicity. 

Omitted proofs, additional explanations and discussions, details on experiment setups and training, and additional experiments are included in the Appendix. 
Code to reproduce our experiments can be accessed at \href{https://github.com/jpmorganchase/dropout-rashomon-set-exploration}{https://github.com/jpmorganchase/dropout-rashomon-set-exploration}.
\section{Background and related work}\label{sec:background-related-work}
We consider a dataset $\calD = \{(\bx_i, \by_i)\}_{i=1}^n$, where each pair $(\bx_i, \by_i)$ consists of a feature vector $\bx_i = [\bx_{i1}, \cdots, \bx_{id}]^\top\in \calX \subseteq \R^d$ and a target $\by_i \in \calY$.
We denote by $\calH$ a hypothesis space, a set of candidate models parameterized by $\bw \in \calW$, i.e., $\calH \triangleq \{h_\bw: \calX \to \hat{\calY}: \bw \in \calW\}$. 
The loss function used to evaluate model performance is denoted by $\ell: \hat{\calY}\times\calY\to\R^+$ and $L(\bw, \calD)\triangleq \frac{1}{|\calD|}\sum_{(\bx_i, \by_i)\in\calD} \ell(h_\bw(\bx_i), \by_i)$ denotes the empirical risk evaluated with dataset $\calD$.
When the context is clear, we simply write $L(\bw)$ instead of $L(\bw, \calD)$ for the risk, and use $\bw$ to denote a model.
For $c$-class classification problems, i.e., $\calY = [c]$ and $\hat{\calY}$ is the $c$-dimensional probability simplex $\Delta_c \triangleq\{(r_1, \cdots, r_c)\in[0, 1]^c; \sum_{i=1}^c r_i = 1\}$, the loss function could be the cross-entropy loss or the Brier score loss \citep{brier1950verification} $L_\textsf{BS}(\bw) \triangleq \frac{1}{n} \sum_{i=1}^n \|h_\bw(\bx_i)-\by_i\|_2$. 
For regression problems, $\calY = \hat{\calY} = \R$, and the loss function is mean square error (MSE) loss $L_\textsf{MSE}(\bw) \triangleq \frac{1}{n} \sum_{i=1}^n (h_\bw(\bx_i)-\by_i)^2$.
In this paper, we mainly focus on predictive multiplicity for classification problems.
Finally, for a vector $\bv$, $\diag(\bv)$ is a matrix with $\bv$ on diagonals and $0$ on off-diagonals, and for a matrix $\bV$, $\diag(\bV)$ is a matrix that only keeps the diagonal entries.

\paragraph{The Rashomon set.} 
We define the Rashomon set as the set of all models in the hypothesis space $\calH$ whose empirical risks are \emph{similar} to that of a given empirical risk minimizer $\bw^* \in \argmin_{\bw \in \calW} L(\bw)$.
Formally, given a Rashomon parameter $\epsilon \geq 0$, the \emph{Rashomon set}\footnote{ {The Rashomon set is defined regarding any given dataset $\calD$, even for out-of-distribution data.}} is defined as \citep{semenova2019study}:
\begin{equation}\label{def:rashomon-set}
    \calR(\calH, \calD, \bw^*, \epsilon) \triangleq \{ h_\bw \in \calH; L(h_\bw, \calD) \leq L(h_{\bw^*}, \calD) + \epsilon \}.
\end{equation}
Here, $\epsilon$ determines the size of the set.
We omit the arguments of $\calR(\calH, \calD, \bw^*, \epsilon)$ later in this paper when they are clearly implied from context.
The Rashomon set is at the core of measuring predictive multiplicity, as introduced next.

\paragraph{Measuring predictive multiplicity.}
There are various metrics to quantify predictive multiplicity across models in $\calR$ by either considering their decisions (thresholded predictions/scores after $\argmax$) or output scores. 
Due to space limit, we summarize the mathematical formulations of existing predictive multiplicity metrics in Appendix~\ref{app:metrics}.

Decision-based metrics include \emph{ambiguity} and \emph{discrepancy}, which measure the proportion of samples in a dataset that can cause models in the Rashomon set to make conflicting decisions \citep{marx2020predictive}, and \emph{disagreement}, which is the probability that any two models in the Rashomon set output different decisions for a given sample \citep{kulynych2023arbitrary}.
On the other hand, a finer-grained characterization of predictive multiplicity examines the output scores (i.e., vectors in $\Delta_c$).
For example, \citet{long2023arbitrariness}, \citet{cooper2023variance} and \citet{watson2023predictive} quantify predictive multiplicity by the standard deviation, variance and the largest possible difference of the scores (termed \emph{viable prediction range (VPR)} therein) respectively. 
The \emph{Rashomon Capacity (RC)} measures score variations in the probability simplex with information-theoretic quantities that can be generalized beyond binary classification. 
Despite score conflicts can not be directly translated to decision conflicts, score-based metrics avoid potential over-estimation of predictive multiplicity compared to decision-based metrics (see \citep[Fig.~2]{hsu2022rashomon} for a detailed discussion).

\paragraph{Exploring models in the Rashomon set.}
The computation of predictive multiplicity metrics (cf.~Appendix~\ref{app:metrics}) requires an exact characterization\footnote{It is possible for simple models such as an exact characterization of the Rashomon set for ridge regression \citep[Section~5.1]{semenova2019study} or an exact computation of ambiguity/discrepancy for linear classifiers by mixed integer programming \citep[Section~3]{marx2020predictive}. } of a Rashomon set, i.e., having full access to all of its models.
When $\calH$ is large (e.g., a neural network architecture), exhaustively finding all models in the Rashomon set is computationally infeasible.
Therefore, to approximate a full Rashomon set $\calR(\calH, \calD, \bw^*, \epsilon)$, we define an \emph{empirical} Rashomon set with $m$ models, as follows,
\begin{equation}\label{def:emp-rashomon-set}
    \calR^m(\calH, \calD, \bw^*, \epsilon) \triangleq \{ h_{\bw_1}, \cdots, h_{\bw_m} \in \calH; L(h_{\bw_i}, \calD) \leq L(h_{\bw^*}, \calD) + \epsilon,\; \forall i \in [m] \}.
\end{equation}
The empirical Rashomon set $\calR^m(\calH, \calD, \bw^*, \epsilon)$ is a subset of the Rashomon set defined in (\ref{def:rashomon-set}).
Predictive multiplicity metrics can then be evaluated with the empirical Rashomon set instead of the Rashomon set.
Note that any estimation based on the empirical Rashomon set is an \emph{under-estimate} of the true predictive multiplicity. 
In consequence, an empirical Rashomon set is a ``better'' approximate of the true Rashomon set than another empirical Rashomon set if it leads to higher estimates of predictive multiplicity metrics.
As $m$ increases, the empirical Rashomon set better recovers the Rashomon set, leading to a more precise estimation of predictive multiplicity metrics. 
Therefore, how to \emph{efficiently} acquire a vast amount of models in a Rashomon set becomes the core problem of measuring predictive multiplicity.

In practice, the $m$ models in the empirical Rashomon set can be obtained either by \emph{re-training} with different initializations (e.g., different random seeds to initialize the model weights, different data shuffling for stochastic gradient descent, etc.), or by \emph{adversarial weight perturbation (AWP)} \citep[Section~4]{hsu2022rashomon}.
The re-training strategy views a training procedure $\calT$ to be randomized.
Accordingly, we can denote a random variable $\calT(\calD)$ that outputs all possible models trained with procedure $\calT$ on $\calD$. 
Different models in $\calT(\calD)$ can be induced by using different random initialization\footnote{See \citet{kulynych2023arbitrary} and \citet{semenova2019study} for more details about the re-training strategy.}. 
By re-training models and rejecting those that disobey the loss deviation constraint in (\ref{def:emp-rashomon-set}), we are able to collect $m$ models for the empirical Rashomon set.
The AWP strategy, on the other hand, aims to perturb the weights of a pre-trained model such that the output scores of a sample are thrust toward all possible classes, again under the loss deviation constraint  {(cf.~Appendix~\ref{app:metrics} for more details)}.
Both re-training and AWP require repeatedly training models, which is time-consuming and hinders practitioners from efficiently estimating predictive multiplicity metrics.

Another strategy besides re-training and AWP is to explore \emph{sparse} models in the Rashomon set.
For example, \citet{xin2022exploring} prove that large portions of the decision tree in the hypothesis space do not contain any members of the Rashomon set and can safely be excluded, and provide an algorithm and data structure to completely enumerate and store all models in the Rashomon set for sparse decision trees. 
More recently, \citet{chen2023understanding} study an interpretable, predictive model called generalized additive models (GAMs), and use high-dimensional ellipsoids to approximate the Rashomon sets of sparse GAMs.
Despite these progress, how to explore Rashomon sets and measure predictive multiplicity for more complicated models, especially neural networks, still remain unclear. 
This paper intends to bridge this gap using dropout techniques, as discussed next.
\section{Exploring the Rashomon set with dropout}\label{sec:dropout-and-rashomon}
The dropout technique, originated from the concept of dilution\footnote{The difference is that dilution randomly removes a connect of a neuron while dropout randomly removes the entire neuron (and all its connections to other neurons). For variants of dilution and dropout, see \citet[Chapter~3]{hertz2018introduction} and \citet{labach2019survey}.} \citep{hertz1991introduction}, is a family of stochastic techniques for regularization to prevent over-fitting \citep{hinton2012improving}.
As implied by its name, the dropout technique randomly removes neurons in a neural network at each training time, and can be interpreted as implicitly averaging over an ensemble of \emph{sparse} neural networks with different configurations during training \citep{srivastava2014dropout}. 

In addition to applying dropout in training, \citet{gal2016dropout} apply dropout at inference time to estimate output uncertainty of a model. 
They interpret dropout as a variational approximation of a deep Gaussian process, which is a Bayesian framework that produces a probability distribution of model outputs.
The underlying distribution can then be used to estimate the variance for a certain input, indicating the uncertainty of the model. 
However, the dropout inference studied in \citet{gal2016dropout} has not taken the Rashomon effect into consideration, i.e., they consider the output uncertainty of all possible models in the hypothesis space instead of all almost-equally-optimal models in the Rashomon set; see Appendix~\ref{app:prediction-uncertainty} for a more thorough discussion. 
In this case, predictive multiplicity and output uncertainty are over-estimated, since models without comparable accuracy are also counted in, and therefore dropout inference is not directly applicable to measuring predictive multiplicity.
To address this issue, here, we illustrate how dropout parameters control the loss deviations of Rashomon sets for linear models, and eventually for FFNNs.

\subsection{Formulation of dropout techniques}
We start with the mathematical formulation.
Consider model weights  $\calW \in \R^d$, and define dropout random variables as $\bz = [Z_1, Z_2, \cdots, Z_d]$.
Let $\bD_\bz = \diag(\bz) \in \R^{d\times d}$, the weights after dropout can be denoted as $\bw_D = \bD_\bz\bw$. 
There are two common choices of the random variables $Z_i$.
Bernoulli dropout (also called standard dropout), adopts $Z_i$ to be i.i.d. Bernoulli$(1-p)$ random variables, where $p \in[0, 1]$ is the probability of dropping out a weight (i.e., the dropout rate). 
On the other hand, Gaussian dropout applies i.i.d. Gaussian multiplicative noise, Gaussian$(1, \alpha)$, on the weights \citep{wang2013fast, kingma2015variational}.
Gaussian dropout can also be interpreted as a variational information bottleneck layer \citep{rey2021gaussian}.

Suppose we have $m$ realizations of the dropout matrix, $\bD_{\bz_1}, \cdots, \bD_{\bz_m}$. 
At the inference time, we may apply each dropout realization $\bD_{\bz_i}$ to a fixed model weight $\bw^*$, where each realization is a new, sparse model $h_{\bD_{\bz_i}\bw^*}$.
We would like to note that even though applying dropout is much more computationally efficient than re-training and AWP to obtain a vast amount of models, if, however, the dropout method leads to a huge loss deviation $L(h_{\bD_{\bz_i}\bw^*})-L(h_{\bw^*})$, dropout models are not likely to be in the Rashomon set unless the Rashomon parameter $\epsilon$ is set to be large. 
Therefore, in the next section, we investigate the connection among dropout parameters (i.e., rate $p$ for Bernoulli dropout and variance $\alpha$ for Gaussian dropout), loss deviations, and the Rashomon set.

\subsection{The Rashomon set on ridge regression with dropout}

We start the analysis with a special case of Bernoulli dropout on ridge regression and its connection to the notion of Rashomon sets. 
Consider a parametric hypothesis space of linear models $\calH = \{h_\bw(\bx) = \bw^\top \bx; \bw \in \R^d \}$, ridge regression aims to minimizes the penalized sum of squared error (SSE) loss, $L_\textsf{SSE}(\bw) \triangleq \|\bX\bw-\by\|_2^2$ for a dataset $(\bX, \by) \in \R^{n\times d}\times \R^n$, i.e., $\min\limits_{\bw\in\calW} L_\textsf{SSE}(\bw) + \lambda \|\bw\|_2^2$, where $\lambda$ is the regularization strength.
The solution of ridge regression is given by $\bw^* = (\bX^\top\bX+\lambda \bI_d)^{-1} \bX^\top\by$ \citep{hastie2009elements}.
Denoting the weight solution after dropout to be $\bw_D^* = \bD_\bz \bw^*$, the loss $L_\textsf{SSE}(\bw_D^*)$ becomes a random variable. 
Choosing $Z_i \sim \bern(1-p)$, the goal is to characterize the deviation between $L_\textsf{SSE}(\bw_D^*)$ and $L_\textsf{SSE}(\bw^*)$. 
In fact, if we define the loss deviation as $\epsilon = L_\textsf{SSE}(\bw_D^*) - L_\textsf{SSE}(\bw'^*)$, where $\bw'^* \triangleq (1-p)\bw^*$, then $\epsilon$ is a random variable as well, and its expectation can be computed, as shown in the following proposition.

\setlength\intextsep{-1pt}
\begin{wrapfigure}{r}{0.4\textwidth}
  \begin{center}
    \includegraphics[width=0.4\textwidth]{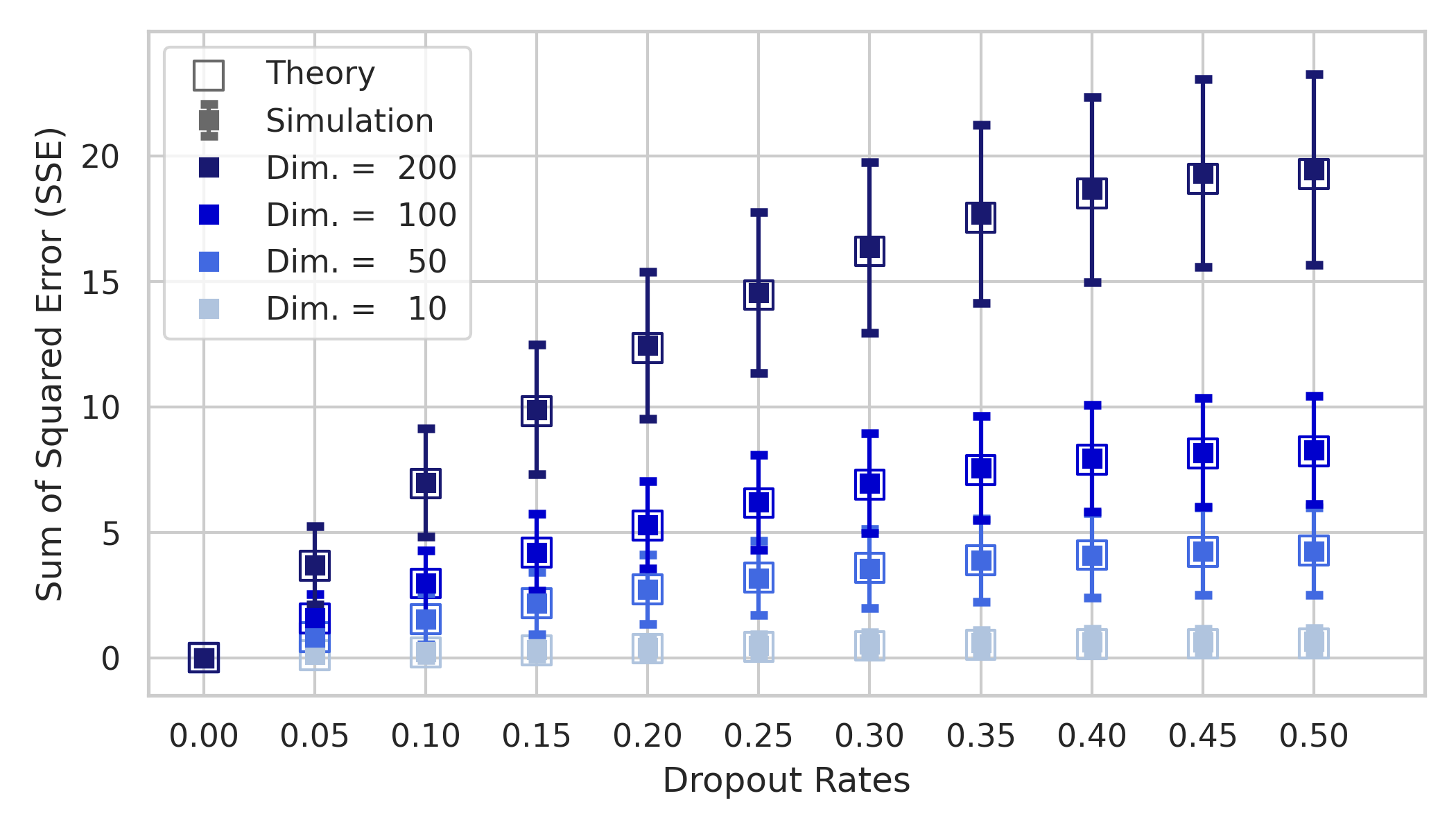}
  \end{center}
  \caption{\small 
  Proposition~\ref{prop:lr-loss-dropout} with 20k models and dimension $d = \{10, 50, 100, 200\}$. As $d$ increases, the variance of loss enlarges, while the mean still matches the theory in (\ref{eq:prop1-expectation}).}\label{fig:prop-1}
\end{wrapfigure}

\begin{prop}\label{prop:lr-loss-dropout}
    Consider Bernoulli dropout with rate $p$, and denote the dropout weights as $\bw_D^* = \bD_\bz \bw^*$.
    The loss deviation $\epsilon = L_\textsf{SSE}(\bw_D^*) - L_\textsf{SSE}(\bw'^*)$ satisfies 
    \begin{equation}\label{eq:prop1-expectation}
        \E_{\bz\sim\bern(1-p)}[\epsilon] = \E_{\bz\sim\bern(1-p)}\left[L_\textsf{SSE}(\bw_D^*) - L_\textsf{SSE}(\bw'^*)\right] = p(1-p) {\bw^*}^\top \diag(\bX^\top \bX) \bw^*.
    \end{equation}
    Moreover, if the features of data matrix $\bX$ are linearly independent and normalized, i.e., $\bX^\top\bX = \bI_d$, we have $\E_{\bz\sim\bern(1-p)}[\epsilon] = \frac{p(1-p)}{(1+\lambda)^2}\|\by\|_2^2$.
\end{prop}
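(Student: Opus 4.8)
The plan is to expand both loss terms as quadratic forms in the dropout vector $\bz$, take expectations using the elementary first and second moments of i.i.d.\ Bernoulli coordinates, and then observe that the reference point $\bw'^* = (1-p)\bw^*$ has been chosen precisely so that the linear and constant pieces cancel in the difference, leaving only a diagonal correction term.

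First I would write, with $\bw^*$ fixed and $\bD_\bz = \diag(\bz)$ symmetric,
\[
L_\textsf{SSE}(\bw_D^*) = \|\bX\bD_\bz\bw^* - \by\|_2^2 = {\bw^*}^\top \bD_\bz \bX^\top\bX \bD_\bz \bw^* - 2\,\by^\top \bX \bD_\bz \bw^* + \|\by\|_2^2,
\]
and similarly $L_\textsf{SSE}(\bw'^*) = (1-p)^2\, {\bw^*}^\top \bX^\top\bX \bw^* - 2(1-p)\,\by^\top\bX\bw^* + \|\by\|_2^2$. The key computational fact is that for $Z_i \sim \bern(1-p)$ one has $\E[Z_i] = \E[Z_i^2] = 1-p$ (since $Z_i^2 = Z_i$) and $\E[Z_i Z_j] = (1-p)^2$ for $i\neq j$; hence $\E[\bD_\bz] = (1-p)\bI_d$ and, for any fixed $\bM \in \R^{d\times d}$,
\[
\E_{\bz\sim\bern(1-p)}\!\left[\bD_\bz \bM \bD_\bz\right] = (1-p)^2 \bM + p(1-p)\,\diag(\bM),
\]
which one checks entrywise. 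Taking expectations in the expansion above, the linear term becomes $-2(1-p)\,\by^\top\bX\bw^*$, matching its counterpart in $L_\textsf{SSE}(\bw'^*)$; the constants $\|\by\|_2^2$ agree; and applying the displayed identity with $\bM = \bX^\top\bX$ gives $\E[{\bw^*}^\top \bD_\bz \bX^\top\bX \bD_\bz \bw^*] = (1-p)^2\,{\bw^*}^\top\bX^\top\bX\bw^* + p(1-p)\,{\bw^*}^\top\diag(\bX^\top\bX)\bw^*$. Subtracting, everything cancels except the last term, yielding (\ref{eq:prop1-expectation}).

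For the second claim I would substitute $\bX^\top\bX = \bI_d$: then $\diag(\bX^\top\bX) = \bI_d$ and the ridge solution collapses to $\bw^* = (\bI_d + \lambda\bI_d)^{-1}\bX^\top\by = (1+\lambda)^{-1}\bX^\top\by$, so $\E[\epsilon] = p(1-p)\,\|\bw^*\|_2^2 = \tfrac{p(1-p)}{(1+\lambda)^2}\|\bX^\top\by\|_2^2 = \tfrac{p(1-p)}{(1+\lambda)^2}\|\by\|_2^2$, where the last step uses that orthonormal columns of $\bX$ preserve the norm (exact when $\bX$ is square, $n=d$, or when $\by$ lies in the column span of $\bX$).

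The main obstacle is not a conceptual one --- this is essentially a second-moment computation --- but two points require care: (i) one must use $\E[Z_i^2] = 1-p$ rather than $(1-p)^2$, which is exactly what makes the $\diag(\bX^\top\bX)$ term survive the subtraction; and (ii) in the normalized case the passage from $\|\bX^\top\by\|_2$ to $\|\by\|_2$ uses the implicit assumption just noted, which I would state explicitly rather than gloss over.
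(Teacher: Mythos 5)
Your proposal is correct and follows essentially the same route as the paper's proof: expand $L_\textsf{SSE}$ as a quadratic form in $\bD_\bz$, use $\E[Z_i^2]=1-p$ versus $\E[Z_iZ_j]=(1-p)^2$ (equivalently $\E[\bD_\bz\bM\bD_\bz]=(1-p)^2\bM+p(1-p)\diag(\bM)$) so that only the $p(1-p)\,{\bw^*}^\top\diag(\bX^\top\bX)\bw^*$ term survives the subtraction against $L_\textsf{SSE}(\bw'^*)$, and then specialize to $\bX^\top\bX=\bI_d$. Your caveat (ii) is well taken: the paper's final step silently uses $\by^\top\bX\bX^\top\by=\|\by\|_2^2$, which indeed requires $\by$ to lie in the column span of $\bX$ (or $n=d$), so flagging that assumption explicitly is an improvement rather than a deviation.
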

Proposition~\ref{prop:lr-loss-dropout} illustrates that a dropout model $h_{\bw_D^*}$ , in expectation, belongs to the Rashomon set $\calR(\bw'^*, p(1-p) {\bw^*}^\top \diag(\bX^\top \bX) \bw^*)$.
We provide Monte Carlo simulation with a Gaussian synthetic dataset with different dimension $d$ in Figure~\ref{fig:prop-1}.
It is clear that the expected simulated losses follow the theory, and sheds light on the possibility to control loss deviations of Rashomon sets using dropout.
 {Proposition~\ref{prop:lr-loss-dropout} can be further improved. 
First, (\ref{eq:prop1-expectation}) could be defined regarding the original weights $\bw^*$ instead of $\bw'^*$.
Second, (\ref{eq:prop1-expectation}) only specifies that dropout models belongs to a Rashomon set in expectation.
In the rest of this section, we aim to improve this two limitations, and show that the probability is controlled by the feature dimension $d$.
}

\subsection{The Rashomon set of linear models with dropout}
We start with ridge regression, which Rashomon set $\calR_\textsf{ridge}(\bw^*, \epsilon)$ has an analytical form; in fact, it is an $d$-dimensional ellipsoid centered at $\bw^*$ \citep[Theorem~16]{semenova2019study}, i.e., 
\begin{equation}\label{eq:lr-rashomon-set}
    \calR_\textsf{ridge}(\bw^*, \epsilon) = \{\bw \in \calW; (\bw-\bw^*)^\top (\bX^\top\bX+\lambda \bI_d)(\bw-\bw^*) \leq \epsilon\}.
\end{equation}
With the exact form of the Rashomon set, we can further compute the probability that a dropout model $\bD_\bz\bw^*$ is in the Rashomon set, i.e., $\Pr\left\{ \bD_\bz\bw^* \in \calR_\textsf{ridge}(\bw^*, \epsilon) \right\}$.

\begin{prop}\label{prop:dropout-in-rashomon}
Without loss of generality, assume $\bX^\top \bX = \bI_d$ (otherwise we can always ``whiten'' the data matrix $\bX$) and the weights are bounded, i.e., $\|\bw\|_2^2 \leq M$, then the probability that the model $\bw^*$ after dropout is in the Rashomon set is lower bounded by
\begin{equation}\label{eq:ridge-bounds}
\begin{aligned}
    \Pr\left\{ \bD_\bz\bw^* \in \calR_\textsf{ridge}(\bw^*, \epsilon) \right\} 
    &\geq 
    \begin{cases}
        1 - (1+\lambda)\frac{pM}{\epsilon},\; \text{if $Z_i \overset{i.i.d.}{\sim} \bern(1-p)$};\\
        1 - (1+\lambda)\frac{\alpha M}{\epsilon}, \; \text{if $Z_i \overset{i.i.d.}{\sim} \gaussian(1, \alpha)$}.
    \end{cases}
\end{aligned}
\end{equation}
\end{prop}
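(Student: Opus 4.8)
The plan is to bound the probability of the complementary event $\Pr\{\bD_\bz\bw^* \notin \calR_\textsf{ridge}(\bw^*,\epsilon)\}$ via Markov's inequality applied to the nonnegative quadratic form appearing in the Rashomon set definition. Since $\bX^\top\bX = \bI_d$, the defining inequality in \eqref{eq:lr-rashomon-set} simplifies: $\bD_\bz\bw^* \in \calR_\textsf{ridge}(\bw^*,\epsilon)$ iff $(1+\lambda)\|\bD_\bz\bw^* - \bw^*\|_2^2 \leq \epsilon$, i.e. iff $(1+\lambda)\sum_{i=1}^d (Z_i-1)^2 (w_i^*)^2 \leq \epsilon$. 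So the first step is to write $\Pr\{\bD_\bz\bw^*\notin\calR_\textsf{ridge}\} = \Pr\{(1+\lambda)\sum_i (Z_i-1)^2(w_i^*)^2 > \epsilon\}$ and apply Markov to get an upper bound of $\frac{1+\lambda}{\epsilon}\,\E[\sum_i (Z_i-1)^2 (w_i^*)^2]$.

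The second step is to compute $\E[(Z_i-1)^2]$ for each noise model using the i.i.d.\ assumption, so the expectation factors coordinatewise. For Bernoulli, $Z_i \sim \bern(1-p)$ means $Z_i \in \{0,1\}$ with $\Pr\{Z_i=0\}=p$, so $(Z_i-1)^2$ is $1$ with probability $p$ and $0$ otherwise, giving $\E[(Z_i-1)^2] = p$. For Gaussian, $Z_i \sim \gaussian(1,\alpha)$ has mean $1$, so $\E[(Z_i-1)^2] = \var(Z_i) = \alpha$. Hence $\E[\sum_i (Z_i-1)^2 (w_i^*)^2]$ equals $p\|\bw^*\|_2^2$ or $\alpha\|\bw^*\|_2^2$ respectively. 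Combining with the boundedness assumption $\|\bw^*\|_2^2 \leq M$ yields the upper bounds $(1+\lambda)\frac{pM}{\epsilon}$ and $(1+\lambda)\frac{\alpha M}{\epsilon}$ on the complementary probability, and taking $1$ minus these gives exactly \eqref{eq:ridge-bounds}.

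I do not expect a serious obstacle here; the argument is essentially a one-line Markov bound once the quadratic form is diagonalized via $\bX^\top\bX = \bI_d$. The only points requiring mild care are: (i) confirming that ``whiten the data matrix'' is a legitimate reduction — one should note that the linear change of variables absorbing $(\bX^\top\bX)^{1/2}$ into $\bw$ maps the ellipsoid to a sphere and the dropout action transforms covariantly, or alternatively just state the result under the stated normalization as the proposition does; and (ii) making sure the bound is only nontrivial when $\epsilon \geq (1+\lambda)pM$ (resp.\ $(1+\lambda)\alpha M$), which is implicit since probabilities are at least zero. If one wanted a sharper statement one could instead use a Chernoff/Bernstein bound on $\sum_i (Z_i-1)^2(w_i^*)^2$ to get exponential-in-$d$ concentration, which is presumably where the subsequent ``probability controlled by feature dimension $d$'' discussion is heading, but for this proposition the plain first-moment bound suffices.
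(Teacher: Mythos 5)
Your proposal is correct and follows essentially the same route as the paper: the paper also rewrites the ellipsoid condition under $\bX^\top\bX=\bI_d$ as $(1+\lambda)\,{\bw^*}^\top(\bD_\bz-\bI_d)^\top(\bD_\bz-\bI_d)\bw^*\le\epsilon$, applies Markov's inequality, and evaluates $\E[(Z_i-1)^2]=p$ (Bernoulli) or $\alpha$ (Gaussian) coordinatewise (via its Lemma~\ref{lem:sum-bound}) together with $\|\bw^*\|_2^2\le M$. No gaps; your added remarks on the whitening reduction and the triviality regime are consistent with the paper's treatment.
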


Proposition~\ref{prop:dropout-in-rashomon} suggests that the dropout parameters, rate $p$ and variance $\alpha$, are important for controlling the probability. 
Indeed, a trivial case is when $p = \alpha = 0$ (i.e., no dropout), the probability $\Pr\left\{ \bD_\bz\bw^* \in \calR_\textsf{ridge}(\bw^*, \epsilon) \right\}$ is always 1. 
Moreover, let $\delta>0$, if $p= O(d^{-\delta})$ for Bernoulli dropout $Z_i \sim$ Bernoulli$(1-p)$, and $\alpha = O(d^{-\delta})$ for Gaussian dropout $Z_i \sim$ Gaussian$(1, \alpha)$, both dropout mechanisms leads to $\lim\limits_{d\to\infty}\Pr\left\{ \bD_\bz\bw^* \in \calR_\textsf{ridge}(\bw^*, \epsilon) \right\} = 1$  {(cf.~Appendix~\ref{app:additional-theory-discussion} for details).}

Similar results also hold if we pass the outputs $h_\bw(\bx_i) = \bw^\top \bx_i$ through a softmax function $\softmax(t) = 1/(1+\exp(-t))$ to get output scores, and use the Brier score loss $L_\textsf{BS}(\bw)$ to minimize the average difference between labels and scores for a classification problem. 
Using the $1$-Lipschitzness of the softmax function, the next proposition shows the probability that dropout models belong to a Rashomon set $\calR_\textsf{Brier}(\bw^*, \epsilon) = \{\bw \in \calW; L_\textsf{BS}(\bw) - L_\textsf{BS}(\bw^*) \leq \epsilon\}$.

\begin{prop}\label{prop:linear-brier}
Let $\overline{\|\bx\|_2^2} = \frac{1}{n}  \|\bx_i\|_2^2$ and suppose the weights are bounded, i.e., $\|\bw\|_2^2 \leq M$, for both Bernoulli dropout $Z_i \sim$ Bernoulli$(1-p)$ , and Gaussian dropout $Z_i \sim$ Gaussian$(1, \alpha)$, we have
\begin{equation}\label{eq:brier-bounds}
\begin{aligned}
    \Pr\left\{ \bD_\bz\bw^* \in \calR_\textsf{Brier}(\bw^*, \epsilon) \right\} &\geq 
    \begin{cases}
        1 - (1+\lambda)\frac{pMd\overline{\|\bx\|_2^2}}{\epsilon},\; \text{if $Z_i \overset{i.i.d.}{\sim} \bern(1-p)$};\\
        1 - (1+\lambda)\frac{\alpha Md\overline{\|\bx\|_2^2}}{\epsilon}, \; \text{if $Z_i \overset{i.i.d.}{\sim} \gaussian(1, \alpha)$}.
    \end{cases}
\end{aligned}
\end{equation}
\end{prop}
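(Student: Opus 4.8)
The plan is to mirror the argument behind Proposition~\ref{prop:dropout-in-rashomon}: produce a deterministic estimate of the Brier loss deviation of the form (constant)$\,\times\,\|\bw_D^*-\bw^*\|_2$, and then apply Markov's inequality to the random quantity $\|\bw_D^*-\bw^*\|_2^2$, whose first moment was already computed in the proof of Proposition~\ref{prop:dropout-in-rashomon}. The only genuinely new ingredient compared with the ridge case is that the Brier objective now sits behind a softmax, so the first task is to absorb this nonlinearity into a Lipschitz estimate that reduces everything back to the Euclidean perturbation of the weights.

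Concretely, I would proceed in four steps. First, using the reverse triangle inequality inside $L_\textsf{BS}(\bw)=\frac1n\sum_i\|\softmax(\bw^\top\bx_i)-\by_i\|_2$, bound $L_\textsf{BS}(\bw_D^*)-L_\textsf{BS}(\bw^*)\le\frac1n\sum_i\|\softmax({\bw_D^*}^\top\bx_i)-\softmax({\bw^*}^\top\bx_i)\|_2$. Second, invoke the $1$-Lipschitzness of $\softmax$ to replace each summand by the logit gap $|({\bw_D^*}-{\bw^*})^\top\bx_i|$. Third, apply Cauchy--Schwarz, $|({\bw_D^*}-{\bw^*})^\top\bx_i|\le\|{\bw_D^*}-{\bw^*}\|_2\,\|\bx_i\|_2$, together with an $\ell_1$--$\ell_\infty$ Hölder step (to extract the dimension factor $d$), the weight bound $\|\bw\|_2^2\le M$, and the averaged data norm $\overline{\|\bx\|_2^2}=\frac1n\sum_i\|\bx_i\|_2^2$, to reach an estimate of the shape $L_\textsf{BS}(\bw_D^*)-L_\textsf{BS}(\bw^*)\le C(d,\overline{\|\bx\|_2^2})\,\|{\bw_D^*}-{\bw^*}\|_2$. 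Fourth, observe that under the whitening assumption $\bX^\top\bX=\bI_d$ one has $\|{\bw_D^*}-{\bw^*}\|_2^2=\frac1{1+\lambda}({\bw_D^*}-{\bw^*})^\top(\bX^\top\bX+\lambda\bI_d)({\bw_D^*}-{\bw^*})$, so the event ``$\|{\bw_D^*}-{\bw^*}\|_2$ small'' is exactly an inclusion $\bw_D^*\in\calR_\textsf{ridge}(\bw^*,\cdot)$; applying Markov's inequality to this quadratic form and plugging in $\E[\|{\bw_D^*}-{\bw^*}\|_2^2]=\E[(Z_i-1)^2]\,\|\bw^*\|_2^2$, which equals $p\|\bw^*\|_2^2$ for $Z_i\sim\bern(1-p)$ and $\alpha\|\bw^*\|_2^2$ for $Z_i\sim\gaussian(1,\alpha)$ — the identical moment computation used for Proposition~\ref{prop:dropout-in-rashomon} — yields the two stated bounds, the Bernoulli and Gaussian cases differing only through $p\leftrightarrow\alpha$.

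I expect the main obstacle to be the Lipschitz reduction rather than the probabilistic step. Two points need care there: (i) since $\bw^*$ is the \emph{ridge} minimizer and not the Brier minimizer, $L_\textsf{BS}(\bw_D^*)-L_\textsf{BS}(\bw^*)$ is a priori signed, so one should work with its positive part (or absolute value) and note that whenever it is negative $\bw_D^*$ already lies in $\calR_\textsf{Brier}(\bw^*,\epsilon)$; and (ii) the bookkeeping that turns the chain of Cauchy--Schwarz/Hölder inequalities into precisely the factor $d\,\overline{\|\bx\|_2^2}$ (rather than, say, $\max_i\|\bx_i\|_2^2$ or a $\sqrt{d}$) is where the stated constant is pinned down, and a slightly looser route would give a different though qualitatively equivalent bound. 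Everything downstream of the Lipschitz step is essentially a verbatim repeat of the Markov argument in Proposition~\ref{prop:dropout-in-rashomon}.
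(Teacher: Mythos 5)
Your overall architecture is the right one and matches the paper's proof in outline (triangle-type step, $1$-Lipschitzness of the softmax, Cauchy--Schwarz, then Markov with the dropout second moment from Lemma~\ref{lem:sum-bound}), but there is a genuine homogeneity mismatch in your probabilistic step. Working with the unsquared Brier loss and a single Cauchy--Schwarz, your Step~3 ends with a deterministic bound that is \emph{linear} in $\|\bw_D^*-\bw^*\|_2$, yet in Step~4 you apply Markov to $\|\bw_D^*-\bw^*\|_2^2$, whose mean is $pM$ (resp.\ $\alpha M$). Chasing this through, the failure probability you obtain is of order $pM\overline{\|\bx\|_2^2}/\epsilon^2$ (or, if you instead apply Markov to the norm itself after a Jensen step, of order $\sqrt{pM\overline{\|\bx\|_2^2}}/\epsilon$); neither is dominated by the claimed $(1+\lambda)pMd\overline{\|\bx\|_2^2}/\epsilon$ when $\epsilon$ (resp.\ $p$) is small, so the argument as written does not deliver the stated inequality. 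The paper avoids this by treating $L_\textsf{BS}$ as an average of \emph{squared} errors: then the comparison step, the Lipschitz bound $(\sigma(a)-\sigma(b))^2\le(a-b)^2$, and Cauchy--Schwarz give $L_\textsf{BS}(\bD_\bz\bw^*)-L_\textsf{BS}(\bw^*)\le \frac{1}{n}\sum_{i=1}^n\left({\bw^*}^\top(\bD_\bz-\bI_d)^\top\bx_i\right)^2\le M\|\bD_\bz-\bI_d\|_F^2\,\overline{\|\bx\|_2^2}$, a quantity \emph{quadratic} in the dropout noise, and a single Markov application at level $\epsilon$ then produces exactly the $p/\epsilon$ and $\alpha/\epsilon$ scaling. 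The fix for your plan is simply to square throughout (adopting the convention used in the paper's appendix for $L_\textsf{BS}$) before invoking Markov.

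Two smaller points. First, the factor $d$ in the statement does not come from an $\ell_1$--$\ell_\infty$ H\"older manipulation of the data: in the paper it arises from the crude split $\|(\bD_\bz-\bI_d)\bw^*\|_2^2\le\|\bD_\bz-\bI_d\|_F^2\|\bw^*\|_2^2$ combined with $\E\|\bD_\bz-\bI_d\|_F^2=dp$ (resp.\ $d\alpha$) from Lemma~\ref{lem:sum-bound}. If, as you propose, you keep the perturbation attached to $\bw^*$, the relevant expectation is only $pM$ and you in fact prove a \emph{stronger} bound without the $d$; since $(1+\lambda)d\ge1$, the stated bound follows a fortiori, so there is no need to manufacture a $d$. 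Second, the whitening detour through $\calR_\textsf{ridge}$ is unnecessary: Proposition~\ref{prop:linear-brier} does not assume $\bX^\top\bX=\bI_d$, and the $(1+\lambda)$ you introduce by rewriting $\|\bw_D^*-\bw^*\|_2^2$ as a ridge quadratic form cancels again in the Markov step; the $(1+\lambda)$ in the statement only weakens the bound and can be ignored. Your point (i) about the sign of the deviation is fine --- whenever the deviation is nonpositive the dropout model is trivially in $\calR_\textsf{Brier}(\bw^*,\epsilon)$, which is also how the paper's one-sided chain of inequalities should be read.
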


The bound in Proposition~\ref{prop:linear-brier} is different from that in Proposition~\ref{prop:dropout-in-rashomon} with merely a factor of $d$.
Therefore, the asymptotic behaviors of the probability hold again as $d$ approaches infinity.
Precisely, as long as $p$ and $\alpha$ are of order $O(d^{-(1+\delta)})$ and the norm of inputs $\bx_i$ is finite, $\delta > 0$, we have $\lim\limits_{d\to\infty} \Pr\left\{ \bD_\bz\bw^* \in \calR_\textsf{Brier}(\bw^*, \epsilon) \right\} = 1$ for both Bernoulli and Gaussian dropout mechanisms.

Proposition~\ref{prop:dropout-in-rashomon} and~\ref{prop:linear-brier} together show that for regression and classification problems, as long as the dropout parameters are carefully controlled with respect to the feature dimension $d$, dropout leads to models in the Rashomon set \emph{with high probability}.

\subsection{Extension to feed-forward neural networks}

In the previous section, we focus on simple linear models. 
Here, we include the discussion between dropout and the Rashomon set for FFNNs with hidden layers\footnote{The analysis can also be applied to convolutional neural networks as they are FFNNs that use filters and pooling layers \citep{goodfellow2016deep}.}. 
We use the fact that activation functions $\sigma(\cdot)$ commonly used in neural networks such as ReLU, softmax, and tanh, and non-linear functions such as max-pooling, are Lipschitz with constant $1$ (cf.~\citet{virmaux2018lipschitz}).

Consider a $K$-hidden-layer neural network as follows:
\begin{equation}\label{eq:output-nn}
    h_\bW^K(\bx_i) = \bW_K^\top \sigma\left(\bW_{K-1}^\top \cdots \sigma\left(\bW_{1}^\top \bx_i\right) \cdots\right),
\end{equation}
where $\bW_k \in \R^{m_{k-1}\times m_k}$, $k \in [K]$ are the weight matrices, where $m_0 = d$ and $m_K = c$.
Let $\bW = \{\bW_1, \cdots, \bW_K\}$ be the collection of all the weights and $\bW^* = \{\bW_k^*\}_{k=1}^K$ is an empirical minimizer.  
Now, consider  {i.i.d.} dropout matrices $\bD_k$ for each weight matrix $\bW_k$ respectively, the output after dropout is then 
\begin{equation}\label{eq:dropout-output-nn}
    h_{\bW_D^*}^K(\bx_i) = (\bD_K\bW_K^*)^\top \sigma\left((\bD_{K-1}\bW_{K-1}^*)^\top \cdots \sigma\left((\bD_1\bW_{1}^*)^\top \bx_i\right) \cdots\right),
\end{equation}
where $\bW_D^* = \{\bD_k\bW_k^*\}_{k=1}^K$ is the collection of all dropout weights.
Next, we prove that by carefully controlling variance of Gaussian dropout, the deviation of losses before and after dropout can also be controlled. 

\begin{prop}\label{prop:nn-rashomon}
Consider a $K$-hidden-layer neural network $h_{\bW^*}^K(\bx_i)$ defined in (\ref{eq:output-nn}) with $\|\bW_k\|_F^2 \leq M$ for all $k\in[K]$, and Gaussian dropout, i.e., all diagonal entries of $\bD_k \overset{\text{i.i.d.}}{\sim} \gaussian(1,\alpha)$.
For both MSE loss $L(\cdot) = L_\textsf{MSE}(\cdot)$ and Brier score loss $L(\cdot) = L_\textsf{BS}(\cdot)$, with probability at least $1-\rho$, 
\begin{equation}
\begin{aligned}\label{eq:ffnn-loss}
    L(\bW_D^*) - L(\bW^*) &\leq \rho^{-1} M^K \overbar{\|\bX\|_F^2} \left( \left( m\alpha + 1 \right)^K - 1 \right),
\end{aligned}
\end{equation}
where $m = \max\limits_{k\in[K]} m_k$ and $\overbar{\|\bX\|_F^2} = \frac{1}{n}\sum_{i=1}^n\|\bx_i\|_2^2$.
\end{prop}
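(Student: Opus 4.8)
The plan is to bound the expected loss deviation $\E[L(\bW_D^*) - L(\bW^*)]$ first, and then convert it into a high-probability statement via Markov's inequality (this is why the bound carries the factor $\rho^{-1}$ and holds with probability at least $1-\rho$). Since $L(\bW^*)$ is deterministic and the Gaussian dropout noise is multiplicative with mean $1$, the key quantity to control is $\E\big[ L(\bW_D^*) \big] - L(\bW^*)$, which for both MSE and Brier loss reduces to controlling $\frac{1}{n}\sum_i \E\big[ \| h_{\bW_D^*}^K(\bx_i) - h_{\bW^*}^K(\bx_i) \|_2^2 \big]$ after using that each coordinate loss is $1$-Lipschitz (Brier) or directly a squared error (MSE), together with convexity/triangle inequalities to separate the noise contribution from the fixed-label contribution. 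So the heart of the argument is a per-sample bound on the expected squared perturbation of the network output.

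Next I would establish, by induction on the layer index $k$, a bound on $\E\big[ \| \bh_k^D - \bh_k \|_2^2 \big]$, where $\bh_k$ is the clean pre/post-activation at layer $k$ and $\bh_k^D$ is its dropout counterpart. The inductive step uses three ingredients: (i) $1$-Lipschitzness of the activation $\sigma$, so applying $\sigma$ does not increase the squared distance; (ii) for a weight matrix $\bW_k$ with $\|\bW_k\|_F^2 \le M$, the operation $\bv \mapsto (\bD_k \bW_k)^\top \bv$ splits as $\bW_k^\top \bv + (\bD_k - \bI)\bW_k^\top \bv$, where the first term propagates the incoming error (bounded by $M$ times the incoming squared norm) and the second is the fresh noise injected at layer $k$; (iii) the diagonal entries of $\bD_k - \bI$ are i.i.d. mean-zero with variance $\alpha$ and independent of everything upstream, so the cross term vanishes in expectation and the fresh-noise term contributes at most $\alpha \, m_k \cdot (\text{expected squared norm of } \bW_k^\top \bh_{k-1}^D) \le \alpha m M \cdot \E\|\bh_{k-1}^D\|_2^2$. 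Running the recursion $a_k \le (mM)(1+\alpha)\, a_{k-1} + (\text{lower order})$—or more cleanly tracking the bound $\E\|\bh_k^D\|_2^2 \le (mM(1+\alpha))^k \|\bx_i\|_2^2$ for the norms and feeding it into the error recursion—telescopes to a geometric-type expression. Collecting the per-layer noise injections gives a sum that resums exactly to $M^K \|\bx_i\|_2^2\big( (m\alpha+1)^K - 1 \big)$, matching the claimed form; averaging over $i$ replaces $\|\bx_i\|_2^2$ by $\overbar{\|\bX\|_F^2}$.

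Finally I would assemble the pieces: the expected-loss-deviation bound $\E[L(\bW_D^*) - L(\bW^*)] \le M^K \overbar{\|\bX\|_F^2}\big( (m\alpha+1)^K - 1 \big)$ combined with Markov's inequality (the loss deviation is nonnegative up to handling the $L(\bW^*)$ baseline carefully, or one bounds $L(\bW_D^*)-L(\bW^*)$ directly and notes it is dominated by the output perturbation term which is manifestly nonnegative) yields the stated $1-\rho$ guarantee. The main obstacle I anticipate is keeping the induction on $\E\|\bh_k^D - \bh_k\|_2^2$ clean: one must be careful that the ``incoming error'' and ``freshly injected noise'' terms are handled with the right independence structure so cross terms truly vanish, and that the bookkeeping of the $m_k$ factors (bounded by $m = \max_k m_k$) and the $\|\bW_k\|_F$ factors produces precisely $(m\alpha+1)^K - 1$ rather than a looser $(1+\alpha)^K$-type constant with extra slack. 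A secondary subtlety is the Brier/MSE unification — checking that $1$-Lipschitzness of the softmax/coordinate maps lets both losses be reduced to the same squared-output-perturbation quantity without introducing dimension-dependent constants beyond what is stated.
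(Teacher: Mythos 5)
Your proposal is correct and follows essentially the same route as the paper's proof: reduce the loss deviation to the mean squared output perturbation (using $1$-Lipschitzness of softmax for the Brier case), bound that perturbation by a layer-wise induction that splits each layer into a propagated-error term and a freshly injected dropout-noise term controlled through $\|\bW_k\|_F^2\le M$, and convert the expectation bound into the $1-\rho$ statement via Markov's inequality, which is exactly where the $\rho^{-1}$ factor originates. The only difference is bookkeeping: you push the expectation inside the induction (so cross terms vanish by mean-zero independence of $\bD_k-\bI_d$), whereas the paper establishes a pathwise bound in terms of $\prod_{k}\left(\|\bD_k-\bI_d\|_F^2+1\right)-1$ (Lemma~\ref{lemma:output-recursion}) and only takes expectations at the final Markov step, factoring over layers by independence; both resummations yield the same $M^K\overbar{\|\bX\|_F^2}\left(\left(m\alpha+1\right)^K-1\right)$ constant.
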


Proposition~\ref{prop:nn-rashomon} indicates that both Bernoulli and Gaussian dropouts on neural networks lead to models in the Rashomon set $\calR\left(\bW^*, \rho^{-1} M^K \overbar{\|\bX\|_F^2} \left( \left( m\alpha + 1 \right)^K - 1 \right)\right)$ with probability $1-\rho$. 

\begin{figure}[!tb]
\begin{center}
\includegraphics[width=.8\textwidth]{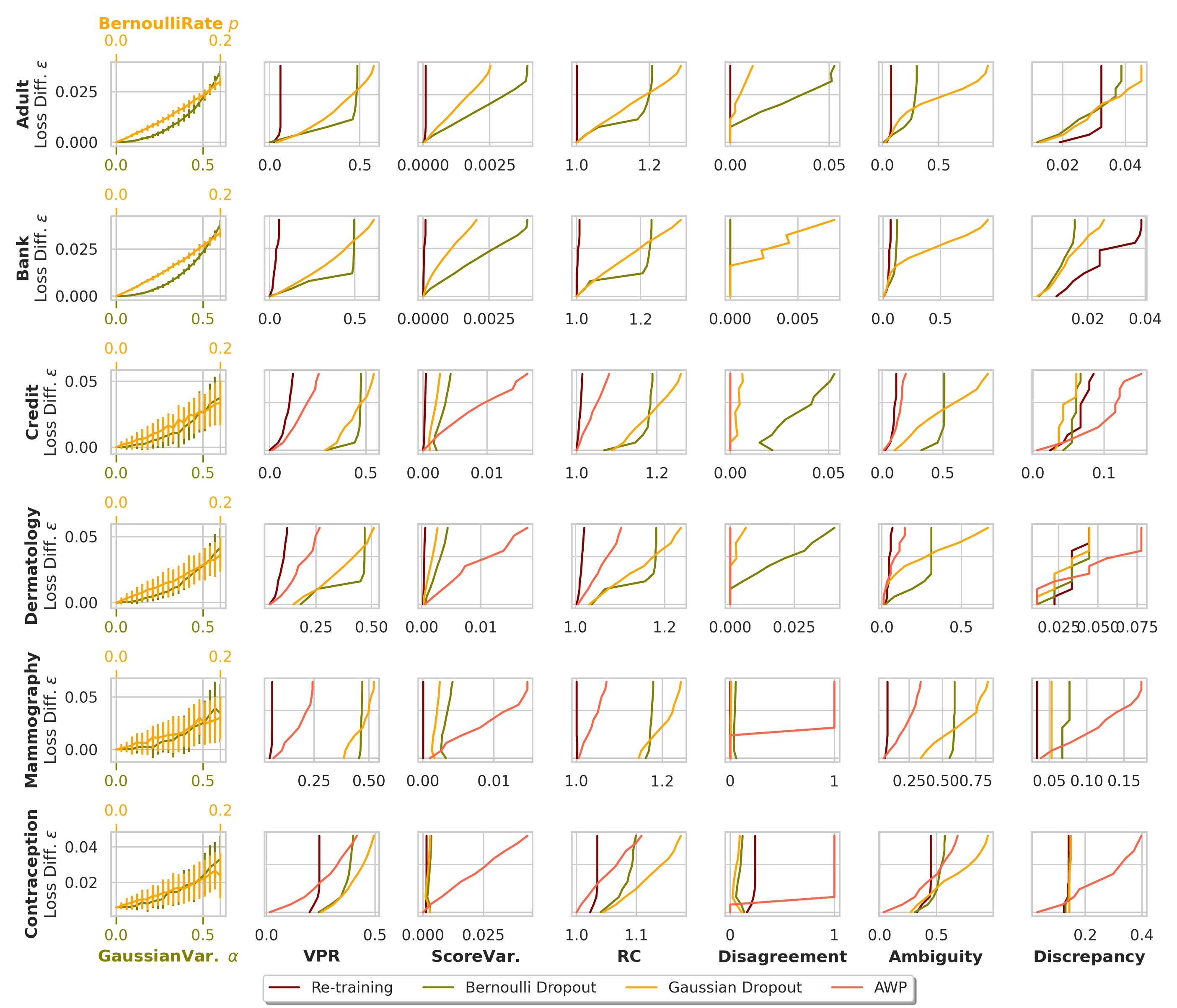}
\end{center}
\caption{\small
Loss vs. dropout parameters and the corresponding predictive multiplicity metrics of the baselines with UCI datasets. The figures in a row share the same y-axis for the loss difference $\epsilon$, i.e., the Rashomon parameter in (\ref{def:emp-rashomon-set}). Both Bernoulli and Gaussian dropouts give higher multiplicity estimates than re-training under the same loss deviation constraints. In other words, dropout is much more effective than re-training. Despite AWP outperforming all the other methods, it is the most computationally expensive.}\label{fig:all-loss-all-legend}
\end{figure}
\section{Measuring predictive multiplicity with dropout models}\label{sec:emp-measure}
In previous sections, we have shown that the probability of dropout models in a Rashomon set could be controlled by the hyper-parameters of neural network architectures and dropout.
These dropout models can then be used to construct the empirical Rashomon set defined in (\ref{def:emp-rashomon-set}), and to estimate predictive multiplicity metrics. 
Note that not all estimators of predictive multiplicity metrics carry a theoretical analysis of its statistical properties such as consistency and sample complexity  {(cf.~Appendix~\ref{app:metrics} for details)}.
In Appendix~\ref{app:additional-theory}, we provide additional theoretical analysis regarding a sample complexity bound of estimating (a surrogate metric of) score variance for a single and multiple samples, using Hoeffding's inequality \citep{hoeffding1994probability} and Gaussian annulus theorem \citep[Theorem~2.9]{blum2020foundations}.

Next, we show empirical results of estimating predictive multiplicity metrics with the empirical Rashomon set obtained from the dropout techniques. 
We estimate the predictive multiplicity metrics with 6 UCI datasets \citep{asuncion2007uci}, including three from the financial domain (Adult Income, Bank Marketing, and Credit Approval), and the rest from the medical domain (Dermatology, Mammography, and Contraception).
These domains are selected since predictive multiplicity therein could cause critical consequences of injustice. 
See Appendix~\ref{app:dataset-description} for detailed descriptions and pre-processing of the datasets, and Appendix~\ref{app:dataset-training} for detailed training setups.
The baseline methods we adopt to compare with dropout are re-training with different seeds \citep{semenova2019study}, and the adversarial weight perturbation algorithm \citep{hsu2022rashomon}. 
 {All the models here have the same architecture---a neural network with a single hidden layer of 1k neurons.}

Figure~\ref{fig:all-loss-all-legend} summarizes the estimation of the 6 predictive multiplicity metrics, introduced in Section~\ref{sec:background-related-work}, using the empirical Rashomon set obtained with dropout models. 
For Bernoulli dropout, we pick the dropout rates $p \in [0.0, 0.2]$, and for Gaussian dropout, the variance $\alpha$ is set to be in $[0.0, 0.6]$ {; we obtain 100 models per dropout parameters.}
The leftmost column shows how the loss deviations (Loss diff. $\epsilon$ in the figure) are controlled by varying the dropout parameters, as suggested by the analysis in Section~\ref{sec:dropout-and-rashomon}. 
Note that since the Adult Income and Bank Marketing datasets have much more samples than the rest of the datasets, their variances of loss difference are much smaller compared to the other datasets.
Moreover, for all datasets, the accuracy deviations caused by the loss differences are all within 1\% (see Appendix~\ref{app:add-exp-tabular}).

For predictive multiplicity metrics that are defined per sample (e.g., VPR and RC), for the sake of demonstration, we plot the values of 50\% or 90\% quantile, depending on which quantile value best shows the difference between dropout and the baseline methods. 
We report the values of those metrics for all samples in Appendix~\ref{app:add-exp-tabular}.
As observed in Figure~\ref{fig:all-loss-all-legend}, both Bernoulli and Gaussian dropouts mostly outperform the re-training strategy in terms of exploring models in the Rashomon set for estimating predictive multiplicity.
\begin{wraptable}{r}{0.5\textwidth}
\small
\caption{\small  {Average runtime} speedup  {per model} on UCI datasets, evaluated with the same computational platform. For the raw values of the runtime, see Table~\ref{tab:uci-data-runtime}.}
\footnotesize
\label{tab:all-data-runtime-gaussian}
\begin{center} 
\begin{tabular}{lrr}
\toprule
\multicolumn{1}{c}{\multirow{2}{*}{\bf Dataset}} & \multicolumn{2}{c}{\makecell{\bf Gaussian Dropout\\ \bf Speedup over}} \\
\cmidrule(lr){2-3}
                & \multicolumn{1}{c}{Re-training} & \multicolumn{1}{c}{AWP}\\ 
\midrule
Adult Income    & $305.53\times$ & ---\\
Bank  Deposit   & $35.36\times$  & ---\\
Credit Approval & $267.75\times$ & $5345.45\times$\\
Dermatology     & $28.88\times$  & $1121.59\times$\\
Mammography     & $130.20\times$ & $4238.18\times$\\
Contraception   & $78.40\times$  & $2952.73\times$\\
\bottomrule
\end{tabular}
\end{center}
\end{wraptable}
For example, in both Bank Marketing and Mammography datasets, the VPR and Ambiguity given by re-training are close to zero, whereas dropouts provide diverse models with high multiplicity that are within the same loss deviation regime (i.e., in the same Rashomon set). 
On the other hand, AWP outperforms both dropouts and re-training, since it \emph{adversarially} searches the models that mostly flip the decisions toward all possible classes for each sample.
Despite that AWP best explores the Rashomon set, it comes at the cost of high time complexity, as shown in Table~\ref{tab:all-data-runtime-gaussian}, where we record the  {average} runtime  {per model} for re-training, Bernoulli/Gaussian dropouts, and AWP. 
 {Note that for all methods, a pre-trained an empirical minimizer $\bw^*$ is given and its training time is not included in Table~\ref{tab:all-data-runtime-gaussian}.}
The good performance of AWP comes at the cost of it being more than 1000 times slower than dropout methods. 
Also, the dropout methods are tens to hundreds times faster than re-training. 
 {We refer the reader to Table~\ref{tab:uci-data-rashomon-ratio} for an additional comparison of the efficiency to obtain models from the Rashomon set between the re-training and dropout strategies.}

\setlength\intextsep{0pt}
\begin{wrapfigure}{r}{0.5\textwidth}
  \begin{center}
    \includegraphics[width=0.5\textwidth]{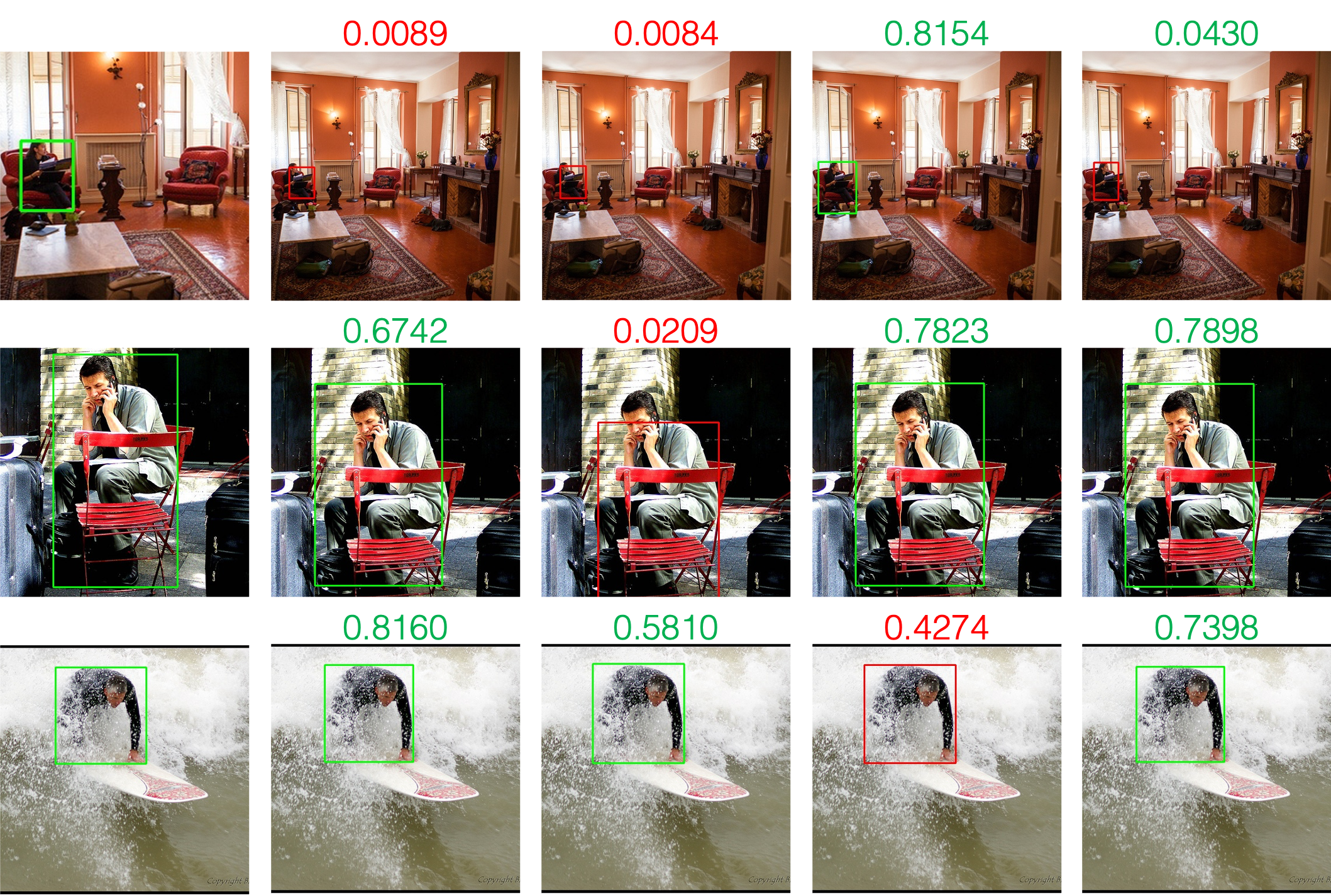}
  \end{center}
  \caption{\small 
  Human detection on MS COCO dataset. The leftest column shows the ground truth of the bounding boxes, and the rest of columns are the bounding boxes found by 4 models in the dropout-based Rashomon set. The green values denote the confidence of the bounding boxes larger than 0.5, and red otherwise. The detectors of the bounding boxes suffer from predictive multiplicity in terms of the coverage and confidence.}\label{fig:yolov3-detection}
\end{wrapfigure}

We study a more complicated and practical use case by using the Microsoft COCO human detection dataset \citep{lin2014microsoft}.
The goal of human detection is to determine, in image or video sequence that contain multiple objects, the smallest rectangular \emph{bounding boxes} that enclose humans.
Predictive multiplicity therein could lower the trustworthiness of applications involving human detection such as autonomous driving or AR/VR systems \citep{nguyen2016human}.
Figure~\ref{fig:yolov3-detection} shows the predictive multiplicity of determining the human bounding boxes with the YoloV3 detector~\citep{redmon2018yolov3}. 
We use the Gaussian dropout to explore models in the Rashomon set, whose average precisions (APs) are within 0.5\% to the pre-trained model.
Despite that the APs of the models are similar, the bounding boxes exhibit high predictive multiplicity in terms of the coverage and confidence of the regions.
For the corresponding predictive multiplicity metrics with the YoloV3 and Mask R-CNN \citep{MaskRCNN_He_2017_ICCV} detectors, see Appendix~\ref{app:add-exp-coco}.

For more extensive experiments with CIFAR-10/-100 datasets \citep{krizhevsky2009learning} using VGG16 \citep{simonyan2014very} and ResNet50 \citep{he2016deep}, along with their runtime comparisons, see Appendix~\ref{app:add-exp-cifar}.
For ablation studies on the depth, width, and architectures of the neural network models,  {and model calibration \citep{platt1999probabilistic}}, see Appendix~\ref{app:add-exp-ablation}.

Note that either baselines such as re-training and AWP, or the proposed dropout method, can only explore \emph{partially} of the entire (true) Rashomon set when the hypothesis space is large, and therefore all estimates obtained from the three strategies are under-estimates of the multiplicity metrics. 
In Figure~\ref{fig:adult-rebuttal-lr-exploration} in Appendix~\ref{app:add-exp-tabular}, we demonstrate that when the hypothesis space is small (e.g., logistic regression), the re-training and dropout strategies have similar performance of exploring the Rashomon set. 
Despite that the proposed dropout method may only search \emph{local} models in the Rashomon set, it provides fast lower bounds for multiplicity metric estimates.

\section{Applications using efficient exploration of the Rashomon set}
The efficient dropout technique makes applications regarding the exploration of the Rashomon set computationally feasible. 
Here, we show two applications of the Rashomon set: (i) mitigating predictive multiplicity using the ensemble method with dropout, and (ii) selecting models that have the smallest multiplicity subject to potential loss deviations. 
We use the UCI Adult Income dataset \citep{asuncion2007uci} following the same training settings in Section~\ref{sec:emp-measure} for both applications.


\paragraph{Dropout ensembles to mitigate predictive multiplicity.}
\citet{long2023arbitrariness} have shown that the outputs of ensemble models have less variance and hence suffer less from predictive multiplicity. 
The main challenge of getting an ensemble model is to obtain multiple models in the Rashomon set. 
Using dropout, we can efficiently obtain a vast amount of models to compose an ensemble model by averaging their outputs.
As shown in Figure~\ref{fig:adult-emsemble}, as the number of models in an ensemble (Ensemble Size in the figure) increases, both the score and decision-based predictive multiplicity metrics shrink.
Therefore, whenever given a pre-trained model, we can efficiently construct an ensemble with dropout models in order to reduce predictive multiplicity.

\begin{figure}[!tb] 
\begin{subfigure}{0.48\textwidth}
\includegraphics[width=\linewidth]{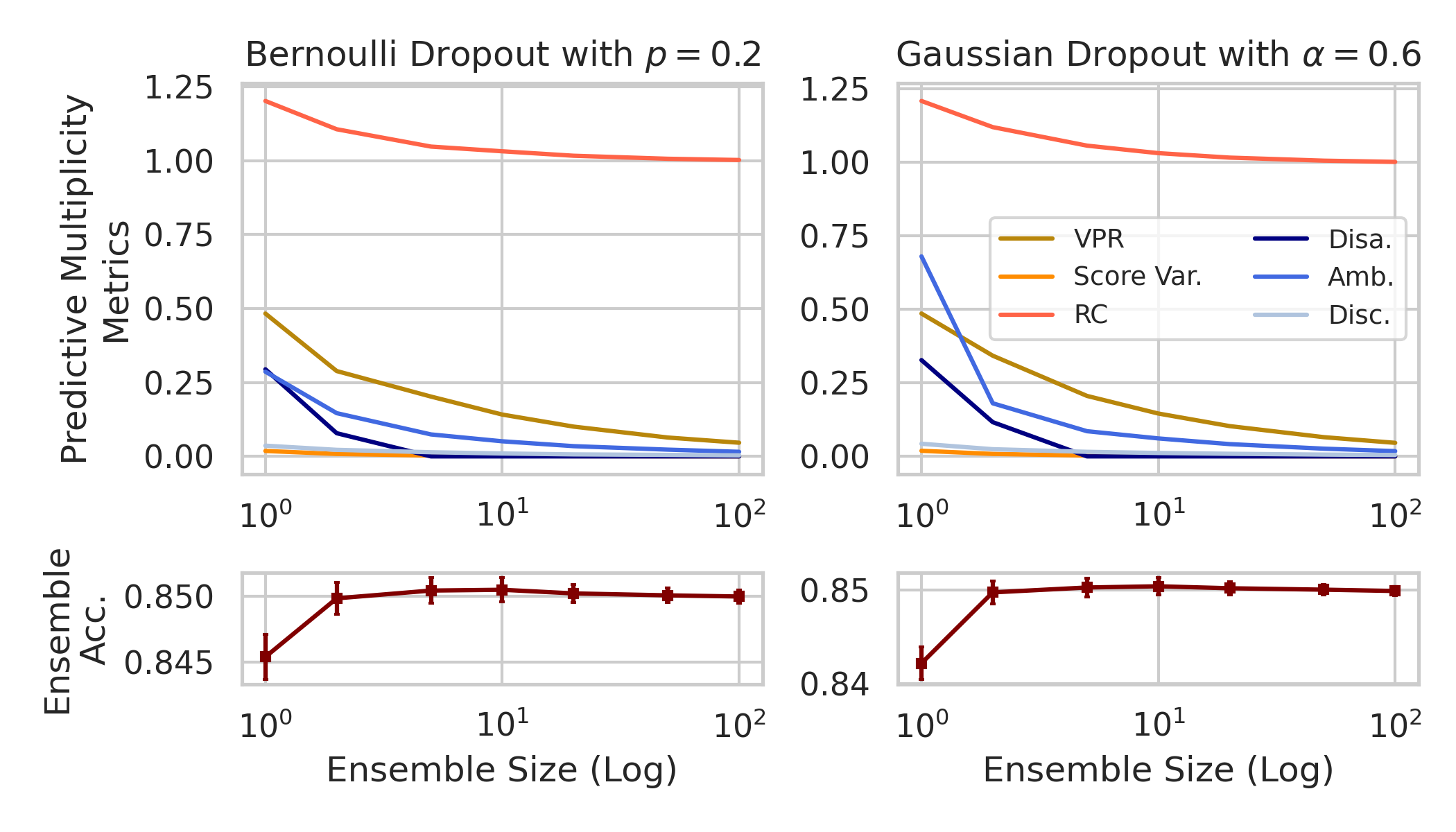}
\caption{Predictive multiplicity metrics vs. different number of models in an ensemble. A larger ensemble can effectively reduce multiplicity. Each value is averaged over 100 ensemble models.} \label{fig:adult-emsemble}
\end{subfigure}
\begin{subfigure}{0.48\textwidth}
\includegraphics[width=\linewidth]{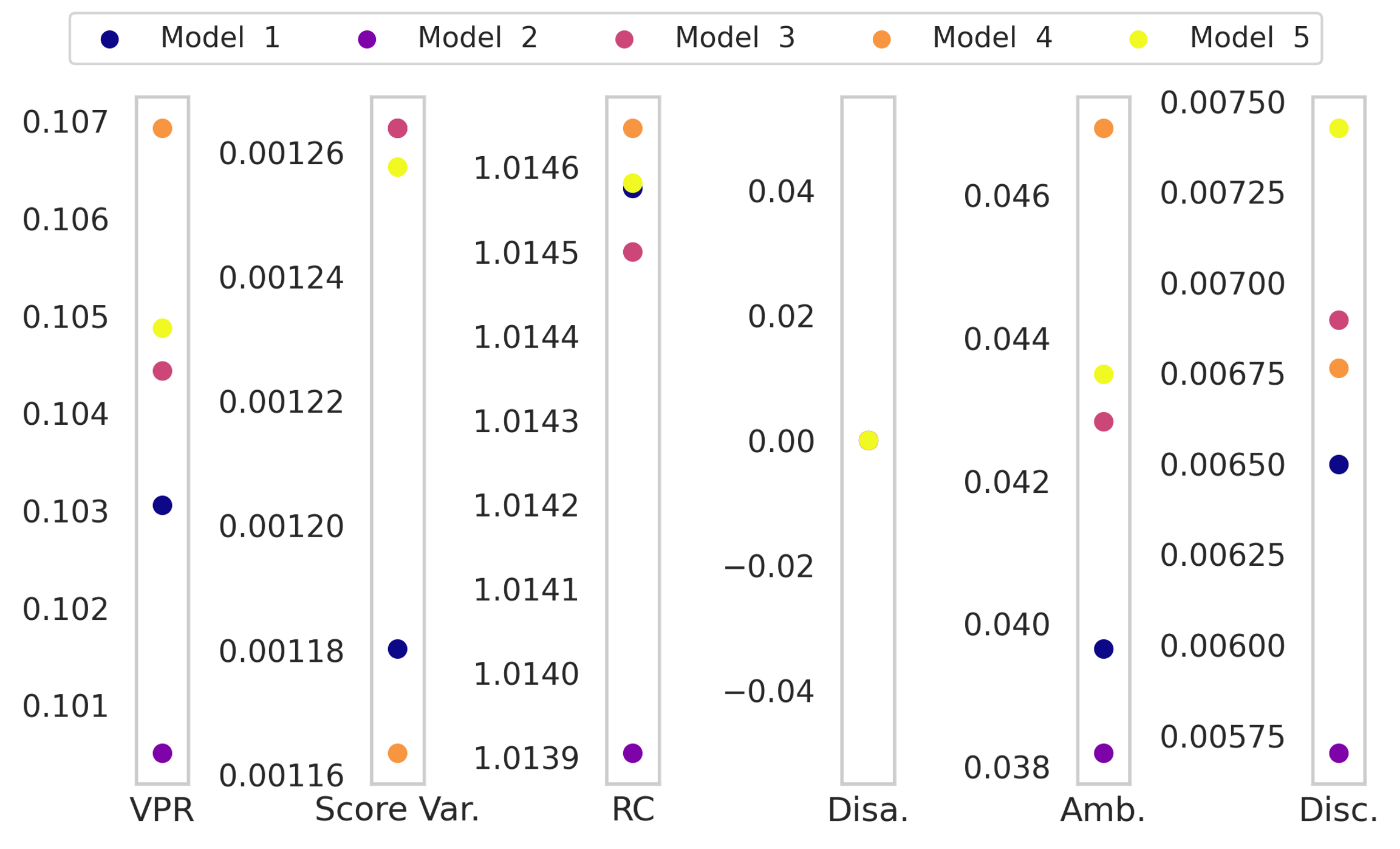}
\caption{Predictive multiplicity metrics of 5 different pre-trained models using $\bern(1-0.01)$ dropout. Model 2 is preferred since it has smallest estimate of multiplicity metrics over the rest of the models.} \label{fig:adult-selection}
\end{subfigure}
\caption{Applications of the Rashomon set using dropout with the Adult Income dataset.} \label{fig:adlt-ensemble-selection}
\end{figure}

\paragraph{Selecting models with smaller predictive multiplicity.}
We can also use our efficient estimation framework to perform model selection if presented with multiple pre-trained models with similar performance.
By performing dropout on each pre-trained model to construct the empirical Rashomon set, we can pick the model that yields the smallest estimate of predictive multiplicity metrics. 
For example, Figure~\ref{fig:adult-selection} shows the spread of predictive multiplicity metrics for 5 models {, trained with different weight initializations and same architectures.}
Model 2 has the smallest VPR, RC, Ambiguity, and Discrepancy compared to the rest of the models. 
Conversely, Model 4 has the highest estimate of predictive multiplicity metrics.
Therefore, Model 2 would be a more favorable choice for deployment over Model 4.
\section{Discussion}\label{sec:discussion}
Here we reflect on the limitations and highlight interesting avenues for future work.
\paragraph{Limitations.}
First, in order to apply dropout on the model weights, our proposed framework requires full access to the weights of the pre-trained model as it multiplies them with the dropout random variables, i.e., the pre-trained model has to be a white-box model.
However, white-box pre-trained models are often unavailable due to practical reasons such as data security and IP protection.
Second, as discussed in~\ref{app:add-exp-cifar}, when the hypothesis space is extremely large, dropout method could possibly only explore models that are ``close'' to  {(and dependent on)} the pre-trained model and therefore under-perform the re-training strategy.
A potential solution to this issue is to combine the re-training strategy and the dropout methods---we first re-train a few models to ensure a sufficient exploration of diverse local minima, and then apply dropout on these models to further explore the Rashomon set {---see Figure~\ref{fig:cifar10-rebuttal-retrain-and-dropout-bernoulli} and~\ref{fig:cifar10-rebuttal-retrain-and-dropout-gaussian} in Appendix~\ref{app:add-exp-cifar} for a demonstration of such strategy.}

\paragraph{Future directions.}
First, our analysis of the connection between dropout and the Rashomon set is primarily on FFNNs and CNNs. 
The analysis could be generalized to other neural network architectures with feedback loops (i.e., recurrent neural networks \citep{yu2019review}) in order to investigate predictive multiplicity for other applications such as time series or natural languages.
Second, our analysis assumes equal dropout probabilities for all weights in a neural network.
 {However, dropout on the first few layers (or some unfrozen layers) may better explore the Rashomon set, as some layers carry more semantics information \citep{yosinski2014transferable}.}
Third, disagreements among models in the empirical Rashomon set can be used to falsify or improve at least one (with the lowest loss) of the models by reconciling the conflicting decisions \citep{roth2023reconciling}.
Given models with different dropout parameters, a possible procedure to reconcile the conflicting decisions could be assigning different weights to those decisions, depending on the dropout parameters and neural network properties.

\paragraph{Ethics statement.}
The phenomenon of the Rashomon effect and predictive multiplicity can be maliciously exploited by adversaries to make harmful decisions towards targeted users. 
For example, an adversary can measure predictive multiplicity across different training procedures, e.g., different neural network architectures or order of batches of the training dataset, and select a combination of settings that produces the most ``unfair'' decisions against a certain group of population.

\paragraph{Reproducibility statement.}
Our codes are based on PyTorch \citep{paszke2017automatic}, where the Bernoulli and Gaussian dropouts could be implemented by intermediate activations with forward hooks. 
Note that our implementation is different from \citet{gal2016dropout} in \url{https://github.com/yaringal/DropoutUncertaintyExps/tree/master}.
The estimation of the predictive multiplicity metrics follows directly from either the corresponding mathematical definitions in the papers or the GitHub repository therein.
See Appendix~\ref{app:metrics} for the mathematical formulations, corresponding definitions in each reference and the GitHub repositories. 

\paragraph{Disclaimer.}
This paper was prepared for informational purposes by the Global Technology Applied Research center of JPMorgan Chase \& Co. This paper is not a product of the Research Department of JPMorgan Chase \& Co. or its affiliates. Neither JPMorgan Chase \& Co. nor any of its affiliates makes any explicit or implied representation or warranty and none of them accept any liability in connection with this paper, including, without limitation, with respect to the completeness, accuracy, or reliability of the information contained herein and the potential legal, compliance, tax, or accounting effects thereof. This document is not intended as investment research or investment advice, or as a recommendation, offer, or solicitation for the purchase or sale of any security, financial instrument, financial product or service, or to be used in any way for evaluating the merits of participating in any transaction.

\newpage
\bibliography{iclr2024_conference}
\bibliographystyle{iclr2024_conference}

\newpage
\appendix
\setcounter{equation}{0}

The appendix is divided into the following parts. 
Appendix~\ref{app:proof}: Omitted proofs and theoretical results; Appendix~\ref{app:metrics}: Discussion on predictive multiplicity metrics; Appendix~\ref{app:prediction-uncertainty}: Additional discussion on prediction uncertainty; Appendix~\ref{app:exp-setup}: Details on the experimental setup; and Appendix~\ref{app:additional-exp}: Additional empirical results.

\section{Omitted proofs and theoretical results}\label{app:proof}
\def\theequation{A.\arabic{equation}}
\def\thelem{A.\arabic{lem}}
\subsection{Useful Lemmas}\label{proof:lemmas}
We first introduce (and prove) the following useful lemmas to facilitate the proofs of the propositions. 
The first lemma is about the expectation of the dropout matrix $\bD_\bz$. 
\begin{lem}\label{lem:sum-bound}
    Let $\bD_\bz = \diag(Z_1, Z_2, \cdots, Z_d) \in \R^{d\times d}$, where $Z_i$ are i.i.d. random variables, and given a vector $\ba \in \R^d$, then
    \begin{equation}\label{eq:sum-bound-0}
    \begin{aligned}
        \E\left[\ba^\top(\bD_\bz-\bI_d)^\top  (\bD_\bz-\bI_d)\ba\right] =
        \begin{cases}
            \|\ba\|_2^2p,\; \text{if $Z_i \overset{i.i.d.}{\sim} \bern(1-p)$};\\
            \|\ba\|_2^2\alpha, \; \text{if $Z_i \overset{i.i.d.}{\sim} \gaussian(1, \alpha)$}.
        \end{cases}
    \end{aligned}
    \end{equation}
    Moreover, if $\ba$ is the all-one $d$-dimensional vector, (\ref{eq:sum-bound-0}) can be alternatively expressed as 
    \begin{equation}
    \begin{aligned}
        \E\left[\|\bD_\bz-\bI_d\|_F^2 \right] =
        \begin{cases}
            dp,\; \text{if $Z_i \overset{i.i.d.}{\sim} \bern(1-p)$};\\
            d\alpha, \; \text{if $Z_i \overset{i.i.d.}{\sim} \gaussian(1, \alpha)$}.
        \end{cases}
    \end{aligned}
    \end{equation}
\end{lem}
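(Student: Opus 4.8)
The plan is to exploit the diagonal structure of $\bD_\bz-\bI_d$ to collapse the quadratic form into a sum of independent scalar terms, after which everything reduces to computing a single scalar second moment in each of the two cases.

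First I would observe that $\bD_\bz-\bI_d=\diag(Z_1-1,\ldots,Z_d-1)$, so that $(\bD_\bz-\bI_d)^\top(\bD_\bz-\bI_d)=\diag\big((Z_1-1)^2,\ldots,(Z_d-1)^2\big)$, and hence for any fixed $\ba=(a_1,\ldots,a_d)^\top\in\R^d$,
\begin{equation}
\ba^\top(\bD_\bz-\bI_d)^\top(\bD_\bz-\bI_d)\ba=\sum_{i=1}^d a_i^2(Z_i-1)^2.
\end{equation}
Taking expectations, using linearity and the fact that the $Z_i$ are identically distributed, gives $\E\big[\ba^\top(\bD_\bz-\bI_d)^\top(\bD_\bz-\bI_d)\ba\big]=\|\ba\|_2^2\,\E\big[(Z_1-1)^2\big]$.

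It then remains to evaluate $\E[(Z_1-1)^2]$ in each case. For $Z_1\sim\bern(1-p)$, the variable $(Z_1-1)^2$ equals $1$ with probability $p$ and $0$ otherwise, so $\E[(Z_1-1)^2]=p$. For $Z_1\sim\gaussian(1,\alpha)$ we have $\E[(Z_1-1)^2]=\var(Z_1)=\alpha$ directly. Substituting yields the claimed expressions $\|\ba\|_2^2 p$ and $\|\ba\|_2^2\alpha$. For the special case in which $\ba$ is the all-one vector, note that $\|\ba\|_2^2=d$ and, since $\bD_\bz-\bI_d$ is diagonal, $\ba^\top(\bD_\bz-\bI_d)^\top(\bD_\bz-\bI_d)\ba=\sum_{i=1}^d(Z_i-1)^2=\|\bD_\bz-\bI_d\|_F^2$; applying the formula just derived gives $\E[\|\bD_\bz-\bI_d\|_F^2]=dp$ or $d\alpha$.

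There is no genuine obstacle here; the statement is essentially bookkeeping. The only points requiring care are (i) keeping track of the mean-one normalization of the Gaussian multiplicative noise, so that the relevant scalar moment is the variance $\alpha$ rather than $\E[Z_1^2]=1+\alpha$, and (ii) noting that the diagonality of $\bD_\bz$ is precisely what makes all cross terms vanish, so that only the marginal distribution of a single $Z_i$ (not joint independence) is actually used in the computation.
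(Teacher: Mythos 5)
Your proposal is correct and follows essentially the same route as the paper's proof: expand the quadratic form using the diagonal structure to get $\sum_i a_i^2(Z_i-1)^2$, reduce to the scalar second moment $\E[(Z_1-1)^2]$, and evaluate it as $p$ in the Bernoulli case and $\alpha$ in the Gaussian case, with the Frobenius-norm statement following by taking $\ba$ to be the all-one vector. Your closing remark that only the marginal distribution (not joint independence) is needed is a accurate observation the paper does not make explicit, but the argument itself is the same.
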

\begin{proof}
    By direct computation of the expectation, we have
    \begin{equation}\label{eq:sum-bound-1}
    \begin{aligned}
        \E\left[\ba^\top(\bD_\bz-\bI_d)^\top  (\bD_\bz-\bI_d)\ba\right] &= \E\left[\sum_{i=1}^d (\ba_i)^2 (Z_i-1)^2\right]= \E\left[(Z_i-1)^2\right] \sum_{i=1}^d (\ba_i)^2\\
        &= \E\left[(Z_i-1)^2\right] \|\ba\|_2^2.
    \end{aligned}
    \end{equation}

    Now, we discuss different distributions for the random variable $Z_i$.
    \paragraph{Bernoulli random variable.}
    If $Z_i\sim$ $\bern(1-p)$, then $Z_i-1\sim$ $2\bern(p)-1$ and $(Z_i-1)^2\sim$ $\bern(p)$. 
    Therefore, 
    \begin{equation}\label{eq:sum-bound-2}
        \E\left[(Z_i-1)^2\right] = \E\left[Z'_i\right] = p,\; \text{where $Z'_i \sim \bern(p)$}.
    \end{equation}

    \paragraph{Gaussian random variable.}
    If $Z_i\sim$ $\gaussian(1, \alpha)$, then $Z_i-1\sim \sqrt{\alpha} S_i$, where $S_i \sim \gaussian(0, 1)$ follows the standard Gaussian distribution.
    Therefore, 
    \begin{equation}\label{eq:sum-bound-3}
        \E\left[(Z_i-1)^2\right] = \E\left[\left(\sqrt{\alpha} S_i\right)^2\right] = \alpha\E\left[ S_i^2\right] = \alpha\left(\var(S_i) + \E[S_i]^2\right) = \alpha.
    \end{equation}

    The desired result follows from combining (\ref{eq:sum-bound-1}), (\ref{eq:sum-bound-2}) and (\ref{eq:sum-bound-3}).

    Finally, if $\ba$ is the all-one $d$-dimensional vector, 
    \begin{equation}
    \begin{aligned}
        \E\left[\ba^\top(\bD_\bz-\bI_d)^\top  (\bD_\bz-\bI_d)\ba\right] &= \E\left[\sum_{i=1}^d (Z_i-1)^2\right] = \E\left[\sum_{i=1}^d\sum_{j=1}^d [\bD_\bz-\bI_d]_{i,j}^2\right]\\
        &= \E\left[\|\bD_\bz-\bI_d\|_F^2 \right],
    \end{aligned}
    \end{equation}
    and the rest follows by directly plugging in $\|\ba\|_2^2 = d$.
\end{proof}

The second lemma is on the upper bound of the neural network outputs after dropout.
\begin{lem}\label{lemma:output-recursion}
    Given the dropout matrices $\bD_1, \cdots, \bD_K$, weight matrices $\bW_1, \cdots, \bW_K$, input $\bx$ and the activation function with Lipschitz constant $1$, we have
    \begin{equation}\label{eq:output-recursion}
        \|\sigma\left( \bD_K\bW_K \sigma \left( \cdots \sigma \left( \bD_1\bW_1 \bx  \right) \cdots \right) \right)\|_2^2 \leq \left( \prod_{k=1}^K \|\bW_k\|_F^2 \right)\left( \prod_{k=1}^K \left(\|\bD_k-\bI_d\|_F^2 + 1 \right)\right) \|\bx\|_2^2.
    \end{equation}
\end{lem}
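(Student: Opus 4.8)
The plan is to prove this estimate by induction on the depth $K$, peeling off the outermost layer at each step. Write $\bv_0 = \bx$ and, for $k \ge 1$, $\bv_k = \sigma(\bD_k \bW_k \bv_{k-1})$, so the quantity to be bounded is $\|\bv_K\|_2^2$. The three ingredients are: (i) the activation $\sigma$ is $1$-Lipschitz and vanishes at the origin (true for ReLU, tanh, max-pooling), so $\|\sigma(\bu)\|_2 \le \|\bu\|_2$ for all $\bu$; (ii) the Frobenius norm is sub-multiplicative against the Euclidean norm, $\|\bW_k \bu\|_2 \le \|\bW_k\|_F \|\bu\|_2$; and (iii) a one-step estimate for the diagonal dropout factor, $\|\bD_k \bu\|_2^2 \le (\|\bD_k - \bI_d\|_F^2 + 1)\|\bu\|_2^2$. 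Granting (iii), one layer contributes exactly the factor $\|\bW_k\|_F^2 (\|\bD_k - \bI_d\|_F^2 + 1)$, since $\|\bv_k\|_2^2 \le \|\bD_k \bW_k \bv_{k-1}\|_2^2 \le (\|\bD_k - \bI_d\|_F^2 + 1)\|\bW_k \bv_{k-1}\|_2^2 \le (\|\bD_k - \bI_d\|_F^2 + 1)\|\bW_k\|_F^2 \|\bv_{k-1}\|_2^2$.

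With this per-layer recursion, the induction is straightforward. The base case $K = 1$ is exactly the chain just displayed, with $\bv_0 = \bx$. For the inductive step, apply the induction hypothesis to the depth-$(K-1)$ sub-network that produces $\bv_{K-1}$ from $\bx$, obtaining $\|\bv_{K-1}\|_2^2 \le (\prod_{k=1}^{K-1}\|\bW_k\|_F^2)(\prod_{k=1}^{K-1}(\|\bD_k - \bI_d\|_F^2 + 1))\|\bx\|_2^2$; then one more application of the one-layer recursion multiplies this by $\|\bW_K\|_F^2 (\|\bD_K - \bI_d\|_F^2 + 1)$, which is precisely the claimed right-hand side. Note that no distributional assumption on the $\bD_k$ enters here — only their diagonality, and that only through (iii).

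The one place requiring care — and the main obstacle — is establishing (iii). The natural route is to write $\bD_k = \bI_d + (\bD_k - \bI_d)$, so $\bD_k \bu = \bu + (\bD_k - \bI_d)\bu$, and to control $(\bD_k - \bI_d)\bu$ using that the spectral norm is dominated by the Frobenius norm, $\|(\bD_k - \bI_d)\bu\|_2 \le \|\bD_k - \bI_d\|_F \|\bu\|_2$; for Bernoulli dropout (iii) is immediate because $\bD_k$ is then a coordinate projection and $\|\bD_k \bu\|_2 \le \|\bu\|_2$. Pinning the coefficient down to exactly $\|\bD_k - \bI_d\|_F^2 + 1$, rather than a looser expression such as $(\|\bD_k - \bI_d\|_F + 1)^2$, is the delicate point; once (iii) holds, the telescoping of the two products is routine. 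This lemma is then the deterministic backbone of Proposition~\ref{prop:nn-rashomon}: taking expectations over the i.i.d.\ Gaussian dropout matrices, invoking independence across layers together with Lemma~\ref{lem:sum-bound} to replace each $\E[\|\bD_k - \bI_d\|_F^2 + 1]$ by $m_k \alpha + 1 \le m\alpha + 1$, and then applying Markov's inequality turns the expectation estimate into the high-probability bound stated there.
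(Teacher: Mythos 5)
Your induction scaffolding is the same as the paper's, but the load-bearing step---your claim (iii), $\|\bD_k\bu\|_2^2 \leq \left(\|\bD_k-\bI_d\|_F^2+1\right)\|\bu\|_2^2$---is not merely ``delicate'': it is false for exactly the dropout matrices this lemma is meant to cover. Take $d=1$, $\bD_k=(2)$ (a perfectly admissible realization of $\gaussian(1,\alpha)$ dropout), $\bu=1$: the left side is $4$ while the right side is $\left((2-1)^2+1\right)\cdot 1=2$. More generally $z^2\leq(z-1)^2+1$ holds only for $z\leq 1$, so any diagonal entry exceeding $1$ on the coordinate where $\bu$ is concentrated breaks (iii); it survives only in the Bernoulli case, where $\bD_k$ is a coordinate projection, as you yourself observe. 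Since your entire per-layer recursion funnels through (iii), the proposal does not establish the lemma for Gaussian dropout, which is precisely the case consumed by Proposition~\ref{prop:nn-rashomon}.

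The paper's proof uses a different decomposition that never bounds $\bD_k$ acting on a vector at all: it writes $\sigma(\bD_k\bW_k\bu)=\bigl(\sigma(\bD_k\bW_k\bu)-\sigma(\bW_k\bu)\bigr)+\sigma(\bW_k\bu)$, uses $1$-Lipschitzness so the first term is controlled by $\|(\bD_k-\bI_d)\bW_k\bu\|_2\leq\|\bD_k-\bI_d\|_F\|\bW_k\|_F\|\bu\|_2$ and the second by $\|\bW_k\|_F\|\bu\|_2$ (this also tacitly uses $\sigma(0)=0$, your ingredient (i)); only $\bD_k-\bI_d$ ever appears, which is why the per-layer factor involves $\|\bD_k-\bI_d\|_F^2$ rather than anything about $\|\bD_k\|$. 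That add-and-subtract is the missing idea. One caveat you should be aware of: even along the paper's route, the exact constant is extracted by bounding $\|a+b\|_2^2\leq\|a\|_2^2+\|b\|_2^2$, which discards the cross term $2\langle a,b\rangle$ and is not a valid inequality in general; a fully rigorous version of either argument yields the per-layer factor $\left(\|\bD_k-\bI_d\|_F+1\right)^2$ (or $2\left(\|\bD_k-\bI_d\|_F^2+1\right)$), and indeed the same $d=1$ numbers above show the stated constant cannot hold verbatim even with $K=1$ and ReLU. So your instinct that pinning down exactly $\|\bD_k-\bI_d\|_F^2+1$ is the sticking point was well placed---but the remedy is the paper's decomposition (with an adjusted constant), not a sharper bound on $\|\bD_k\bu\|_2$, which does not exist. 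Your closing remarks on how the lemma feeds Proposition~\ref{prop:nn-rashomon} (independence across layers, Lemma~\ref{lem:sum-bound}, Markov) do match the paper.
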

\begin{proof}
    We prove this lemma by induction on the number of layers $k$. 
    First, when $k=1$, we have
    \begin{equation}
    \begin{aligned}
        \|\sigma\left( \bD_1\bW_1\bx  \right)\|_2^2 &= \| \sigma\left((\bD_1\bW_{1})^\top \bx \right) - \sigma\left(\bW_{1}^\top \bx\right) + \sigma\left(\bW_{1}^\top \bx\right) \|_2^2\\
        &\leq \| \sigma\left((\bD_1\bW_{1})^\top \bx \right) - \sigma\left(\bW_{1}^\top \bx\right)\|_2^2 + \|\sigma\left(\bW_{1}^\top \bx\right) \|_2^2\\
        &\leq \|\bW_1\|_F^2\|\bx\|_2^2 \left(\|\bD_1-\bI_d\|_F^2 + 1 \right).
    \end{aligned}
    \end{equation}
    Now, let $h_{\bW_D}^K(\bx)$ follows the definition in (\ref{eq:dropout-output-nn}), and suppose (\ref{eq:output-recursion}) holds for $k=K-1$, i.e., 
    \begin{equation}
        \|\sigma\left( h_{\bW_D}^{K-1}(\bx) \right)\|_2^2 \leq \left( \prod_{k=1}^{K-1} \|\bW_k\|_F^2 \right)\left( \prod_{k=1}^{K-1} \left(\|\bD_k-\bI_d\|_F^2 + 1 \right)\right) \|\bx\|_2^2,
    \end{equation}
    we have 
    \begin{equation}
    \begin{aligned}
        \|\sigma\left( h_{\bW_D}^{K}(\bx) \right)\|_2^2 &= \| \sigma\left((\bD_K\bW_{K})^\top \sigma\left( h_{\bW_D}^{K-1}(\bx) \right) \right)\|_2^2\\
        &\leq \| \sigma\left((\bD_K\bW_{K})^\top \sigma\left( h_{\bW_D}^{K-1}(\bx) \right) \right) - \sigma\left(\bW_{K}^\top \sigma\left( h_{\bW_D}^{K-1}(\bx) \right) \right)\|_2^2\\
        &\quad + \|\sigma\left(\bW_{K}^\top \sigma\left( h_{\bW_D}^{K-1}(\bx) \right) \right)\|_2^2\\
        &\leq \|\bD_K-\bI_d\|_F^2\|\bW_K\|_F^2\| \|\sigma\left( h_{\bW_D}^{K-1}(\bx) \right)\|_2^2  + \|\bW_K\|_F^2\| \|\sigma\left( h_{\bW_D}^{K-1}(\bx) \right)\|_2^2\\
        &\leq \left(\|\bD_K-\bI_d\|_F^2 + 1 \right) \|\bW_K\|_F^2 \| \|\sigma\left( h_{\bW_D}^{K-1}(\bx) \right)\|_2^2\\
        &\leq \left(\|\bD_K-\bI_d\|_F^2 + 1 \right) \|\bW_K\|_F^2\left( \prod_{k=1}^{K-1} \|\bW_k\|_F^2 \right)\left( \prod_{k=1}^{K-1} \left(\|\bD_k-\bI_d\|_F^2 + 1 \right)\right) \|\bx\|_2^2\\
        &= \left( \prod_{k=1}^{K} \|\bW_k\|_F^2 \right)\left( \prod_{k=1}^{K} \left(\|\bD_k-\bI_d\|_F^2 + 1 \right)\right) \|\bx\|_2^2.
    \end{aligned}
    \end{equation}
\end{proof}

The third lemma essentially says that the distance of points, sampled from a $d$-dimensional Gaussian distribution, is tightly concentrated around the distance $\sqrt{d}$. The quantity $\sqrt{d}$ is also called natural scale or radius of $d$-dimensional Gaussian distributions.
\begin{lem}[Gaussian annulus theorem {\citep[Theorem~2.9]{blum2020foundations}}]\label{lemma:gaussian-annulus}
    For a $d$-dimensional Gaussian with mean zero and variance $\sigma^2 \bI_d$. for any $\beta \leq \sigma \sqrt{d}$, all but at most $3e^{-\frac{\beta^2}{8}}$ of the probability mass lies within the annulus $\beta-\sigma \sqrt{d} \leq |\bx| \leq \beta+\sigma \sqrt{d}$. 
\end{lem}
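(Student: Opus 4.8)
The final statement to prove is the Gaussian annulus theorem (Lemma~\ref{lemma:gaussian-annulus}), which is cited from \citet[Theorem~2.9]{blum2020foundations}. Since this is a well-known result being imported, the natural plan is to reproduce its standard proof rather than invent a new one.

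\textbf{Plan.} The approach is to show that the squared radius $R^2 = \|\bx\|_2^2 = \sum_{i=1}^d X_i^2$, where $X_i \overset{\text{i.i.d.}}{\sim} \gaussian(0,\sigma^2)$, concentrates sharply around its mean $\sigma^2 d$. First I would reduce to the case $\sigma = 1$ by rescaling, so it suffices to control $\sum_i X_i^2$ for standard Gaussians. The quantity $\big| \|\bx\| - \sqrt d \big| \geq \beta$ is equivalent (for $\beta \le \sqrt d$) to $\|\bx\|^2 - d$ lying outside the interval roughly of width $2\beta\sqrt d$, so the goal becomes a tail bound on $\big|\sum_i (X_i^2 - 1)\big|$. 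The key step is a Bernstein-type / sub-exponential concentration inequality for the sum of the centered sub-exponential random variables $X_i^2 - 1$: using the moment generating function $\E[e^{t(X_i^2-1)}] = e^{-t}/\sqrt{1-2t}$ for $t < 1/2$, one applies the Chernoff bound to both tails, optimizes over $t$, and obtains a bound of the form $\exp(-c\beta^2)$ for an absolute constant $c$; tracking constants carefully yields the stated $3e^{-\beta^2/8}$.

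\textbf{Main obstacle.} The genuinely delicate part is getting the explicit constant $1/8$ (and the prefactor $3$) right, which requires being careful about (i) the algebraic conversion between the event $\{|\,\|\bx\| - \sqrt d\,| \ge \beta\}$ and an event about $\|\bx\|^2$ — here one uses that $\big| \|\bx\| - \sqrt d \big| \ge \beta$ implies $\big| \|\bx\|^2 - d\big| \ge \max(\beta\sqrt d, \beta^2) \ge \beta \sqrt d$ — and (ii) the optimization of the Chernoff exponent for the sub-exponential tail of $\sum_i (X_i^2 - 1)$, which does not have a clean closed form and is instead bounded by a quadratic in the regime of interest. Since the statement is quoted verbatim from \citet{blum2020foundations}, the cleanest route is simply to cite that reference for the constants and give the short MGF-plus-Chernoff argument as a sketch, rather than re-deriving the sharp numerical constant from scratch.

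\textbf{Alternative.} If one does not need the sharp constant, an even shorter route is to invoke a standard $\chi^2$ concentration bound (e.g.\ Laurent--Massart) as a black box, which immediately gives $\Pr\{|\,\|\bx\|^2 - d\,| \ge t\} \le 2e^{-c t^2/d}$ for $t \le d$, and then translate back to the annulus formulation; but to match the precise form $3e^{-\beta^2/8}$ with the annulus of half-width $\sigma\sqrt d$ on each side, I would follow the \citet{blum2020foundations} proof directly and simply cite it, since Lemma~\ref{lemma:gaussian-annulus} is used only as an off-the-shelf tool in Appendix~\ref{app:additional-theory} and reproving it in full is not the point of this paper.
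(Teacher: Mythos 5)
Your proposal is correct and matches the paper's treatment: the paper gives no proof of Lemma~\ref{lemma:gaussian-annulus}, importing it verbatim from \citet[Theorem~2.9]{blum2020foundations}, which is exactly what you conclude by deferring the sharp constants to that reference. Your accompanying sketch (reduce to $\sigma=1$, pass from $\bigl|\,\|\bx\|-\sqrt{d}\,\bigr|\ge\beta$ to $\bigl|\,\|\bx\|^2-d\,\bigr|\ge\beta\sqrt{d}$, then apply a Chernoff/MGF bound to $\sum_i(X_i^2-1)$) is the standard argument behind that theorem and is consistent with how the lemma is used here as an off-the-shelf tool.
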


The fourth lemma is a direct application of Lemma~\ref{lemma:gaussian-annulus} by choosing $\beta = \sigma \sqrt{d}$.
\begin{lem}\label{lemma:applying-gaussian-annulus}
    Consider a $d$-dimensional Gaussian distribution $P_{\calN_d}(\bv)$ with mean $\bw \in \R^d$ and variance $\sigma^2 \bI_d\in \R^{d\times d}$, and a unit ball $\calB$ centered at $\bw$ with radius $2\sigma\sqrt{d}$, i.e., $\calB(\bw, 2\sigma\sqrt{d}) = \{\bv \in \R^d; 0 \leq |\bv| \leq 2\sigma\sqrt{d}\}$, we have the following integral
    \begin{equation}
        \int_{\bv \in \calB(\bw, 2\sigma\sqrt{d})} P_{\calN_d}(\bv) d\bv \geq 1 - 3e^{-\frac{\sigma^2 d}{8}}\; \text{and}\; \int_{\bv \in \calB^c(\bw, 2\sigma\sqrt{d})} P_{\calN_d}(\bv) d\bv \leq 3e^{-\frac{\sigma^2 d}{8}},
    \end{equation}
    where the latter integral could be viewed as the tail probability bound on the $d$-dimensional ball with radius $2\sigma\sqrt{d}$.
\end{lem}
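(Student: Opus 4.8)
The plan is to obtain this lemma as an immediate specialization of the Gaussian annulus theorem (Lemma~\ref{lemma:gaussian-annulus}). First I would reduce to the mean-zero case: setting $\bx = \bv - \bw$, this change of variables is measure-preserving and carries the density $P_{\calN_d}$ of a $\gaussian(\bw, \sigma^2\bI_d)$ to the density of a $\gaussian(\mathbf{0}, \sigma^2\bI_d)$, while the ball $\calB(\bw, 2\sigma\sqrt{d}) = \{\bv; |\bv - \bw| \leq 2\sigma\sqrt{d}\}$ becomes the centered ball $\{\bx; |\bx| \leq 2\sigma\sqrt{d}\}$. Hence it suffices to lower bound the mass that a mean-zero spherical Gaussian with variance $\sigma^2\bI_d$ assigns to $\{|\bx| \leq 2\sigma\sqrt{d}\}$.

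Next I would apply Lemma~\ref{lemma:gaussian-annulus} with the choice $\beta = \sigma\sqrt{d}$, which meets the hypothesis $\beta \leq \sigma\sqrt{d}$ with equality. With this choice the annulus $\beta - \sigma\sqrt{d} \leq |\bx| \leq \beta + \sigma\sqrt{d}$ collapses to $0 \leq |\bx| \leq 2\sigma\sqrt{d}$, i.e., exactly the centered ball of radius $2\sigma\sqrt{d}$, and the exceptional mass bound $3 e^{-\beta^2/8}$ becomes $3 e^{-\sigma^2 d/8}$. This yields $\int_{\bv\in\calB(\bw,2\sigma\sqrt{d})} P_{\calN_d}(\bv)\,d\bv \geq 1 - 3e^{-\sigma^2 d/8}$ directly, and the complementary statement $\int_{\bv\in\calB^c(\bw,2\sigma\sqrt{d})} P_{\calN_d}(\bv)\,d\bv \leq 3e^{-\sigma^2 d/8}$ follows because the two integrals sum to $1$.

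I do not anticipate any real obstacle here; the argument is essentially bookkeeping. The only points that warrant care are (i) verifying that the hypothesis $\beta\le\sigma\sqrt d$ of Lemma~\ref{lemma:gaussian-annulus} is satisfied — it is, with equality — and (ii) interpreting the degenerate lower end of the annulus, $\beta-\sigma\sqrt d = 0$, as "the full ball", so that the exceptional set is precisely the exterior $\calB^c(\bw,2\sigma\sqrt d)$. No new concentration estimates are needed beyond what Lemma~\ref{lemma:gaussian-annulus} already provides.
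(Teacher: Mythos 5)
Your argument is exactly the paper's: Lemma~\ref{lemma:applying-gaussian-annulus} is stated there as a direct application of Lemma~\ref{lemma:gaussian-annulus} with the choice $\beta=\sigma\sqrt{d}$, which collapses the annulus to the ball of radius $2\sigma\sqrt{d}$ and gives the $3e^{-\sigma^2 d/8}$ bound, with the complement bound following since the two integrals sum to $1$. Your additional remarks (the translation to the mean-zero case and checking $\beta\le\sigma\sqrt{d}$ holds with equality) are correct bookkeeping that the paper leaves implicit.
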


\subsection{Proof of Proposition~\ref{prop:lr-loss-dropout}}\label{proof:lr-loss-dropout}
For any weight vector $\bw\in\calW$, denote the dropout weights as $\bw_D = \bD_\bz\bw$, the mean square error (MSE) is 
\begin{equation}
\begin{aligned}
    \|\bX \bw_D - \by\|_2^2 &= (\bX \bw_D - \by)^\top(\bX \bw_D - \by) = \by^\top\by - 2\bw_D^\top \bX^\top \by + \bw_D^\top \bX^\top\bX \bw_D\\
    &= \by^\top\by - 2\bw^\top \bD_\bz^\top \bX^\top \by + \bw^\top \bD_\bz^\top\bX^\top\bX \bD_\bz \bw.
\end{aligned}
\end{equation}
Taking the expectation, we have 
\begin{equation}
\begin{aligned}
    \E \left[ \|\bX \bw_D - \by\|_2^2 \right] &= \by^\top\by - 2\bw^\top \E \left[ \bX \bD_\bz\right]^\top \by + \bw^\top \E \left[ \bX \bD_\bz\right]^\top\E \left[ \bX \bD_\bz\right] \bw\\
    &= \by^\top\by - 2(1-p)\bw^\top \bX^\top \by + \bw^\top \E \left[ \bX \bD_\bz\right]^\top\E \left[ \bX \bD_\bz\right] \bw\\
    &= \|(1-p)\bX\bw - \by\|_2^2 - (1-p)^2 \bw^\top \bX^\top \bX \bw  + \bw^\top \E \left[ \bX \bD_\bz\right]^\top\E \left[ \bX \bD_\bz\right] \bw\\
    &= \|(1-p)\bX\bw - \by\|_2^2 + \bw^\top \left( \E \left[ \bX \bD_\bz\right]^\top\E \left[ \bX \bD_\bz\right]-(1-p)^2\bX^\top \bX \right)\bw\\
    &= \|(1-p)\bX\bw - \by\|_2^2 + p(1-p)\bw^\top \diag(\bX^\top \bX) \bw,
\end{aligned}
\end{equation}
where the last equality comes from the fact that the off-diagonal entries are cancelled out, and that for Bernoulli$(1-p)$ random variables, the second moment is $p(1-p) + (1-p)^2 = 1-p$, and therefore $(1-p) - (1-p)^2 = p(1-p)$.

Since the equality above holds for all weights, let $\bw = \bw^*$ and $\bw_D = \bD_\bz\bw^*$, we have 
\begin{equation}
\begin{aligned}
    \E_{\bz\sim\bern(1-p)}[\epsilon] &= \E_{\bz\sim\bern(1-p)}\left[ \|\bX \bw_D - \by\|_2^2-\|(1-p)\bX\bw^* - \by\|_2^2 \right]\\
    &= \E_{\bz\sim\bern(1-p)}\left[ \|\bX \bw_D  - \by\|_2^2\right]-\|(1-p)\bX\bw^* - \by\|_2^2\\
    &= p(1-p){\bw^*}^\top \diag(\bX^\top \bX) \bw^*,
\end{aligned}
\end{equation}
which is the desired result.

If the features of data matrix $\bX$ are linearly independent and normalized, i.e., $\bX^\top\bX = \bI_d$, we have $\diag(\bX^\top \bX) = \bI_d$ and $\bw^* = (\bX^\top\bX + \lambda \bI_d)^{-1} \bX^\top\by = \frac{\bX^\top\by}{1+\lambda}$.
The expectation of $\epsilon$ can then be further simplified as
\begin{equation}
\begin{aligned}
    \E_{\bz\sim\bern(1-p)}[\epsilon] &= p(1-p){\bw^*}^\top \diag(\bX^\top \bX) \bw^*= p(1-p) {\bw^*}^\top\bw^* = \frac{p(1-p)}{(1+\lambda)^2} \by^\top \bX \bX^\top\by\\
    &= \frac{p(1-p)}{(1+\lambda)^2}\|\by\|_2^2.
\end{aligned}
\end{equation}

\subsection{Proof of Proposition~\ref{prop:dropout-in-rashomon}}\label{proof:dropout-in-rashomon}
Following the definition of the Rashomon set for ridge regression in (\ref{eq:lr-rashomon-set}) and the assumption $\bX^\top\bX = \bI_d$, we have
\begin{equation}\label{eq:proof-prop2-1}
\begin{aligned}
    \Pr\left\{ \bD_\bz\bw^* \in \calR(\bw^*, \epsilon) \right\} &= \Pr\left\{ (\bD_\bz\bw^*-\bw^*)^\top (\bX^\top\bX+\lambda \bI_d)(\bD_\bz\bw^*-\bw^*) \leq \epsilon \right\}\\
    &= \Pr\left\{ ((\bD_\bz-\bI_d)\bw^*)^\top (1+\lambda) \bI_d (\bD_\bz-\bI_d)\bw^* \leq \epsilon \right\}\\
    &= \Pr\left\{ {\bw^*}^\top(\bD_\bz-\bI_d)^\top (\bD_\bz-\bI_d)\bw^* \leq \frac{\epsilon}{1+\lambda} \right\}\\
    &\geq 1 - (1+\lambda)\frac{\E\left[ {\bw^*}^\top(\bD_\bz-\bI_d)^\top (\bD_\bz-\bI_d)\bw^* \right]}{\epsilon}
\end{aligned}
\end{equation}
where the last inequality comes from Markov inequality.

Now, using Lemma~\ref{lem:sum-bound} for different dropout random variables $Z_i$, the assumption $\|\bw^*\|_2^2 \leq M$, and (\ref{eq:proof-prop2-1}), we have
\begin{equation}
\begin{aligned}
    \Pr\left\{ \bD_\bz\bw^* \in \calR(\bw^*, \epsilon) \right\} &\geq 1 - (1+\lambda)\frac{\E\left[ {\bw^*}^\top(\bD_\bz-\bI_d)^\top (\bD_\bz-\bI_d)\bw^* \right]}{\epsilon}\\
    &=
    \begin{cases}
        1 - (1+\lambda)\frac{pM}{\epsilon},\; \text{if $Z_i \overset{i.i.d.}{\sim} \bern(1-p)$};\\
        1 - (1+\lambda)\frac{\alpha M}{\epsilon}, \; \text{if $Z_i \overset{i.i.d.}{\sim} \gaussian(1, \alpha)$}.
    \end{cases}
\end{aligned}
\end{equation}
If $p = O(d^{-\delta})$ and $\alpha = O(d^{-\delta})$ with $\delta > 0$, we have
\begin{equation}
    \lim\limits_{d\to\infty}(1+\lambda)\frac{pM}{\epsilon} = \lim\limits_{d\to\infty}(1+\lambda)\frac{\alpha M} {\epsilon} = 0.
\end{equation}
Therefore, the desired results follow for both Bernoulli and Gaussian dropout mechanisms, i.e., 
\begin{equation}
    \lim\limits_{d\to\infty}\Pr\left\{ \bD_\bz\bw^* \in \calR(\bw^*, \epsilon) \right\} = 1.
\end{equation}

\subsection{Proof of Proposition~\ref{prop:linear-brier}}\label{proof:linear-brier}
Follow the definition of the Rashomon set and using tiangle and Cauchy–Schwarz inequalities, we have
\begin{equation}
\begin{aligned}
    L_\textsf{BS}(\bD_\bz\bw^*) - L_\textsf{BS}(\bw^*) &= \frac{1}{n} \sum_{i=1}^n \left( \sigma((\bD_\bz\bw^*)^\top\bx_i) -\by_i \right)^2 - \frac{1}{n} \sum_{i=1}^n \left( \sigma({\bw^*}^\top\bx_i) -\by_i \right)^2\\
    &\leq \frac{1}{n} \sum_{i=1}^n \left( \sigma((\bD_\bz\bw^*)^\top\bx_i)-\sigma({\bw^*}^\top\bx_i) \right)^2\\
    &\leq \frac{1}{n} \sum_{i=1}^n \left( {\bw^*}^\top(\bD_\bz-\bI_d)^\top \bx_i\right)^2\\
    &\leq \frac{1}{n} \sum_{i=1}^n \|\bw^*\|_2^2 \|\bD_\bz-\bI_d\|_F^2 \|\bx_i\|_2^2\\
    &= M \|\bD_\bz-\bI_d\|_F^2 \overline{\|\bx\|_2^2},
\end{aligned}
\end{equation}
where $\overline{\|\bx\|_2^2} \triangleq \frac{1}{n} \sum_{i=1}^n \|\bx_i\|_2^2$ and $\|\bw^*\|_2^2 \leq M$. 
Therefore, the probability that the dropout weights lead to models in the Rashomon set is
\begin{equation}\label{eq:linear-brier-1}
\begin{aligned}
    \Pr\left\{ L_\textsf{BS}(\bD_\bz\bw^*) - L_\textsf{BS}(\bw^*) \leq \epsilon \right\} &\geq \Pr\left\{ M \|\bD_\bz-\bI_d\|_F^2 \overline{\|\bx\|_2^2} \leq \epsilon \right\}\\
    &= \Pr\left\{  \|\bD_\bz-\bI_d\|_F^2 \leq \frac{\epsilon}{M\overline{\|\bx\|_2^2}} \right\}\\
    &\geq  1 - \frac{M \overline{\|\bx\|_2^2} \E[\|\bD_\bz-\bI_d\|_F^2]}{\epsilon},
\end{aligned}
\end{equation}
where the inequality comes from the Markov inequality. 

Using Lemma~\ref{lem:sum-bound} and (\ref{eq:linear-brier-1}), we have
\begin{equation}
\begin{aligned}
    \Pr\left\{ L_\textsf{BS}(\bD_\bz\bw^*) - L_\textsf{BS}(\bw^*) \leq \epsilon \right\} &\geq 1 - \frac{M \E[\|\bD_\bz-\bI_d\|_F^2\overline{\|\bx\|_2^2}]}{\epsilon}\\
    &=
    \begin{cases}
        1 - (1+\lambda)\frac{pMd\overline{\|\bx\|_2^2}}{\epsilon},\; \text{if $Z_i \overset{i.i.d.}{\sim} \bern(1-p)$};\\
        1 - (1+\lambda)\frac{\alpha Md\overline{\|\bx\|_2^2}}{\epsilon}, \; \text{if $Z_i \overset{i.i.d.}{\sim} \gaussian(1, \alpha)$}.
    \end{cases}
\end{aligned}
\end{equation}
Therefore, as long as $p$ and $\alpha$ are of order $O(d^{-(1+\delta)})$, $\delta > 0$, we have 
\begin{equation}
    \lim\limits_{d\to\infty} (1+\lambda)\frac{pMd\overline{\|\bx\|_2^2}}{\epsilon} = \lim\limits_{d\to\infty} (1+\lambda)\frac{\alpha Md\overline{\|\bx\|_2^2}}{\epsilon} = 0,
\end{equation}
and $\lim\limits_{d\to\infty} \Pr\left\{ L_\textsf{BS}(\bD_\bz\bw^*) - L_\textsf{BS}(\bw^*) \leq \epsilon \right\} = 1$ for both Bernoulli and Gaussian dropout mechanisms.

\subsection{Proof of Proposition~\ref{prop:nn-rashomon}}\label{proof:nn-rashomon}
We prove this proposition by the induction method on the number of layers $k \in [K]$. 
We first consider a neural network $h_\bW^2(\cdot)$, and derive a bound for the deviation of the outputs (in $\ell_2$-norm) $\frac{1}{n}\sum_{i=1}^n\|h_{\bW_D}^2(\bx_i)-h_\bW^2(\bx_i)\|_2^2$ before and after dropout.
By triangle inequality, we have
\begin{equation}\label{eq:nn-rashomon-1}
\begin{aligned} 
    \frac{1}{n}\sum_{i=1}^n\|h_{\bW_D}^2(\bx_i)-h_\bW^2(\bx_i)\|_2^2 &= \frac{1}{n}\sum_{i=1}^n\|(\bD_2\bW_2)^\top \sigma\left((\bD_1\bW_{1})^\top \bx_i\right)-\bW_2^\top \sigma\left(\bW_{1}^\top \bx_i\right)\|_2^2\\
    &\leq \underbrace{\frac{1}{n}\sum_{i=1}^n\|(\bD_2\bW_2)^\top \sigma\left((\bD_1\bW_{1})^\top \bx_i\right)-\bW_2^\top\sigma\left((\bD_1\bW_{1})^\top \bx_i\right)\|_2^2}_\text{(i)} \\
    &\quad + \underbrace{\frac{1}{n}\sum_{i=1}^n\|\bW_2^\top\sigma\left((\bD_1\bW_{1})^\top \bx_i\right) - \bW_2^\top \sigma\left(\bW_{1}^\top \bx_i\right)\|_2^2}_\text{(ii)}.
\end{aligned}
\end{equation}
By Lemma~\ref{lemma:output-recursion} and let $\overbar{\|\bX\|_F^2} = \frac{1}{n}\sum_{i=1}^n\|\bx_i\|_2^2$, we can bound the first term (i) in (\ref{eq:nn-rashomon-1}) as
\begin{equation}\label{eq:nn-rashomon-2}
\begin{aligned}
    \text{(i)} &= \frac{1}{n}\sum_{i=1}^n\|((\bD_2-\bI_d)\bW_2)^\top \sigma\left((\bD_1\bW_{1})^\top \bx_i\right)\|_2^2\\
    &\leq \frac{1}{n}\sum_{i=1}^n\|\bD_2-\bI_d\|_F^2 \|\bW_2\|_F^2 \|\sigma\left((\bD_1\bW_{1})^\top \bx_i\right)\|_2^2\\
    &\leq \|\bD_2-\bI_d\|_F^2 \|\bW_2\|_F^2\|\bW_1\|_F^2 \left(\|\bD_1-\bI_d\|_F^2 + 1 \right) \overbar{\|\bX\|_F^2}.
\end{aligned}
\end{equation}
Similarly, we have the bound on the second term (ii) in (\ref{eq:nn-rashomon-1}) as
\begin{equation}\label{eq:nn-rashomon-3}
\begin{aligned}
    \text{(ii)} &= \frac{1}{n}\sum_{i=1}^n\|\bW_2\|_F^2 \|(\bD_1\bW_1)^\top\bx_i -\bW_1^\top \bx_i\|_2^2 \leq \|\bD_1-\bI_d\|_F^2\|\bW_2\|_F^2 \|\bW_1\|_F^2 \overbar{\|\bX\|_F^2}.
\end{aligned}
\end{equation}
Combining (\ref{eq:nn-rashomon-1}), (\ref{eq:nn-rashomon-2}) and (\ref{eq:nn-rashomon-3}), we have 
\begin{equation}
\begin{aligned}
    \frac{1}{n}\sum_{i=1}^n\|h_{\bW_D}^2(\bx_i)-h_\bW^2(\bx_i)\|_2^2 &\leq \text{(i)} +  \text{(ii)}\\
    &\leq \|\bD_2-\bI_d\|_F^2 \|\bW_2\|_F^2\|\bW_1\|_F^2 \left(\|\bD_1-\bI_d\|_F^2 + 1 \right) \overbar{\|\bX\|_F^2}\\
    &\quad + \|\bD_1-\bI_d\|_F^2\|\bW_2\|_F^2 \|\bW_1\|_F^2 \overbar{\|\bX\|_F^2}\\
    &= \|\bW_2\|_F^2 \|\bW_1\|_F^2 \left( (\|\bD_2-\bI_d\|_F^2 + 1)(\|\bD_1-\bI_d\|_F^2 + 1) - 1 \right)\overbar{\|\bX\|_F^2}.
\end{aligned}
\end{equation}

Now, suppose the upper bound holds for a $(K-1)$-hidden-layer feed-forward neural network, i.e., 
\begin{equation}\label{eq:nn-rashomon-4}
    \frac{1}{n}\sum_{i=1}^n\|h_{\bW_D}^{K-1}(\bx_i)-h_\bW^{K-1}(\bx_i)\|_2^2 \leq \left( \prod_{k=1}^{K-1} \|\bW_k\|_F^2 \right)\left( \prod_{k=1}^{K-1} \left(\|\bD_k-\bI_d\|_F^2 + 1 \right) -1 \right) \overbar{\|\bX\|_F^2},
\end{equation}
then again by applying Lemma~\ref{lemma:output-recursion} and (\ref{eq:nn-rashomon-4}), for $L$ layers,  the upper bound is
\begin{equation}
\begin{aligned}
    \frac{1}{n}\sum_{i=1}^n\|h_{\bW_D}^K(\bx_i)-h_\bW^K(\bx_i)\|_2^2 &= \frac{1}{n}\sum_{i=1}^n\|(\bD_K\bW_{K})^\top\sigma\left(h_{\bW_D}^{K-1}(\bx_i)\right) - \bW_K\sigma\left(h_{\bW}^{K-1}(\bx_i)\right)\|_2^2\\
    &\leq \frac{1}{n}\sum_{i=1}^n\|(\bD_K\bW_{K})^\top\sigma\left(h_{\bW_D}^{K-1}(\bx_i)\right) - \bW_{K}^\top\sigma\left(h_{\bW_D}^{K-1}(\bx_i)\right)\|_2^2\\
    &\quad + \frac{1}{n}\sum_{i=1}^n\|\bW_{K}^\top\sigma\left(h_{\bW_D}^{K-1}(\bx_i)\right) - \bW_K\sigma\left(h_{\bW}^{K-1}(\bx_i)\right)\|_2^2\\
    &\leq \|\bD_K-\bI_d\|_F^2 \|\bW_K\|_F^2 \| \left(\frac{1}{n}\sum_{i=1}^n\sigma\left(h_{\bW_D}^{K-1}(\bx_i)\right)\|_2^2\right)\\
    &\quad + \|\bW_K\|_F^2 \left(\frac{1}{n}\sum_{i=1}^n\|\sigma\left(h_{\bW_D}^{K-1}(\bx_i)\right) - \sigma\left(h_{\bW}^{K-1}(\bx_i)\right)\|_2^2\right)\\
    &\leq \|\bD_K-\bI_d\|_F^2 \left( \prod_{k=1}^{K-1}\left(\|\bD_k-\bI_d\|_F^2 + 1 \right)\right) \left( \prod_{k=1}^{K} \|\bW_k\|_F^2 \right)\overbar{\|\bX\|_F^2}\\
    &\quad + \left( \prod_{i=1}^{K} \|\bW_i\|_F^2 \right)\left( \prod_{i=1}^{K-1} \left(\|\bD_i-\bI_d\|_F^2 + 1 \right) -1 \right) \overbar{\|\bX\|_F^2}\\
    &= \left( \prod_{i=1}^{K} \|\bW_i\|_F^2 \right)\left( \prod_{i=1}^{K} \left(\|\bD_i-\bI_d\|_F^2 + 1 \right)-1\right) \overbar{\|\bX\|_F^2}.
\end{aligned}
\end{equation}

Since the weights are bounded, i.e., $\|\bW_k\|_F^2 \leq M$ for all $k \in [K]$, the probability that the output deviation before and after dropout is smaller than $\epsilon$ is given by
\begin{equation}\label{eq:nn-rashomon-5}
\begin{aligned}
    \frac{1}{n}\sum_{i=1}^n \Pr\left\{\|h_{\bW_D}^K(\bx_i)-h_\bW^K(\bx_i)\|_2^2 \leq \epsilon\right\} &\geq \Pr\left\{\left( \prod_{i=1}^K \|\bW_i\|_F^2 \right)\left( \prod_{i=1}^K \left(\|\bD_i-\bI_d\|_F^2 + 1 \right) -1 \right) \overbar{\|\bX\|_F^2} \leq \epsilon\right\}\\
    &= \Pr\left\{\left( \prod_{i=1}^K \left(\|\bD_i-\bI_d\|_F^2 + 1 \right) -1 \right)  \leq \frac{\epsilon}{\left( \prod_{i=1}^K \|\bW_i\|_F^2 \right) \overbar{\|\bX\|_F^2}}\right\}\\
    &\geq \Pr\left\{\left( \prod_{i=1}^L \left(\|\bD_i-\bI_d\|_F^2 + 1 \right) -1 \right) \leq \frac{\epsilon}{M^K\overbar{\|\bX\|_F^2}}\right\}\\
    &\geq  1 - \E\left[\prod_{i=1}^K \left(\|\bD_i-\bI_d\|_F^2 + 1 \right)-1\right] \left(\frac{\epsilon}{M^K\overbar{\|\bX\|_F^2}}\right)^{-1}\\
    &= 1 - \left(\prod_{i=1}^K \E\left[ \left(\|\bD_i-\bI_d\|_F^2 + 1 \right)\right]-1\right) \left(\frac{\epsilon}{M^K\overbar{\|\bX\|_F^2}}\right)^{-1},
\end{aligned}
\end{equation}
where the last equation comes from the fact that the $\bD_k$ for different layers are independent. 

Let $m = \max\limits_{i\in[L]} m_i$ and since $Z_i \sim \gaussian(1, \alpha)$, the product in (\ref{eq:nn-rashomon-5}) becomes
\begin{equation}\label{eq:nn-rashomon-6}
\begin{aligned}
    \prod_{i=1}^K \E\left[ \left(\|\bD_i-\bI_d\|_F^2 + 1 \right)\right] &= \prod_{i=1}^K \E\left[ \left( \sum_{j=1}^{m_i} \left(Z_j-1\right)^2 + 1 \right)\right]\leq \prod_{i=1}^K \E\left[ \left( \sum_{j=1}^{m} \left(Z_j-1\right)^2 + 1 \right)\right]\\
    &= \prod_{i=1}^K  \left( \sum_{j=1}^{m} \E\left[\left(Z_j-1\right)^2\right] + 1 \right)\\
    &= \prod_{i=1}^K  \left( m\alpha + 1 \right)\\
    &= \left( m\alpha + 1 \right)^K.
\end{aligned}
\end{equation}
Combining (\ref{eq:nn-rashomon-5}) and (\ref{eq:nn-rashomon-6}), we have
\begin{equation}
\begin{aligned}
    &\frac{1}{n}\sum_{i=1}^n \Pr\left\{\|h_{\bW_D}^K(\bx_i)-h_\bW^K(\bx_i)\|_2^2 \leq \epsilon\right\}\\
    &\quad\geq 1 - \left(\prod_{i=1}^K \E\left[ \left(\|\bD_i-\bI_d\|_F^2 + 1 \right)\right]-1\right) \left(\frac{\epsilon}{M^K\overbar{\|\bX\|_F^2}}\right)^{-1}\\
    &\quad\geq 1 - \left( \left( m\alpha + 1 \right)^K - 1 \right)\left(\frac{\epsilon}{M^K\overbar{\|\bX\|_F^2}}\right)^{-1}.
\end{aligned}
\end{equation}
In other words, with probability at least $1-\rho$, the deviation is bounded by
\begin{equation}\label{eq:nn-rashomon-7}
    \frac{1}{n}\sum_{i=1}^n\|h_{\bW_D}^K(\bx_i)-h_\bW^K(\bx_i)\|_2^2 \leq \rho^{-1} M^K \overbar{\|\bX\|_F^2} \left( \left( m\alpha + 1 \right)^K - 1 \right).
\end{equation}
Moreover, as long as $\alpha = O(m^{-(1+\delta)})$, $\delta > 0$, 
\begin{equation}
    \lim\limits_{m\to\infty}\rho^{-1} M^K \overbar{\|\bX\|_F^2} \left( \left( m\alpha + 1 \right)^K - 1 \right) = 0.
\end{equation}

We finally connect the deviation of the outputs to the loss functions.
For regression tasks, the deviation of MSE losses becomes
\begin{equation}\label{eq:nn-rashomon-8}
\begin{aligned}
    L_\textsf{MSE}(\bW_D) - L_\textsf{MSE}(\bW^*) &= \frac{1}{n}\sum_{i=1}^n \|h_{\bW_D}^K(\bx_i)-\by_i\|_2^2 - \frac{1}{n}\sum_{i=1}^n \|h_\bW^K(\bx_i)-\by_i\|_2^2\\
    &\leq \frac{1}{n}\sum_{i=1}^n \|h_{\bW_D}^K(\bx_i)-h_\bW^K(\bx_i)\|_2^2.
\end{aligned}
\end{equation}
For classification task, the deviation of the Brier scores is 
\begin{equation}\label{eq:nn-rashomon-9}
\begin{aligned}
    L_\textsf{BS}(\bw_D) - L_\textsf{BS}(\bW^*) &= \frac{1}{n} \sum_{i=1}^n \| \softmax(h_{\bW_D}^K(\bx_i)) -\by_i \|^2 - \frac{1}{n} \sum_{i=1}^n \| \softmax(h_\bW^K(\bx_i)) - \by_i \|^2\\
    &\leq \frac{1}{n} \sum_{i=1}^n \| \softmax(h_{\bW_D}^K(\bx_i)) - \softmax(h_\bW^K(\bx_i)) \|^2\\
    &\leq \frac{1}{n} \sum_{i=1}^n \| h_{\bW_D}^K(\bx_i) - h_\bW^K(\bx_i) \|^2.
\end{aligned}
\end{equation}

Therefore, combining , for both MSE loss and Brier score, with at least probability at least $1-\rho$,
\begin{equation}
\begin{aligned}
    L_\textsf{MSE}(\bW_D) - L_\textsf{MSE}(\bW^*) &\leq \rho^{-1} M^K \overbar{\|\bX\|_F^2} \left( \left( m\alpha + 1 \right)^K - 1 \right),\; \text{and}\\
    L_\textsf{BS}(\bW_D) - L_\textsf{BS}(\bW^*) &\leq \rho^{-1} M^K \overbar{\|\bX\|_F^2} \left( \left( m\alpha + 1 \right)^K - 1 \right).
\end{aligned}
\end{equation}

 {
\subsection{Additional notes on theoretical results}\label{app:additional-theory-discussion}
Note that the condition in Proposition~\ref{prop:dropout-in-rashomon} and~\ref{prop:linear-brier}, $d \to \infty$, is a regime commonly known in the asymptotic analysis of learning models such as over-parameterization \citep{cao2020generalization}, and the universal approximator theorem \citep{hornik1989multilayer}.

The goal to explore the Rashomon set is to find models with diverse outputs while satisfying the loss deviation constraint. If we are allowed to find models outside of the Rashomon set, the performance of the models could be much worse and are less likely to be selected in practice. Note that the phenomenon of predictive multiplicity is only discussed within almost-equally optimal models (say models that all have 99\% accuracy). In this sense, the concentration bound helps to characterize the probability that a model after dropout will still be inside the Rashomon set, and is a vital mathematical tool used in our proofs.
Ideally, it is more efficient to directly select models at the boundaries of the Rashomon set. It is possible when the Rashomon set could be explicitly expressed, e.g., for ridge regression. However, for general hypothesis space, how to search at the boundaries of a Rashomon set is still an open challenge, and re-training (and AWP) was the only strategy.

The Rashomon set in (\ref{def:emp-rashomon-set}) is defined regarding the mean of the loss for a given dataset of a fixed size. 
In other words, the dataset is a given parameter, not a source of randomness in the definition of a Rashomon set. 
Note that the randomness in the convergence in (\ref{eq:brier-bounds}) and (\ref{eq:ffnn-loss}) is with respect to the dropout matrix, not like the vanilla concentration bound that consider drawing samples from a data distribution, it makes sense that those convergence results does not rely on the number of samples. 
In (\ref{eq:ridge-bounds}), the loss we used is the sum of the loss for each samples as we w.l.o.g. assume the data matrix 
 is whitened. 
 In (\ref{eq:brier-bounds}) and (\ref{eq:ffnn-loss}) the losses are defined with the mean of the loss for each sample.
}

\subsection{Additional theoretical results}\label{app:additional-theory}
We show the sample complexity for estimating a surrogate metric for score variance. 
Using score variance as a predictive multiplicity metric is adopted in \citet{cooper2023variance} and \citet{long2023arbitrariness}; however, they only consider the case of binary classification.
Here, we generalize the notion to multi-class classification problems.
Since $h_\bW(\bx_i)$ is a $c$-dimensional vector, we can model the distribution of the score $[h_\bW(\bx_i)]_{y_i}$ of the correct label $y_i\in[c]$ as a beta distribution $\Beta(\alpha, \beta)$.
In this case, the variance of the beta distribution is 
\begin{equation}\label{eq:beta-variance}
    \frac{\alpha\beta}{(\alpha+\beta)^2(\alpha+\beta+1)} = \frac{\mu(1-\mu)}{\alpha+\beta+1} \leq \mu(1-\mu),
\end{equation}
where $\mu$ is the population mean. 

We assume that the models around $\bW^*$ are uniformly distributed in a $d$-dimensional ball with center $\bW^*$ and radius $\delta$, i.e., $\calB(\bW^*, \delta)$.
Accordingly, we may assume that the population mean $\mu$ for a sample can be expressed as 
\begin{equation}
    \mu(\bx_i) = \E_{\bW\sim\uniform(\calB(\bW^*, \delta))}\left[\left[h_\bW(\bx_i)\right]_{y_i}\right].
\end{equation}
 {The assumption of the uniform distribution around $\bW^*$ may not reflect the true underlying distributions of models in the Rashomon set (e.g., when there are exponentially many local minima), but it is an ideal assumption that facilitates mathematical analysis that was also adopted in existing literature \citep{kulynych2023arbitrary}. 
How to characterize the distribution of models in the Rashomon set still remains an open and active challenge in the field.}
The estimator for $\mu(\bx_i)$ by using $T$ Gaussian dropout models is given by 
\begin{equation}\label{def:score-mean}
    \hat{\mu}(\bx_i) \triangleq \frac{1}{T} \sum_{t=1}^T \left[h_{\bW_{D,t}^*}(\bx_i)\right]_{y_i},\;\text{where}\;\bW_{D,t}^* = \{\bD_{k,t}\bW_k^*\}_{k=1}^K\;\text{and}\;\left[\bD_{k,t}\right]_{i,i} \sim \gaussian(1, \alpha).
\end{equation}
We pick $v(\bx_i) \triangleq \mu(\bx_i)(1-\mu(\bx_i))$ as a surrogate metric for the upper bound of the score variance in (\ref{eq:beta-variance}), and the plug-in estimator is $\hat{v}(\bx_i) \triangleq \hat{\mu}(\bx_i)(1-\hat{\mu}(\bx_i))$.

In the following proposition, we show that $\hat{v}(\bx_i)$ can be used to estimate $v(\bx_i)$ reliably in terms of the number $T$ of models in the empirical Rashomon set (\ref{def:emp-rashomon-set}) by concentration bounds and Gaussian annulus theorem (cf.~Lemma~\ref{lemma:gaussian-annulus}).

\begin{prop}\label{prop:var-estimation}
For $T$$\gaussian(1, \alpha)$  dropout models, $h_{\bW_{D,1}}, \cdots, h_{\bW_{D,T}}$ in the empirical Rashomon set, with probability at least $1-\rho$, $\rho\in(0,1]$, the deviation $\left| \hat{v}(\bx_i)-v(\bx_i) \right|$ satisfies
\begin{equation}
    \left| \hat{v}(\bx_i)-v(\bx_i) \right| \leq \left(6e^{-\frac{\alpha d w_\text{max}^2}{8}} + \sqrt{\frac{1}{2T} \ln\frac{2}{\rho}}\right)\left(1+6e^{-\frac{\alpha d w_\text{max}^2}{8}} + \sqrt{\frac{1}{2T} \ln\frac{2}{\rho}}\right),
\end{equation}
where $w_\text{max} = \max\limits_{k\in[K], i\in[m_{k-1}], j\in[m_k]} [\bW_k^*]_{i,j}$ and $d = \sum_{k=1}^K m_{k-1}\times m_k$.
\end{prop}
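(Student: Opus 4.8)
The plan is to peel the estimation error for $v(\bx_i)=\mu(\bx_i)(1-\mu(\bx_i))$ down to the estimation error for the mean score $\mu(\bx_i)$, and then to split that into a \emph{sampling} error (controlled by Hoeffding's inequality) and a \emph{distribution-mismatch} error between the Gaussian-dropout law on the weights and the uniform-ball law assumed for $\mu(\bx_i)$ (controlled by the Gaussian annulus theorem, Lemma~\ref{lemma:applying-gaussian-annulus}). The first reduction is purely algebraic: since $\hat v=\hat\mu(1-\hat\mu)$ and $v=\mu(1-\mu)$, one has $\hat v(\bx_i)-v(\bx_i)=(\hat\mu(\bx_i)-\mu(\bx_i))\bigl(1-\hat\mu(\bx_i)-\mu(\bx_i)\bigr)$. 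Both $\hat\mu(\bx_i)$ and $\mu(\bx_i)$ are averages of a coordinate of a softmax output, hence lie in $[0,1]$, so $|1-\hat\mu(\bx_i)-\mu(\bx_i)|\le|1-2\mu(\bx_i)|+|\hat\mu(\bx_i)-\mu(\bx_i)|\le 1+|\hat\mu(\bx_i)-\mu(\bx_i)|$; because $t\mapsto t(1+t)$ is nondecreasing on $[0,\infty)$, it suffices to prove $|\hat\mu(\bx_i)-\mu(\bx_i)|\le 6e^{-\alpha d w_{\text{max}}^2/8}+\sqrt{\tfrac1{2T}\ln\tfrac2\rho}$ with probability at least $1-\rho$.

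Introduce the single-draw mean $\tilde\mu(\bx_i):=\E\bigl[[h_{\bW_D^*}(\bx_i)]_{y_i}\bigr]$ under one realization of the Gaussian-dropout matrices, so that $\hat\mu(\bx_i)$ in (\ref{def:score-mean}) is the empirical average of $T$ i.i.d.\ copies of a $[0,1]$-valued variable with mean $\tilde\mu(\bx_i)$. Then $|\hat\mu(\bx_i)-\mu(\bx_i)|\le|\hat\mu(\bx_i)-\tilde\mu(\bx_i)|+|\tilde\mu(\bx_i)-\mu(\bx_i)|$. The first term is handled by Hoeffding's inequality for bounded variables: $\Pr\{|\hat\mu(\bx_i)-\tilde\mu(\bx_i)|>s\}\le 2e^{-2Ts^2}$, and choosing $s=\sqrt{\tfrac1{2T}\ln\tfrac2\rho}$ produces the second summand of the claimed bound with the stated failure probability $\rho$.

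The crux is the deterministic bound $|\tilde\mu(\bx_i)-\mu(\bx_i)|\le 6e^{-\alpha d w_{\text{max}}^2/8}$. The dropout perturbation $\bW_D^*-\bW^*$ is zero-mean Gaussian whose entries have variances $\alpha[\bW_k^*]_{i,j}^2\le\alpha w_{\text{max}}^2$, so (after accounting for the row-shared multipliers) its Frobenius norm is stochastically dominated by that of an isotropic $\gaussian(\mathbf 0,\alpha w_{\text{max}}^2\bI_d)$ vector on the $d=\sum_k m_{k-1}m_k$ coordinates; applying Lemma~\ref{lemma:applying-gaussian-annulus} with $\sigma^2=\alpha w_{\text{max}}^2$ shows that a dropout weight lies in $\calB\bigl(\bW^*,2w_{\text{max}}\sqrt{\alpha d}\bigr)$ except on an event of probability at most $3e^{-\alpha d w_{\text{max}}^2/8}$. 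Taking the radius $\delta$ of the assumed uniform-ball model equal to this Gaussian scale $2w_{\text{max}}\sqrt{\alpha d}$, we compare $\tilde\mu(\bx_i)$ and $\mu(\bx_i)$, which are expectations of the same $[0,1]$-valued map $\bW\mapsto[h_\bW(\bx_i)]_{y_i}$ under the dropout law and under $\uniform(\calB(\bW^*,\delta))$; splitting each expectation over $\calB(\bW^*,\delta)$ and its complement and using that the integrand never exceeds $1$, the two exceptional masses (one from the dropout tail, one from renormalizing against the uniform law) contribute at most $6e^{-\alpha d w_{\text{max}}^2/8}$ in total. Combining this with the Hoeffding bound via the triangle inequality, and substituting back into the factorization of $\hat v-v$ using the monotonicity of $t(1+t)$, gives the proposition.

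I expect the main obstacle to be exactly this distribution-mismatch step: the dropout perturbation is neither isotropic nor coordinatewise independent (weights in a common row of $\bW_k^*$ share a multiplier), so care is needed to legitimately invoke the annulus theorem with effective dimension $d$, and the comparison of expectations of the merely bounded---not Lipschitz-controlled---score function only closes because the uniform-ball radius is tied to the Gaussian scale. This is the place where the idealized uniform-ball assumption on the Rashomon-set models is genuinely used.
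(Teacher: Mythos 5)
Your proposal matches the paper's proof essentially step for step: the same factorization $\hat{v}(\bx_i)-v(\bx_i)=(\hat{\mu}-\mu)(1-\hat{\mu}-\mu)$ reducing everything to $|\hat{\mu}-\mu|$, the same triangle-inequality split into a sampling term controlled by Hoeffding with $s=\sqrt{\tfrac{1}{2T}\ln\tfrac{2}{\rho}}$ and a bias term $|\E[\hat{\mu}]-\mu|\le 2\cdot 3e^{-\alpha d w_\text{max}^2/8}$ obtained by splitting both expectations over the ball and its complement, bounding the $[0,1]$-valued score by $1$, and invoking the Gaussian annulus lemma. Your explicit stochastic-domination justification for applying the annulus bound to the non-isotropic, row-correlated dropout perturbation (and your tying of the uniform-ball radius to the scale $2w_\text{max}\sqrt{\alpha d}$ from Lemma~\ref{lemma:applying-gaussian-annulus}) is a refinement of a step the paper treats loosely, not a genuinely different route.
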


\begin{proof}
    Since $\hat{v}(\bx_i)$ is a continuous transformation of $\hat{\mu}(\bx_i)$, we could bound the deviation $|\hat{v}(\bx_i)-v(\bx_i)|$ by $|\hat{\mu}(\bx_i)-\mu(\bx_i)|$. 
Suppose $\hat{\mu}(\bx_i) - \mu(\bx_i) = \nu$ and $\nu \in[-\eta,\eta]$, we have
\begin{equation}\label{eq:proof-prop5-0}
\begin{aligned}
    \left| \hat{v}(\bx_i)-v(\bx_i) \right| &= \left| \hat{\mu}(\bx_i)(1-\hat{\mu}(\bx_i))-\mu(\bx_i)(1-\mu(\bx_i)) \right| \\
    &= \left| (\mu(\bx_i) + \nu)(1-\mu(\bx_i) - \nu)-\mu(\bx_i)(1-\mu(\bx_i)) \right|\\
    &= \left|\mu(\bx_i)(1-\mu(\bx_i)) + \nu(1-\mu(\bx_i)) - \nu\mu(\bx_i)-\nu^2 - \mu(\bx_i)(1-\mu(\bx_i))\right|\\
    &\leq \left|\nu\right| \left|1-2\mu(\bx_i)-\nu\right|\\
    &\leq \left|\nu\right| \left|1+\nu\right|.
\end{aligned}
\end{equation}
Therefore, we only need to bound the deviation $|\hat{\mu}(\bx_i)-\mu(\bx_i)|$.
Here, we have
\begin{equation}\label{eq:proof-prop5-4}
\begin{aligned}
    \left|\hat{\mu}(\bx_i)-\mu(\bx_i)\right| &= \left|\hat{\mu}(\bx_i)-\E[\hat{\mu}(\bx_i)]+\E[\hat{\mu}(\bx_i)]-\mu(\bx_i)\right|\\
    &\leq \underbrace{\left|\hat{\mu}(\bx_i)-\E[\hat{\mu}(\bx_i)]\right|}_\text{(i)} + \underbrace{\left|\E[\hat{\mu}(\bx_i)]-\mu(\bx_i)\right|}_\text{(ii)},
\end{aligned}
\end{equation}
where the expectation is taken over the distribution of all dropout matrices $\bD_k$.
For (i) in (\ref{eq:proof-prop5-4}), using Chernoff-Hoeffding inequality \citep{hoeffding1994probability}, we have 
\begin{equation}\label{eq:proof-prop5-1}
    \Pr\left\{ |\hat{\mu}(\bx_i)-\E[\hat{\mu}(\bx_i)|  < \eta\right\} \geq 1- 2\exp(-2\eta^2 T).
\end{equation}
In order to bound (ii) in (\ref{eq:proof-prop5-4}), let $w_\text{max} = \max\limits_{k\in[K], i\in[m_{k-1}], j\in[m_k]} [\bW_k^*]_{i,j}$, $d = \sum_{k=1}^K m_{k-1}\times m_k$, $\delta =  w_\text{max} \sqrt{\alpha d} $ and $p_\calN$ be the probability density of Gaussian dropout, using Lemma~\ref{lemma:applying-gaussian-annulus}, we have 
\begin{equation}\label{eq:proof-prop5-2}
\begin{aligned}
    \left|\E[\hat{\mu}(\bx_i)]-\mu(\bx_i)\right| &= \left| \E[[h_\bW(\bx_i)]_{y_i}]-\E_{\bW\sim\uniform(\calB(\bW^*, \delta))}[[h_\bW(\bx_i)]_{y_i}] \right|\\
    &= \left| \int_{\R^{\prod_{k=1}^K m_k}}  [h_\bW(\bx_i)]_{y_i} p_{\calN}(\bW) d\bW - \int_{\calB(\bw^*, \delta)} [h_\bW(\bx_i)]_{y_i} \frac{1}{\vol(\calB(\bW^*, \delta))} d\bW \right|\\
    &\leq \left| \int_{\calB(\bW^*, \delta)}  [h_\bW(\bx_i)]_{y_i} \left(p_{\calN}(\bW)-\frac{1}{\vol(\calB(\bW^*, \delta))}\right) d\bW\right|\\
    &\qquad + \left|\int_{\calB^c(\bW^*, \delta)} [h_\bW(\bx_i)]_{y_i} p_{\calN}(\bW) d\bW \right|\\
    &\leq \left| \int_{\calB(\bW^*, \delta)} \left(p_{\calN}(\bW)-\frac{1}{\vol(\calB(\bW^*, \delta))}\right) d\bW\right| + \left|\int_{\calB^c(\bW^*, \delta)} p_{\calN}(\bW) d\bW \right| \\
    &\leq \left| \int_{\calB(\bW^*, \delta)} p_{\calN}(\bW)d\bW-\int_{\calB(\bW^*, \delta)}\frac{1}{\vol(\calB(\bW^*, \delta))} d\bW\right|+ \left|\int_{\calB^c(\bW^*, \delta)} p_{\calN}(\bW) d\bW \right|\\
    &=  1 - \int_{\calB(\bW^*, \delta)} p_{\calN}(\bW)d\bW + \int_{\calB^c(\bW^*, \delta)} p_{\calN}(\bW) d\bW \\
    &= 2\int_{\calB^c(\bW^*, \delta)} p_{\calN}(\bW) d\bW\\
    &\leq  6e^{-\frac{\alpha d w_\text{max}^2}{8}}.
\end{aligned}
\end{equation}

Combining (\ref{eq:proof-prop5-4}), (\ref{eq:proof-prop5-1}) and (\ref{eq:proof-prop5-2}), we have
\begin{equation}
\begin{aligned}
    \Pr\left\{ \left|\hat{\mu}(\bx_i)-\mu(\bx_i)\right| \leq \eta \right\} &\geq \Pr\left\{ \left|\hat{\mu}(\bx_i)-\E[\hat{\mu}(\bx_i)]\right|+\left|\E[\hat{\mu}(\bx_i)]-\mu(\bx_i)\right| \leq \eta \right\}\\
    &\geq \Pr\left\{ \left|\hat{\mu}(\bx_i)-\E[\hat{\mu}(\bx_i)]\right|+6e^{-\frac{\alpha d w_\text{max}^2}{8}}  \leq \eta \right\}\\
    &\geq \Pr\left\{ \left|\hat{\mu}(\bx_i)-\E[\hat{\mu}(\bx_i)]\right|  \leq \eta-6e^{-\frac{\alpha d w_\text{max}^2}{8}} \right\}\\
    &\geq 1 - 2\exp\left(-2 \left(\eta-6e^{-\frac{\alpha d w_\text{max}^2}{8}}\right)^2 T\right).
\end{aligned}
\end{equation}
Let $\rho = 2\exp\left(-2 \left(\eta-6e^{-\frac{\alpha d w_\text{max}^2}{8}}\right)^2 T\right)$, we have with probability at least $1-\rho$, $\rho\in(0, 1]$, 
\begin{equation}
    \left|\hat{\mu}(\bx_i)-\mu(\bx_i)\right| \leq 6e^{-\frac{\alpha d w_\text{max}^2}{8}} + \sqrt{\frac{1}{2T} \ln\frac{2}{\rho}}.
\end{equation}
Therefore, by plugging in the deviation $\left|\hat{\mu}(\bx_i)-\mu(\bx_i)\right|$ in (\ref{eq:proof-prop5-0}), we have with probability at least $1-\rho$
\begin{equation}
    \left| \hat{v}(\bx_i)-v(\bx_i) \right| \leq \left|\nu\right| \left|1+\nu\right| \leq \left(6e^{-\frac{\alpha d w_\text{max}^2}{8}} + \sqrt{\frac{1}{2T} \ln\frac{2}{\rho}}\right)\left(1+6e^{-\frac{\alpha d w_\text{max}^2}{8}} + \sqrt{\frac{1}{2T} \ln\frac{2}{\rho}}\right).
\end{equation}
\end{proof}

Proposition~\ref{prop:var-estimation} shows the sample complexity of estimating $v(\bx_i)$ with $T$ models for a single sample. 
In practice, one might need to estimate the score variance for multiple samples, e.g., computing average score variance over a test dataset.
Na\"ively, we need $nT$ models to estimate $v(\bx_i)$ for a dataset with $n$ samples.
In contrast, we can easily generalize the results in Proposition~\ref{prop:var-estimation} to $n$ samples by union bounds, and show that in such cases sample complexity grows only logarithmically under mild assumptions.
To be precise, since the samples $\bx_1, \bx_2, \cdots, \bx_n$ are i.i.d., we have the following union bound for the concentration of sample mean, i.e.,
\begin{equation}
\begin{aligned}
    \Pr\left\{ \bigcup_{i=1}^n \left\{ |\hat{\mu}(\bx_i)-\E[\hat{\mu}(\bx_i)| \geq \eta\right\}\right\} &\leq \prod_{i=1}^n \Pr\left\{  |\hat{\mu}(\bx_i)-\E[\hat{\mu}(\bx_i)| \geq \eta\right\} \leq 2n\exp(-2\eta^2 T).
\end{aligned}
\end{equation}
Therefore, with probability $1-\rho$, for all $i\in [n]$, $|\hat{\mu}(\bx_i)-\E[\hat{\mu}(\bx_i)| \leq \sqrt{\frac{1}{2T} \ln\frac{2n}{\rho}}$.
By following the proof of Proposition~\ref{prop:var-estimation}, with probability $1-\rho$, for all $i\in [n]$,
\begin{equation}
    \left| \hat{v}(\bx_i)-v(\bx_i) \right|  \leq \left(6ne^{-\frac{\alpha d w_\text{max}^2}{8}} + \sqrt{\frac{1}{2T} \ln\frac{2n}{\rho}}\right)\left(1+6ne^{-\frac{\alpha d w_\text{max}^2}{8}} + \sqrt{\frac{1}{2T} \ln\frac{2n}{\rho}}\right).
\end{equation}
Since the term $e^{-\frac{\alpha d w_\text{max}^2}{8}}$ could be made arbitrarily small, it can compensate the linear growth of $n$.

\newpage
\section{Discussion on predictive multiplicity metrics}\label{app:metrics}
\def\theequation{B.\arabic{equation}}
\def\thelem{B.\arabic{lem}}
\def\thefigure{B.\arabic{figure}}
\def\thetable{B.\arabic{table}}

We give a thorough introduction of predictive multiplicity metrics, including their mathematical formulation, operational meanings, and computational details. 
Predictive multiplicity metrics can be categorized into two groups: score-based and decision-based, where a decision is a thresholded score or the score vector after $\argmax$.
Precisely, consider a binary classification, if we have a score $s$, then the decision can be obtained by $\mathbbm{1}[s > \tau]$, where $ \tau$ is a threshold and $\mathbbm{1}[\cdot]$ is the indicator function.
For a $c$-class classification problem where $c > 2$, the score is a vector, say $\bs \in \Delta_c$, and the decision can be obtained by $\argmax\limits_{i \in [c]} [\bs]_i$.
In the following, we start with the decision-based metrics. 

First, consider a Rashomon set $\calR$, the pattern Rashomon ratio $r(\calD)$ of a dataset $\calD$   is defined as the ratio of the count of all possible binary predicted classes given by the functions in the Rashomon set to that given by the functions in the hypothesis space \citep[Defn.~12]{semenova2019study}, i.e, 
\begin{equation}
    r(\calD) \triangleq \frac{\sum_{i=0}^{2^n-1} \mathbbm{1}[\exists h_\bw \in \calR, [\argmax h_\bw(\bx_1), \cdots, \argmax h_\bw(\bx_n)] = \bin(i)]}{\sum_{i=0}^{2^n-1}\mathbbm{1}[\exists h_\bw \in \calH, [\argmax h_\bw(\bx_1), \cdots, \argmax h_\bw(\bx_n)] = \bin(i)]},
\end{equation}
where $\bin(\cdot)$ denotes the vector of the binary representation of an integer, e.g., $5 \equiv [0, \cdots, 0, 1, 0, 1]$.
Computing pattern Rashomon ratio involves a summation with the number of terms grows exponentially fast with the number of samples $n$.
Therefore, pattern Rashomon ratio is an ``expensive'' metric for predictive multiplicity when evaluated on large datasets.

There are several decision-based predictive multiplicity metrics that use the disagreement among decisions given by models in the Rashomon set under difference disguise \citep{black2022model, d2020underspecification, marx2020predictive}. 
For example, \citet{marx2020predictive} propose two metrics: ambiguity and discrepancy. 
Ambiguity is the proportion of samples in a dataset that can be assigned conflicting predictions by competing
classifiers in the Rashomon set. 
Discrepancy is the maximum number of predictions that could change in a dataset if we were to switch between models within the Rashomon set. 
More precisely, given a pre-trained model $h_{\bw^*}$, the ambiguity $\alpha(\calD)$ and the discrepancy $\delta(\calD)$ are respectively defined as \citep[Definitions~3 and~4]{marx2020predictive}
\begin{equation}
\begin{aligned}
\alpha(\calD) &\triangleq \frac{1}{|\calD|} \sum\limits_{\bx_i\in\calD} \max\limits_{h_\bw \in \calR} \mathbbm{1} \left[ \argmax h_\bw(\bx_i) \neq \argmax h_{\bw^*}(\bx_i) \right]\\
\delta(\calD) &\triangleq \max\limits_{h_\bw \in \calR} \frac{1}{|\calD|} \sum\limits_{\bx_i\in\calD}  \mathbbm{1} \left[ \argmax h_\bw(\bx_i) \neq \argmax h_{\bw^*}(\bx_i) \right]
\end{aligned}
\end{equation}
Both ambiguity and discrepancy can be estimated by a mixed integer program \citep[Section~3]{marx2020predictive}.

Moreover, instead of computing the empirical mean of decision disagreement over a dataset $\calD$, we may define disagreement directly using the notion of probability. 
Precisely, \citet[Section~A.1]{black2022model} and \citet[Eq.~(4)]{kulynych2023arbitrary} propose the following quantity to measure disagreement:
\begin{equation}
    \mu(\bx_i) \triangleq 2 \Pr\{\mathbbm{1}[h_\bw(\bx_i)>\tau] \neq \mathbbm{1}[h_\bw'(\bx_i)>\tau]; h_\bw, h_\bw' \in \calR\},
\end{equation}
where $h_\bw$ and $h_\bw'$ are any two models in the Rashomon set and the factor $2$ ensures that $\mu(\bx_i)$ is in the $[0, 1]$ range for the ease of interpretation.
\citet{kulynych2023arbitrary} further proposed a plug-in estimator to estimate disagreement for binary classification with a sample complexity bound on the number of models obtained by re-training.

On the other hand, score-based metrics focus on the spread of the output scores \citep{hsu2022rashomon, watson2023predictive, long2023arbitrariness}. 
Borrowing from information theory, \citet[Definition~2]{hsu2022rashomon} measures the spread of output scores for $c$-class classification problems in the probability simplex $\Delta_c$ by an analog of channel capacity, termed the Rashomon Capacity, i.e.,
\begin{equation}
    c(\bx_i)\triangleq \sup\limits_{P_{\calR}} \inf\limits_{\bq \in \Delta_c} \E_{h_\bw \sim P_{\calR}} D_\textsf{KL}(h_\bw(\bx_i)\|\bq),
\end{equation}
where $P_{\calR}$ is the probability distribution over the models in the Rashomon set, and $D_\textsf{KL}(\cdot\|\cdot)$ is the Kullback-Leibler (KL) divergence.
The infimum $\inf\limits_{\bq \in \Delta_c} \E_{h_\bw \sim P_{\calR}} D_\textsf{KL}(h_\bw(\bx_i)\|\bq)$ measures (in the sense of KL divergence) the spread of the scores of a sample $\bx_i$ given a distribution $P_\calR$ over all all the models $h_\bw$ in the Rashomon set, where the minimizing $q$ acts as a ``centroid'' for the outputs of the classifiers.
The supremum picks the worst-case distribution $P_\calR$ over all possible distributions in the Rashomon set.
The Rashomon Capacity is closely connected to the information diameter, a metric to measure the diameter of a set of probability distributions \citep{kemperman1974shannon}.
\citet{hsu2022rashomon} proved that the Rashomon Capacity can be estimated using $c$ models in the Rashomon set, obtained by the adversarial weight perturbation (AWP) algorithm, and the Blahut-Aromoto algorithm \citep{blahut1972computation, arimoto1972algorithm}.

Based on ambiguity and discrepancy, \citet[Definition~2]{watson2023predictive} proposed Viable Prediction Range (VPR) $v(\bx_i)$, which is the largest score deviation of a sample that can be achieved by models in the Rashomon set:
\begin{equation}
    v(\bx_i)\triangleq \max\limits_{h_\bw \in \calR} h_\bw(\bx_i) - \min\limits_{h_\bw \in \calR} h_\bw(\bx_i).
\end{equation}
The VPR can be computed using similar mixed integer programs in \citet{marx2020predictive} for binary classification with linear classifiers. 

Instead of computing the largest score deviation of a sample, \citet[Definition~2]{long2023arbitrariness} measures the standard deviation (std.) $s(\bx_i)$ (and the variance (var.)) of the scores of a sample by all models in the Rashomon set:
\begin{equation}
    s(\bx_i) \triangleq \sqrt{\E_{h_\bw\sim P_\calR}[(h_\bw(\bx_i) - \E_{h_\bw\sim P_\calR}[ h_\bw(\bx_i)])^2] }.
\end{equation}
The score standard deviation can be estimated using the re-training strategy.
 
\begin{table}[!t]
\small
\caption{\small
Predictive multiplicity metrics implemented in this paper. 
}
\label{app-tab:multiplicity-metrics-implemented}
\begin{center}
\begin{tabular}{ll}
\toprule
\multicolumn{1}{c}{\bf Metrics}  & \multicolumn{1}{c}{\bf GitHub}\\
\midrule
Viable Prediction Range  & not included\\
Score std./var. & not included\\
Rashomon Capacity & \url{https://github.com/HsiangHsu/rashomon-capacity}\\
Disagreement & \url{https://github.com/spring-epfl/dp_multiplicity}\\
Ambiguity & \multirow{2}{*}{\url{https://github.com/charliemarx/pmtools}}\\
Discrepancy & \\
\bottomrule
\end{tabular}
\end{center}
\end{table}

 {\paragraph{Computation.}
There are some theoretical analyses of estimating predictive multiplicity metrics. For example, the exact estimation of ambiguity and discrepancy for linear classifiers are studied in \citet{marx2020predictive}. Similarly, the theoretical analysis of estimating VPR with logistic regression is also provided in \citet{watson2023predictive}. Moreover, the estimation of disagreement is also discussed in \citet{kulynych2023arbitrary}. However, the statistical properties of estimating other metrics such as the Rashomon Capacity, or with a general hypothesis space, still remain an open challenge.}
Finally, we summarize the GitHub implementation for computing the predictive multiplicity metrics in Table~\ref{app-tab:multiplicity-metrics-implemented}. For metrics where the implementations are not included, we implement them directly following the mathematical definitions.

 {
\paragraph{The AWP algorithm.}
The Adversarial Weight Perturbation algorithm, as proposed in \citet[Eq.~9]{hsu2022rashomon} , will adversarially perturb the output scores of a given sample to each class. For example, if there are $c$ classes, AWP maximizes each entries of $h\_w(x)$, i.e., $[h\_w(x)]\_i$ for each $i \in [c]$, under the constraint that the model after perturbation is still inside the Rashomon set.
Given a sample $\bx_i$, we obtain models with output predictions $\bp_k$ by approximately solving the following optimization problem which maximizes the output score for class $k$: 
\begin{equation}\label{eq:search-parameter}
    \bp_k = h_{\hat{\theta}}(\bx_i),\;\text{where}\; \hat{\theta} = \argmax \limits_{h_\theta\in\calR(\calH, \epsilon)} [h_\theta(\bx_i)]_k,\;\forall k = 1, 2, \cdots, c.
\end{equation}
To solve \eqref{eq:search-parameter}, for each $k$, we set the objective to be $\min_{\theta\in\Theta}-[h_{\theta}(\bx_i)]_k$, compute the gradients, and update the parameter $\theta$ until $L(h_\theta) > \epsilon$.
Therefore, for one sample, AWP requires to perform one perturbations (adversarial training) via SGD for each class (i.e., $c$ adversarial training in total for one sample), and if there is a dataset with $n$ samples, we need $n\times c$ adversarial training to estimate multiplicity metrics for the entire dataset. We can imagine that each of these $n\times c$ perturbations leads to a model in the Rashomon set that outputs the most inconsistent score for a given class per sample. On the other hand, the dropout method does not require to iterate over all samples and does not require gradient computations and updating model weights. In this sense, AWP is very different from the proposed dropout strategy.
}

\clearpage
\section{Additional discussion on dropout and prediction uncertainty}\label{app:prediction-uncertainty}
\def\theequation{C.\arabic{equation}}
\def\thelem{C.\arabic{lem}}
\def\thefigure{C.\arabic{figure}}
\def\thetable{C.\arabic{table}}

We introduce dropout inference for prediction uncertainty proposed in \citet{gal2016dropout}, and discuss its difference from the notion of Rashomon set and predictive multiplicity.

\citet{gal2016dropout} proposed a theoretical understanding of dropout at inference with Bayesian approximation.
In the Bayesian framework, the conditional probability $p(y| \bx, \calD)$ of the output $y$ for an unseen input $\bx$ is viewed as a deep Gaussian process \citep{damianou2013deep}, and can be expressed as
\begin{equation}\label{eq:bayesian-prob}
    p(y| \bx, \calD) = \int_{\bw\in \calW} p(y| \bx, \bw) p(\bw| \calD) d\bw.
\end{equation}
Here, $p(\bw| \calD)$ is the distribution of the weights given the training set $\calD$, and is usually intractable in practice. 
By substituting the true distribution $p(\bw| \calD)$ with an approximation $q(\bw)$, the posterior $p(y| \bx, \calD)$ can be computed as
\begin{equation}\label{eq:bayesian-prob-approx}
    p(y| \bx, \calD) \approx \int_{\bw\in \calW} p(y| \bx, \bw) q(\bw) d\bw.
\end{equation}
$q(\bw)$ is assumed to follow a distribution parameterized by $\theta$, and a suitable approximation $q(\bw)$ can be found by minimizing the KL divergence $\min\limits_{\theta }D_\textsf{KL}(p(\bw| \calD)\|q(\bw))$.

\citet{gal2016dropout} used dropout for $q(\bw)$ with single-hidden-layer networks, and interpret dropout inference as as a Monte Carlo sampling that is equivalent to estimate characteristics of the underlying distribution $p(y| \bx, \calD)$ in (\ref{eq:bayesian-prob-approx}). 
Therefore, the variance of the prediction $y$ can be estimated using $p(y| \bx, \calD)$, and can be taken to indicate the \emph{prediction uncertainty} of the model for a particular input.
Later on, \citet{lee2017deep} generalized the analysis in \citet{gal2016dropout} to multi-hidden-layer neural networks. 

Note that the integral in (\ref{eq:bayesian-prob-approx}) is computed over the set of all possible models $p(y| \bx, \bw)$ in the hypothesis space $\calW$; however, not all models provide satisfactory performance in terms of loss and accuracy, i.e., not all models are in the Rashomon set. 
In other words, the posterior we are interested in is
\begin{equation}\label{eq:bayesian-prob-rashomon}
    p(y| \bx, \calD) = \int_{\bw\in \calR(\bw^*, \epsilon)} p(y| \bx, \bw) p(\bw| \calD) d\bw.
\end{equation}
The difference between the integrals in (\ref{eq:bayesian-prob}) and (\ref{eq:bayesian-prob-rashomon}) is the main difference between the consideration of prediction uncertainty and predictive multiplicity.
In Table~\ref{tab:pu-vs-pm}, we use the UCI Credit Approval dataset to compare the predictive uncertainty, i.e., score variance evaluated with (\ref{eq:bayesian-prob}), against score variance evaluated with (\ref{eq:bayesian-prob-rashomon}), where the Rashomon parameter $\epsilon$ is the mean loss deviation under each dropout parameters $p$ for Bernoulli dropout and $\alpha$ for Gaussian dropout. 
For the sake of illustration, we show the value of the 80\% quantile of the score variances across all samples. 
It is clear that without the consideration of the Rashomon set, both dropout techniques could include models that are not in the Rashomon set (i.e., models that have loss deviations larger than $\epsilon$), and hence over-estimating the score variance.

\begin{table}[!b]
\small
\caption{\small Comparison between the score variances computed with (\ref{eq:bayesian-prob}) and (\ref{eq:bayesian-prob-rashomon}) on the Credit Approval dataset. It is clear that using (\ref{eq:bayesian-prob}), i.e., without the consideration of the Rashomon set, the score variance is over-estimated.}
\footnotesize
\label{tab:pu-vs-pm}
\begin{center} 
\begin{tabular}{lcccccc}
\toprule
                    & \multicolumn{3}{c}{\bf Bernoulli Dropout} & \multicolumn{3}{c}{\bf Gaussian Dropout}\\
\cmidrule(lr){2-4} \cmidrule(lr){5-7}
Dropout Parameters  & $p=0.020$ & $p=0.040$ & $p=0.060$  & $\alpha=0.120$ & $\alpha=0.210$  & $\alpha=0.330$\\
Rashomon Parameters & $\epsilon=0.003$ & $\epsilon=0.006$ & $\epsilon=0.007$ & $\epsilon=0.002$ & $\epsilon=0.005$ & $\epsilon=0.011$\\
\midrule
Score Var. with (\ref{eq:bayesian-prob}) & 0.001131 & 0.002051 & 0.003280 & 0.000385 & 0.001211 & 0.003000\\
Score Var. with (\ref{eq:bayesian-prob-rashomon}) & 0.000010 & 0.000114 & 0.001699 & 0.000067 & 0.000285 & 0.000858  \\
\bottomrule
\end{tabular}
\end{center}
\end{table}

\newpage
\section{Details on the experimental setup}\label{app:exp-setup}
\def\theequation{D.\arabic{equation}}
\def\thelem{D.\arabic{lem}}
\def\thefigure{D.\arabic{figure}}
\def\thetable{D.\arabic{table}}

We summarize the dataset descriptions, training setups and the results.

\subsection{Dataset description and pre-processing}\label{app:dataset-description}
The datasets used in this paper are the 6 datasets from the UCI machine learning repository \citep{asuncion2007uci} and the Microsoft COCO dataset \citep{lin2014microsoft}.
The UCI machine learning repository is a well-known and widely used collection of 650 datasets for machine learning research and experimentation, and contains a diverse and extensive collection of datasets across various domains.
We select 6 datasets that may possess critical consequences if predictive multiplicity is not accounted for. 
The first three datasets are related to financial applications, including Adult Income, Bank Marketing and Credit Approval, and the other three are related to medical, including Dermatology, Mammography and Contraception.
The Adult Income dataset aims to predict whether the income of an individual exceeds 50,000 per year based on 1994 census data. 
The Bank Marketing dataset is related with direct marketing campaigns of a Portuguese banking institution based on phone calls in order to predict if the client will subscribe a bank term deposit or not. 
The Credit Approval dataset concerns the prediction of successful credit card application with anonymized feature names. 
The Dermatology dataset aims to determine the differential diagnosis of erythemato-squamous diseases in dermatology. 
The Mammography dataset aims to discriminate between benign and malignant mammographic masses based on BI-RADS attributes and the patient's age.
The Contraception dataset contains samples from married women who were either not pregnant or do not know if they were at the time of interview of the 1987 National Indonesia Contraceptive Prevalence Survey, and the goal is to predict the current contraceptive method choice (no use, long-term methods, or short-term methods) of a woman based on her demographic and socio-economic characteristics.
For these datasets, we remove samples with missing values, one-hot encoded nominal features, re-scale numeric features, and set the target label name to be 1 and the rest to be 0. 
For the number of features, training/test split and the label names, see Table~\ref{tab:uci-data-description}.

The Microsoft COCO (Common Objects in Context) dataset is a benchmark dataset with 200,000 images spanning 80 object categories, comprising a vast collection of high-quality images annotated with rich and detailed object segmentation, captioning, and keypoint information.
For the image datasets, we follow standard normalization procedures to normalize the values of each pixel in each color channel. 

\begin{table}[!b]
\small
\caption{UCI dataset descriptions.}
\label{tab:uci-data-description}
\begin{center}
\begin{tabular}{lcccl}
\toprule
\multicolumn{1}{c}{\bf Dataset}  & {\bf \# of features} & {\bf Training set size} & {\bf Test set size} & \multicolumn{1}{c}{\bf Label (\# of classes)}\\
\midrule
Adult Income    & 104 & 22621 & 7541 & income $>$50K (2) \\
Bank Marketing            & 63 & 30891 & 10297 & has deposit (2) \\
Credit Approval & 46 & 489 & 164 & approved (2) \\
Dermatology     & 34 & 268 & 90 & has erythema (2) \\
Mammography     & 5 & 622 & 208 & benign or malignant (2) \\
Contraception   & 9 & 1104 & 369 & long or short term (2) \\
\bottomrule
\end{tabular}
\end{center}
\end{table}

\subsection{Training setups and results.}\label{app:dataset-training}
For the UCI datasets, we use a single-hidden-layer neural network with 1000 hidden neurons, and train for 100 epochs with learning rate 0.001 and batch size 100. 
The performance of the base models, including training and test losses and accuracy, is included in Table~\ref{tab:uci-data-base-model-results}.
The re-training strategy follows the same setting but with fewer epochs. 
For Adult Income dataset, we re-train 100 times for each epoch $20,25,30,35,40,45,50,55$.
For Bank Marketing dataset, we re-train 100 times for each epoch $1,2,3,4,5,6,7,8,9$.
For Credit Approval dataset, we re-train 100 times for each epoch $35,40,45,50,55,60$.
For Dermatology dataset, we re-train 100 times for each epoch $1,2,3,4,5,6,7,8,9$.
For Mammography dataset, we re-train 100 times for each epoch $11,12,13,14,15,16,17,18,19$.
For Contraception dataset, we re-train 100 times for each epoch $11,12,13,14,15,16,17,18,19$.
Similarly, for Bernoulli and Gaussian dropouts, we obtain 100 models each with 21 values of $p$ and $\alpha$ spread evenly between $0.2$ and $0.6$ respectively.
Finally, we perform adversarial weight perturbation on four datasets (Credit Approval, Dermatology, Mammography and Contraception), with $\epsilon \in \{0.000,0.004,0.008,0.012,0.016,0.020,0.024,0.028,0.032,0.036,0.040\}$ and the perturbation learning rate is 0.001.

For MS COCO, we adopt two pre-trained models (the YoloV3 \citep{redmon2018yolov3} and Mask R-CNN \citep{MaskRCNN_He_2017_ICCV} detectors) in MMDetection \citep{mmdetection} to validate the Rashomon effect on the detection models with the proposed dropout inference. 
For the YoloV3 detector, we use $p$ ranged from 5e-5 to 1e-4 for Bernoulli dropout and use $\alpha$ ranged from 0.03 to 0.1 for Gaussian dropout; while 
for the Mask R-CNN detector, we use $p$ ranged from 5e-5 to 3e-4 for Bernoulli dropout and use $\alpha$ ranged from 0.01 to 0.07 for Gaussian dropout.
Then, we choose the models whose average precision on person class is within $\epsilon \in \{0.005,0.006,0.007,0.008,0.009,0.010,0.011,0.012\}$.

\begin{table}[t]
\small
\caption{UCI dataset base model results.}
\label{tab:uci-data-base-model-results}
\begin{center}
\begin{tabular}{lcccc}
\toprule
\multicolumn{1}{c}{\multirow{2}{*}{\bf Dataset}}  & \multicolumn{2}{c}{\bf Training}  & \multicolumn{2}{c}{\bf Test}\\
\cmidrule(lr){2-3} \cmidrule(lr){4-5}
               & Loss & Accuracy & Loss & Accuracy\\
\midrule
Adult Income    & 0.3026 & 85.96\% & 0.3190 & 85.00\% \\
Bank Marketing            & 0.1962 & 91.27\% & 0.2013 & 91.06\% \\
Credit Approval & 0.3149 & 88.14\% & 0.2975 & 87.80\% \\
Dermatology     & 0.1138 & 99.25\% & 0.1248 & 98.89\% \\
Mammography     & 0.3604 & 84.24\% & 0.4920 & 79.33\% \\
Contraception   & 0.5658 & 73.64\% & 0.5986 & 69.38\% \\
\bottomrule
\end{tabular}
\end{center}
\end{table}

\clearpage
\section{Additional results and experiments}\label{app:additional-exp}
\def\theequation{E.\arabic{equation}}
\def\thelem{E.\arabic{lem}}
\def\thefigure{E.\arabic{figure}}
\def\thetable{E.\arabic{table}}

We include (i) the deviations of loss and accuracy versus different dropout parameters, $p$ for Bernoulli dropout and $\alpha$ for Gaussian dropout, (ii) the complete values of the estimated predictive multiplicity metrics, (iii)the comparison of the efficiency between re-training and dropout-based Rashomon set exploration, (iv) addition experiments on CIFAR-10/-100 datasets, and (v) ablation studies.

\subsection{Additional results on UCI datasets}\label{app:add-exp-tabular}
In Figure~\ref{fig:uci-datasets-loss-acc}, we show the deviation of loss and accuracy versus different dropout parameters for both Bernoulli ($p\in [0.0, 0.2]$) and Gaussian ($\alpha\in [0.0, 0.6]$) dropouts. 
As we can see, the dropout parameters can stably control the loss deviation and the according accuracy changes. 
Note that for all dropout parameters, the accuracy change is $\approx 1\%$.
Moreover, the standard deviations of the loss difference $\epsilon$ and the accuracy are determined by the number of samples in the datasets---not necessarily the accuracy of the base models. 
For example, the Adult Income and Bank Marketing datasets have the most number of samples $>20000$, and hence has the smallest standard deviations. 
On the other hand, despite that the Dermatology dataset also has $98\%$ accuracy, it suffers from a relatively large standard deviation in terms of loss difference and accuracy.

Figure~\ref{fig:adult-score-all}, Figure~\ref{fig:bank-score-all}, Figure~\ref{fig:credit-score-all}, Figure~\ref{fig:dermatology-score-all}, Figure~\ref{fig:mammo-score-all}, and Figure~\ref{fig:contrac-score-all} summarize the cumulative distributions of the predictive multiplicity metrics that are defined across all samples, and Figure~\ref{fig:all-datasets-decision} shows the ambiguity and discrepancy, for UCI Adult Income, Bank Marketing, Credit Approval, Dermatology, Mammography, and Contraception datasets \citep{asuncion2007uci}. 
Since all estimates of predictive multiplicity metrics are under-estimate of their true values, and therefore higher values indicate a more effective method for estimation.
Observing viable prediction range, score variance, and the Rashomon Capacity, it is clear that both Bernoulli and Gaussian dropouts are more effective in terms of estimating predictive multiplicity metrics compared to re-training. 
Moreover, Bernoulli and Gaussian dropouts are also comparable to the adversarial weight perturbation algorithm. 
Note that we do not perform adversarial weight perturbation algorithm on Adult Income and Bank Marketing datasets since the number of samples is too large, which causes extreme time complexity.

Table~\ref{tab:uci-data-runtime} summarizes the raw runtime values of four strategies, and the speedups are reported in Table~\ref{tab:all-data-runtime-gaussian}.
The runtimes for Bernoulli and Gaussian dropouts are in a similar order, and therefore in Table~\ref{tab:all-data-runtime-gaussian} we only report the speedup of Gaussian dropout. 
The adversarial weight perturbation algorithm costs the most time since it has to scan over all samples and perturb (i.e., train) the weights. 

Finally, Table~\ref{tab:uci-data-rashomon-ratio} shows the efficiency of obtaining models from the Rashomon set with different Rashomon parameters $\epsilon$, compared with the re-training strategy. 
Note that it is hard to control the eventual loss value with the re-training strategy, and therefore we perform the re-training strategy will different number of epochs, in order to obtain models corresponding to different $\epsilon$ in the Rashomon set. 
On the other hand, it is easier to control $\epsilon$ using dropout techniques, as supported in Figure~\ref{fig:uci-datasets-loss-acc}.
For example, when $\epsilon = 0.004$, there are only less than $3\%$ of the models obtained by the re-training strategy that are in the Rashomon set; however, by carefully controlling dropout parameters, there are more than $70\%$ of the models obtained by Gaussian dropout that are in the Rashomon set.
Moreover, Gaussian dropout is more ``benign'' compare to Bernoulli dropout in terms of changing the model weights, since Bernoulli dropout directly removes the weights while Gaussian dropout only scales them. 

 {
In Figure~\ref{fig:adult-rebuttal-lr-exploration}, we show the model weights of logistic regression learnt with the UCI Adult Income dataset, explored by the re-training and dropout strategies. 
Since the model weights (features) is high-dimensional, we apply t-SNE visualization \citep{van2008visualizing} on the model weights for the sake of illustration. 
As observed, the model weights obtained by he re-training and dropout strategies largely overlapped with each other, indicating that they have similar performance in terms of exploring the Rashomon set when the hypothesis space is small. 
Note that there is no closed-form characterization of the Rashomon set for logistic regression. 
}

\begin{figure}[!b] 
\begin{subfigure}{0.32\textwidth}
\includegraphics[width=\linewidth]{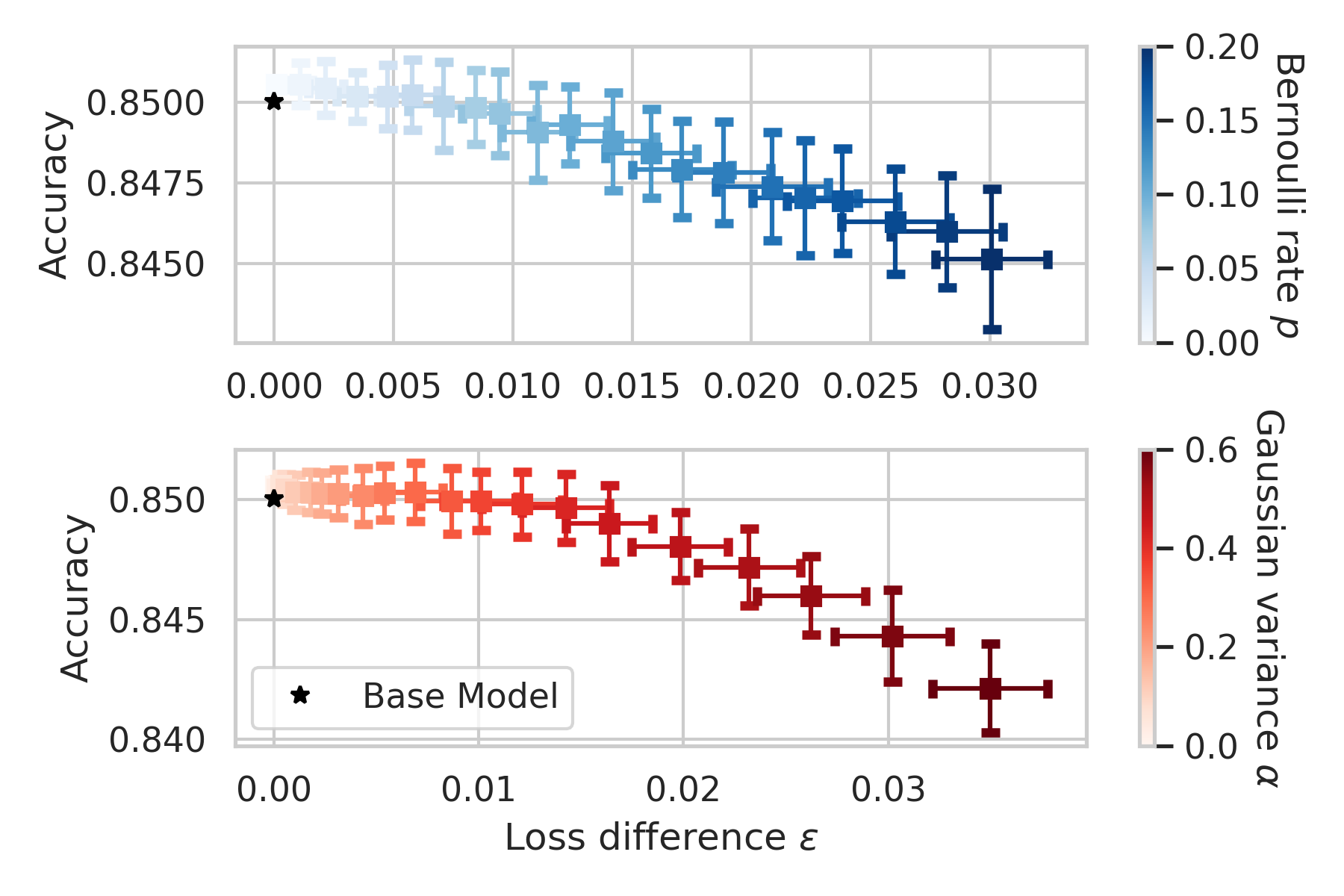}
\caption{Adult Income dataset.} \label{fig:adult-loss-acc}
\end{subfigure}
\begin{subfigure}{0.32\textwidth}
\includegraphics[width=\linewidth]{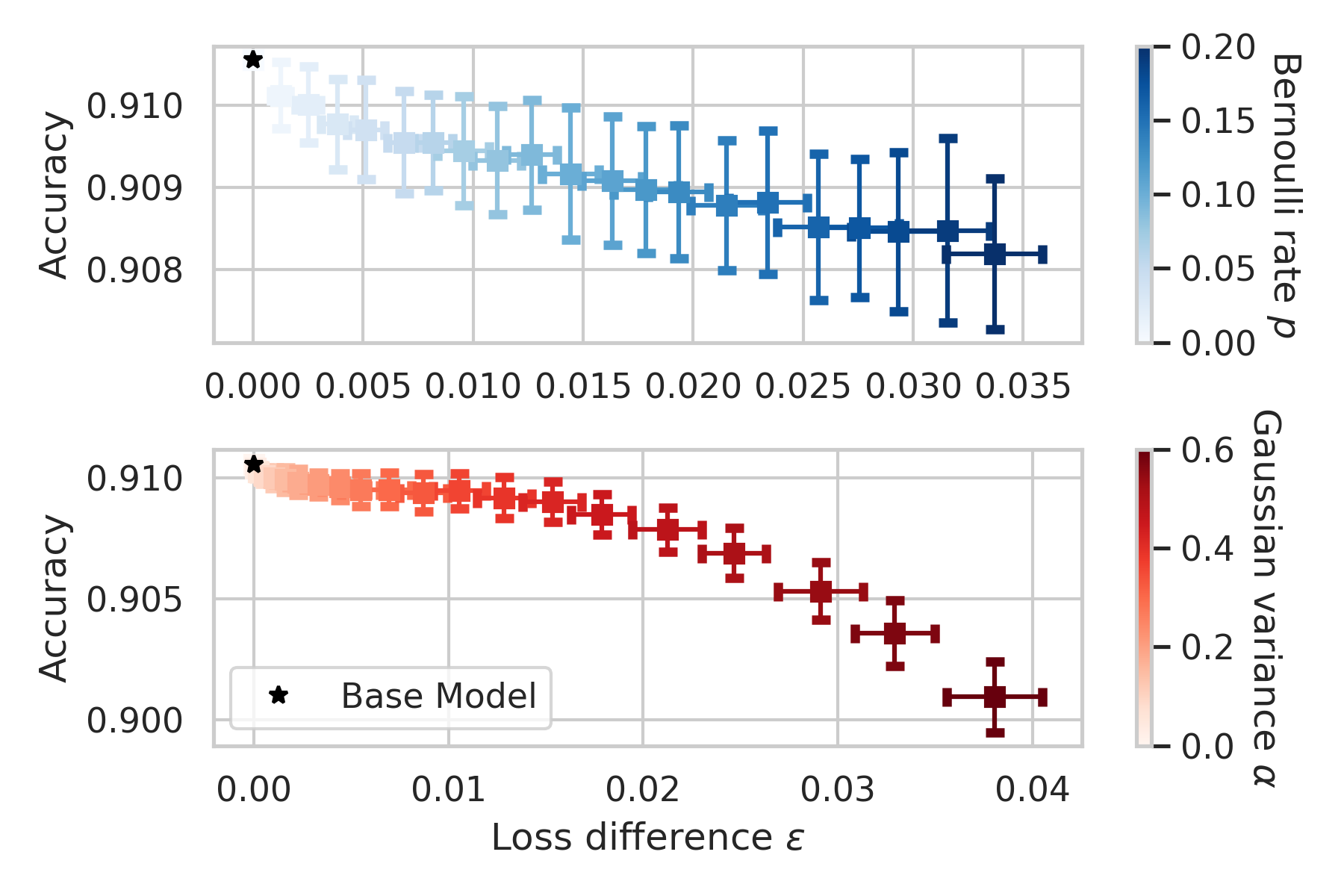}
\caption{Bank Marketing dataset.} \label{fig:bank-loss-acc}
\end{subfigure}
\begin{subfigure}{0.32\textwidth}
\includegraphics[width=\linewidth]{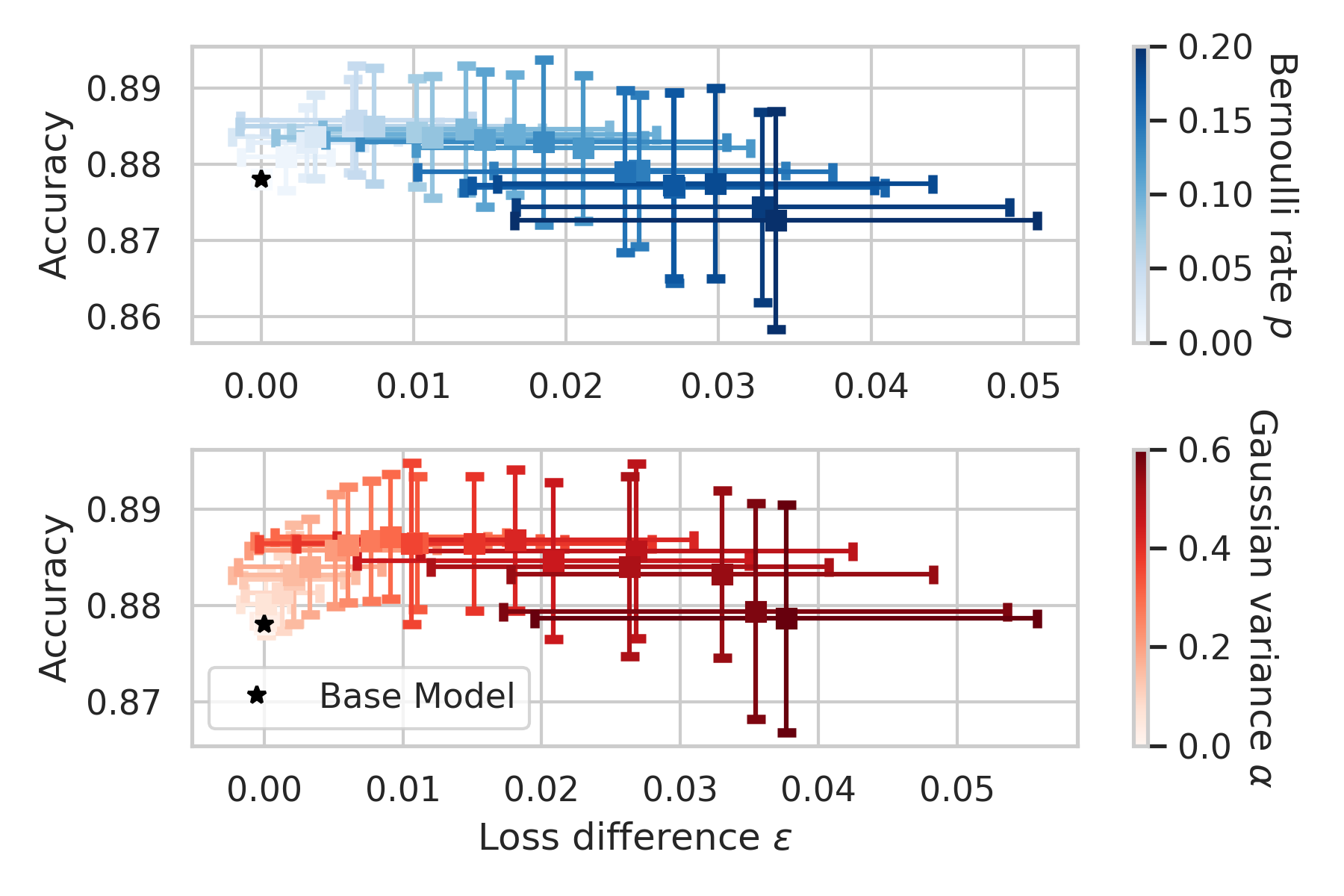}
\caption{Credit Approval dataset.} \label{fig:credit-loss-acc}
\end{subfigure}

\medskip
\begin{subfigure}{0.32\textwidth}
\includegraphics[width=\linewidth]{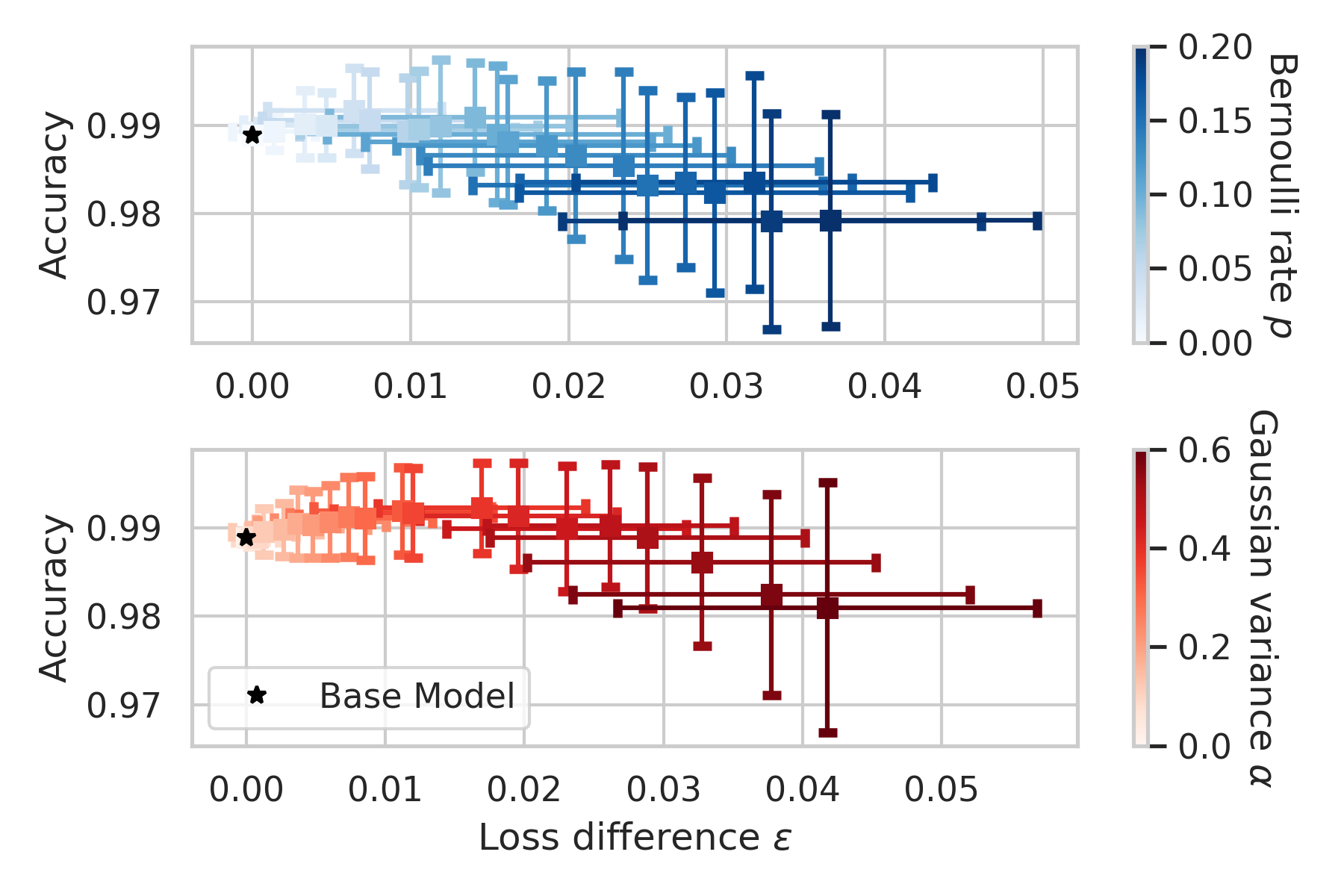}
\caption{Dermatology dataset.} \label{fig:derma-loss-acc}
\end{subfigure}
\begin{subfigure}{0.32\textwidth}
\includegraphics[width=\linewidth]{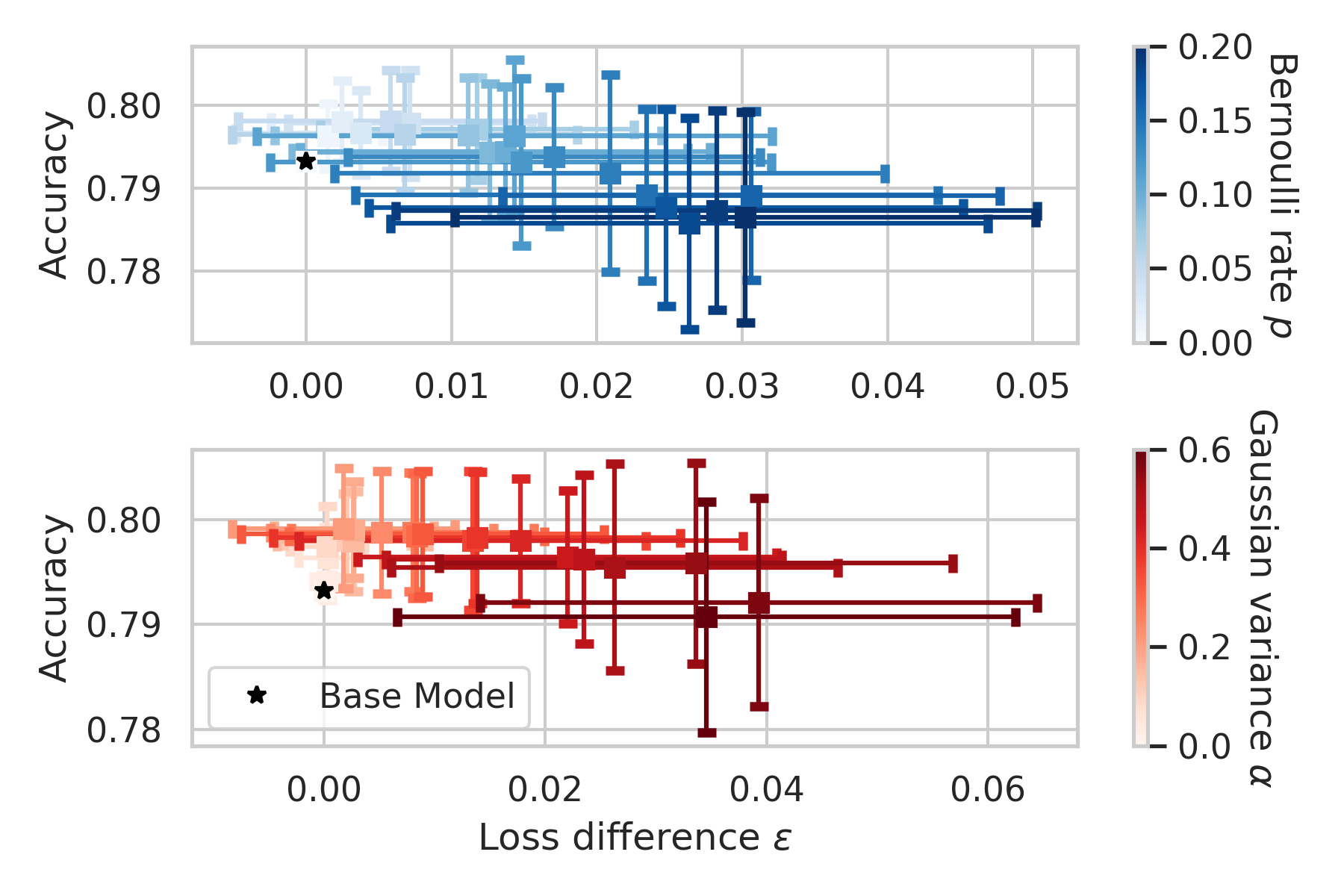}
\caption{Mammography dataset.} \label{fig:mammo-loss-acc}
\end{subfigure}
\begin{subfigure}{0.32\textwidth}
\includegraphics[width=\linewidth]{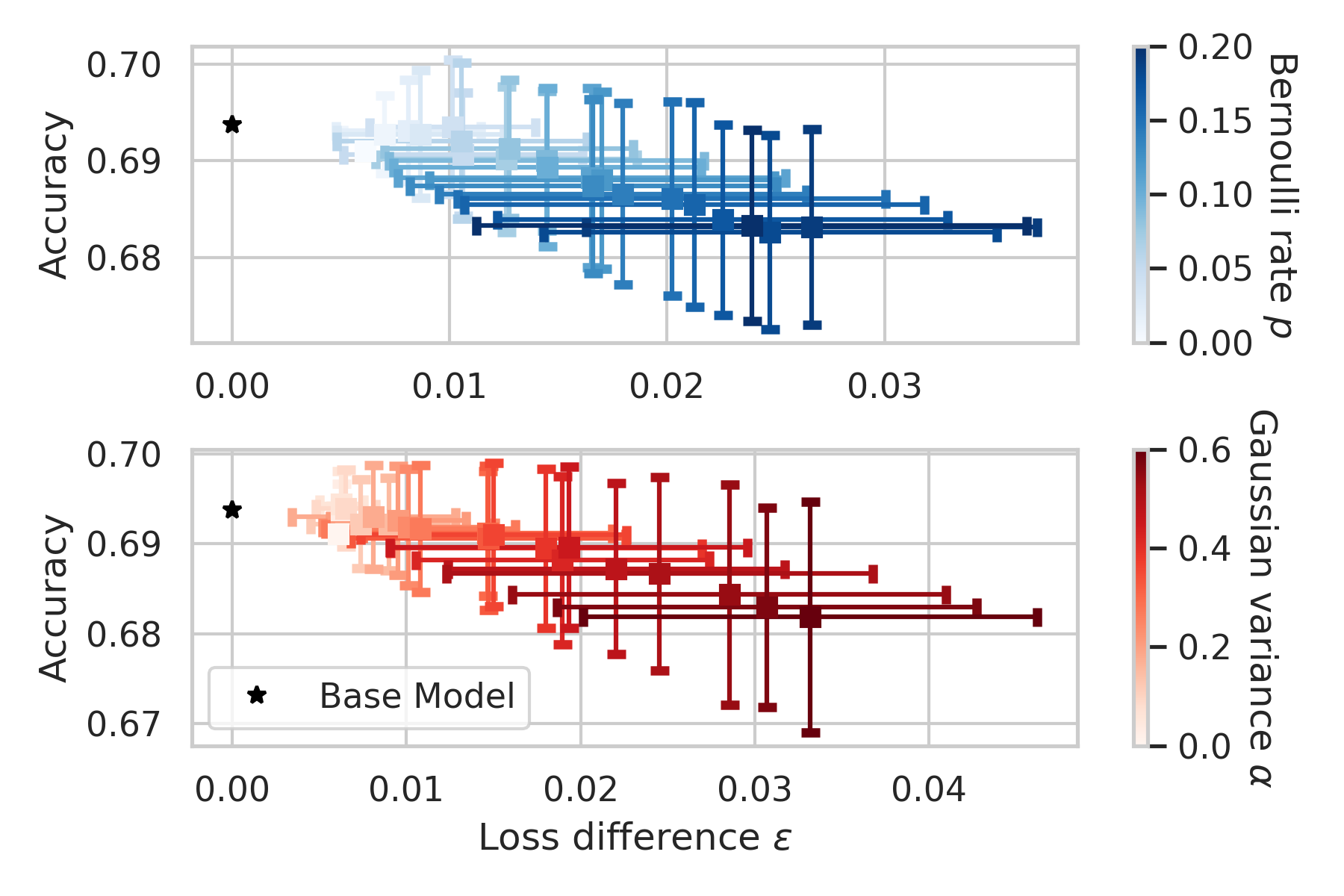}
\caption{Contraception dataset.} \label{fig:contrac-loss-acc}
\end{subfigure}

\caption{Loss and accuracy deviations versus Bernoulli and Gaussian dropouts on the UCI datasets.} \label{fig:uci-datasets-loss-acc}
\end{figure}

\begin{figure}[!tb]
\begin{center}
\includegraphics[width=.8\textwidth]{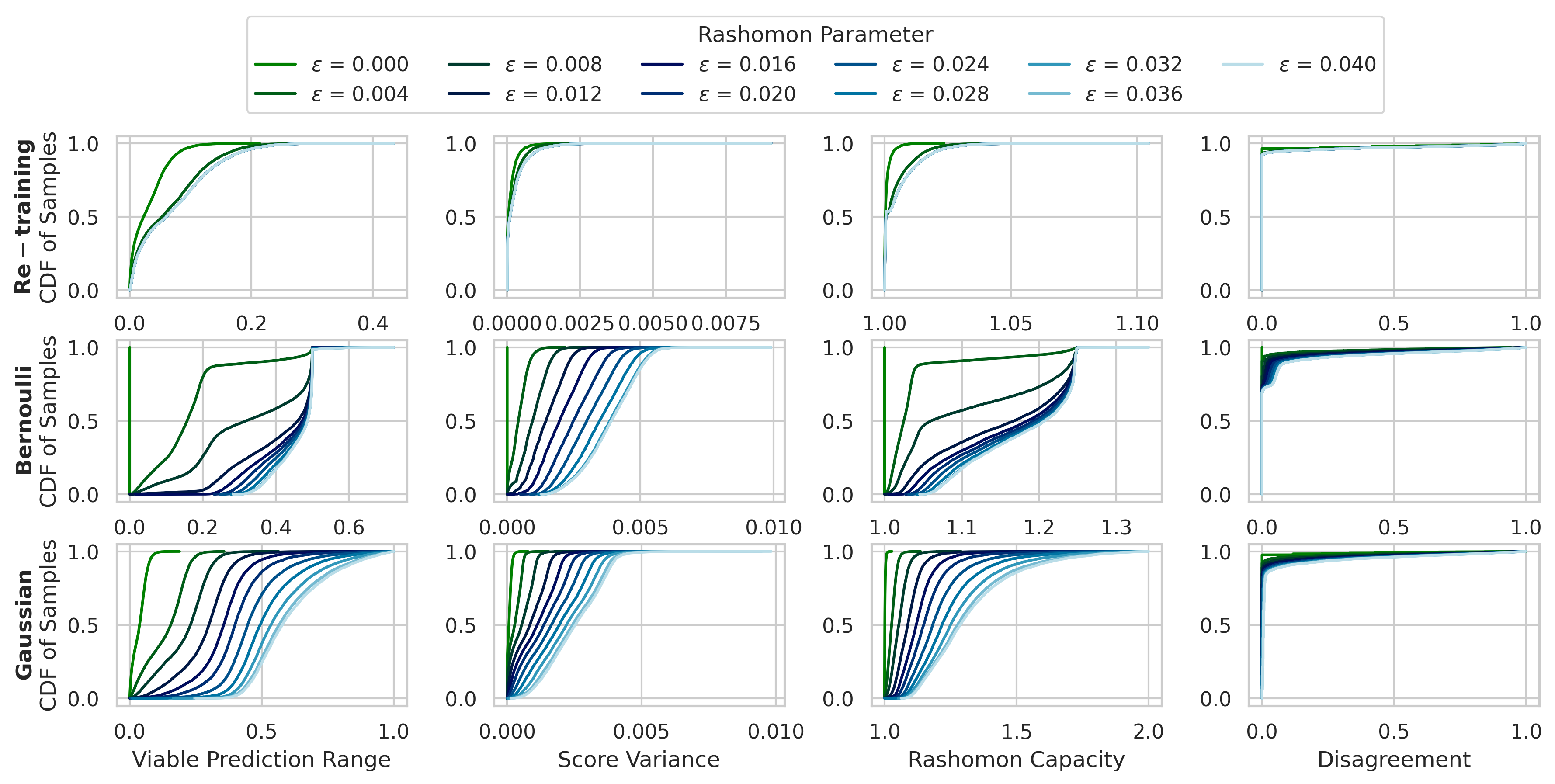}
\end{center}
\caption{Cumulative distributions of the predictive multiplicity metrics that are defined across all samples in the UCI Adult Income dataset.}\label{fig:adult-score-all}
\end{figure}

\begin{figure}[!tb]
\begin{center}
\includegraphics[width=.8\textwidth]{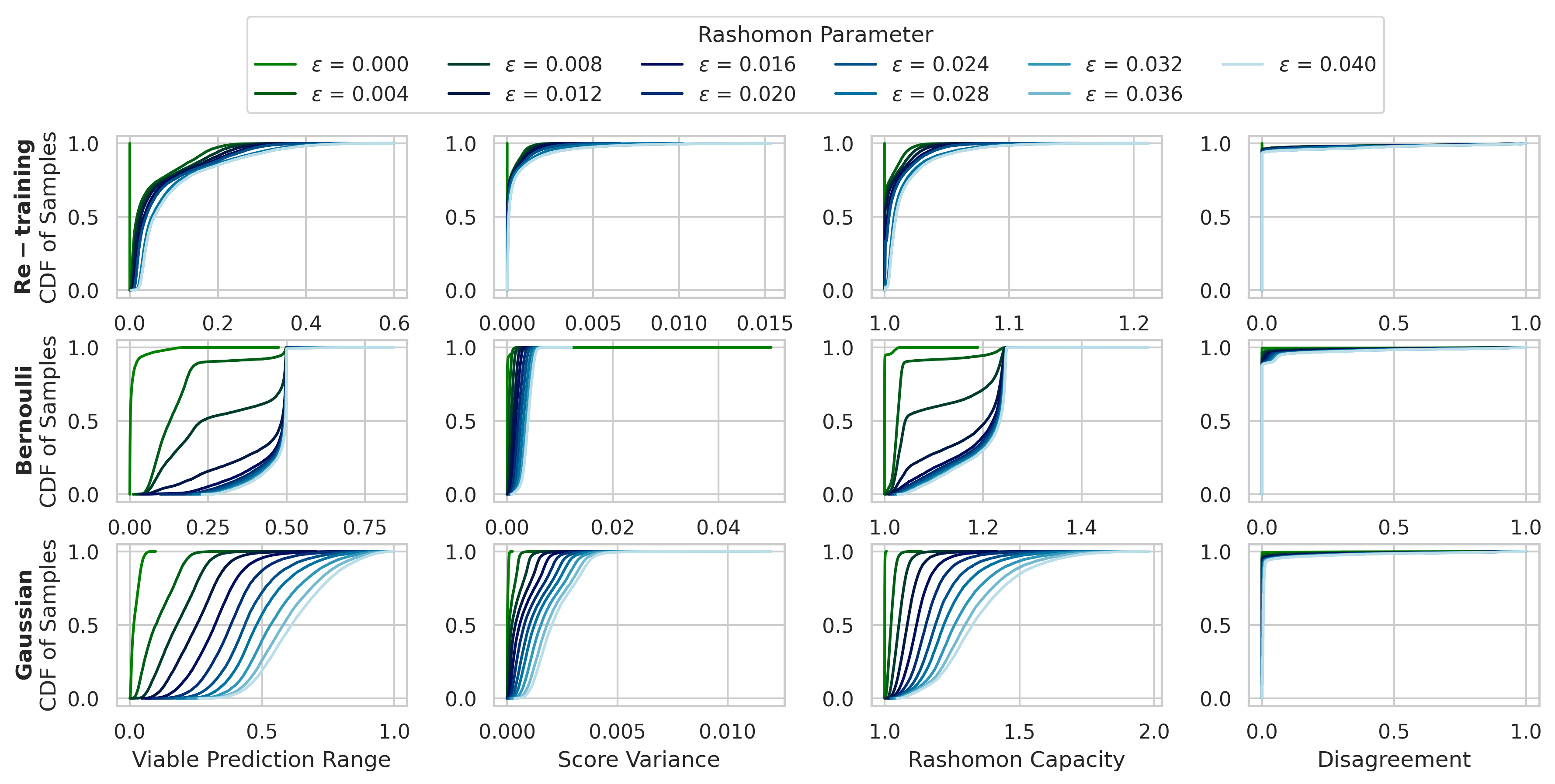}
\end{center}
\caption{Cumulative distributions of the predictive multiplicity metrics that are defined across all samples in the UCI Bank Marketing dataset.}\label{fig:bank-score-all}
\end{figure}

\begin{figure}[!tb]
\begin{center}
\includegraphics[width=\textwidth]{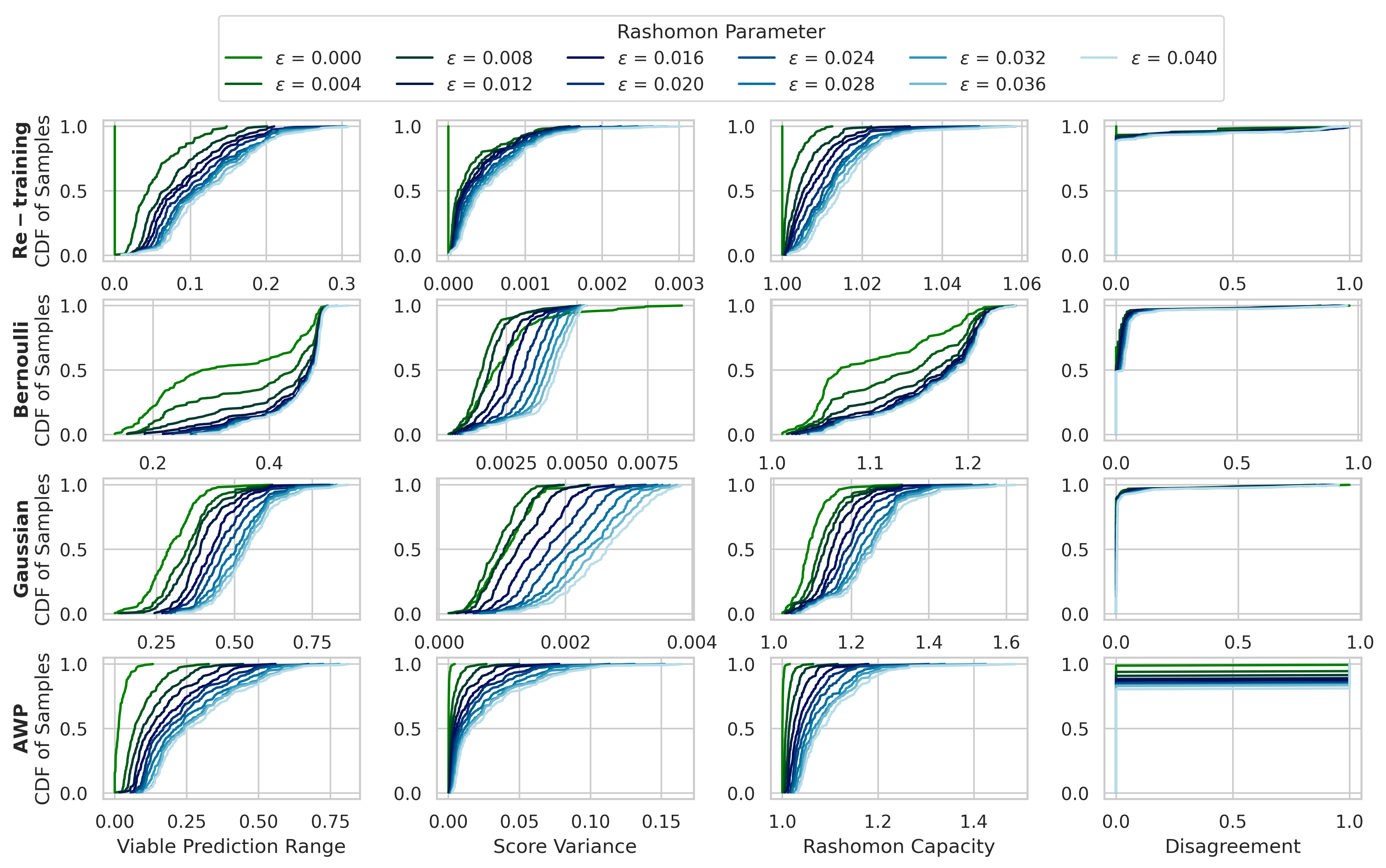}
\end{center}
\caption{Cumulative distributions of the predictive multiplicity metrics that are defined across all samples in the UCI Credit approval dataset.}\label{fig:credit-score-all}
\end{figure}

\begin{figure}[!tb]
\begin{center}
\includegraphics[width=\textwidth]{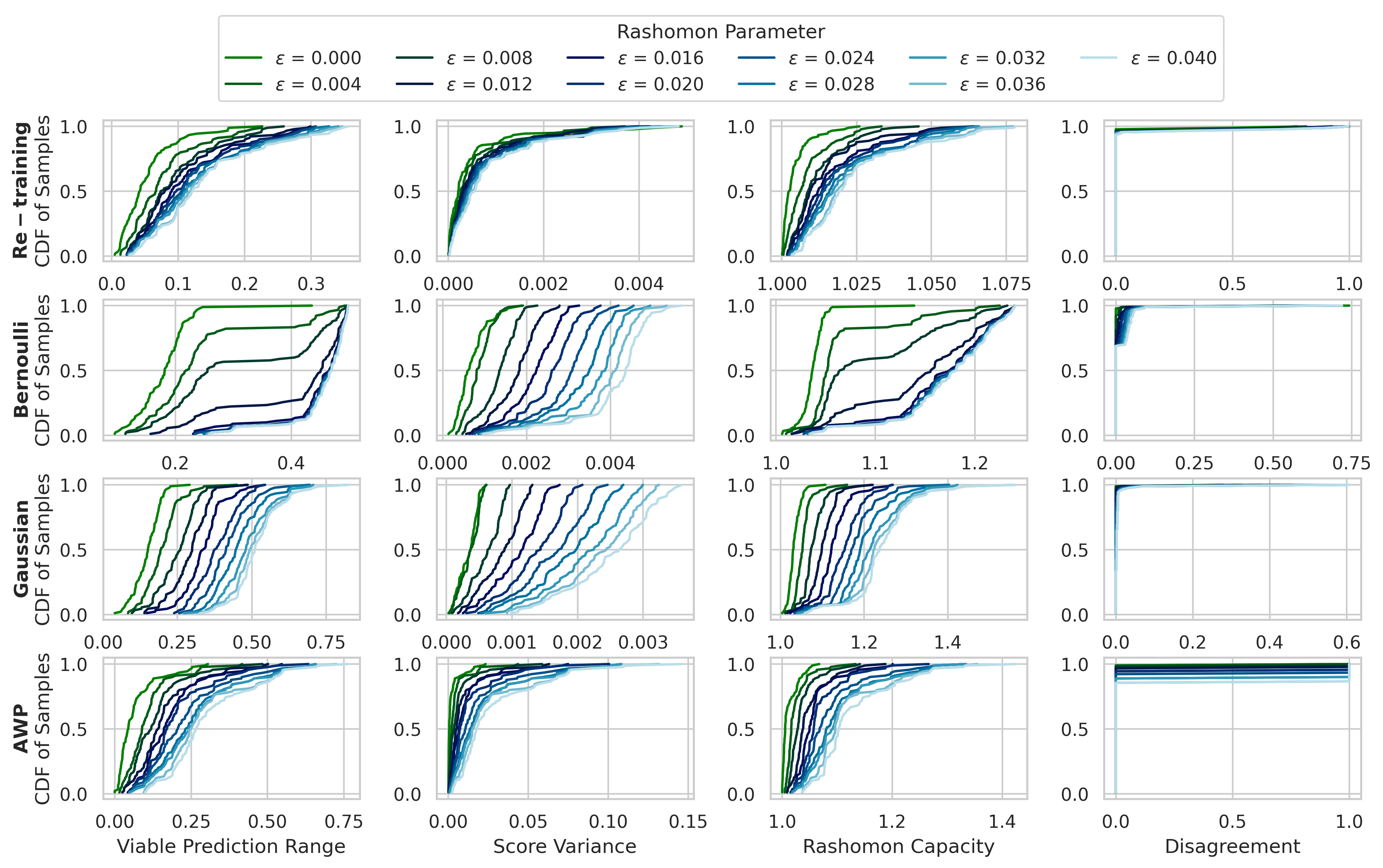}
\end{center}
\caption{Cumulative distributions of the predictive multiplicity metrics that are defined across all samples in the UCI Dermatology dataset.}\label{fig:dermatology-score-all}
\end{figure}

\begin{figure}[!tb]
\begin{center}
\includegraphics[width=\textwidth]{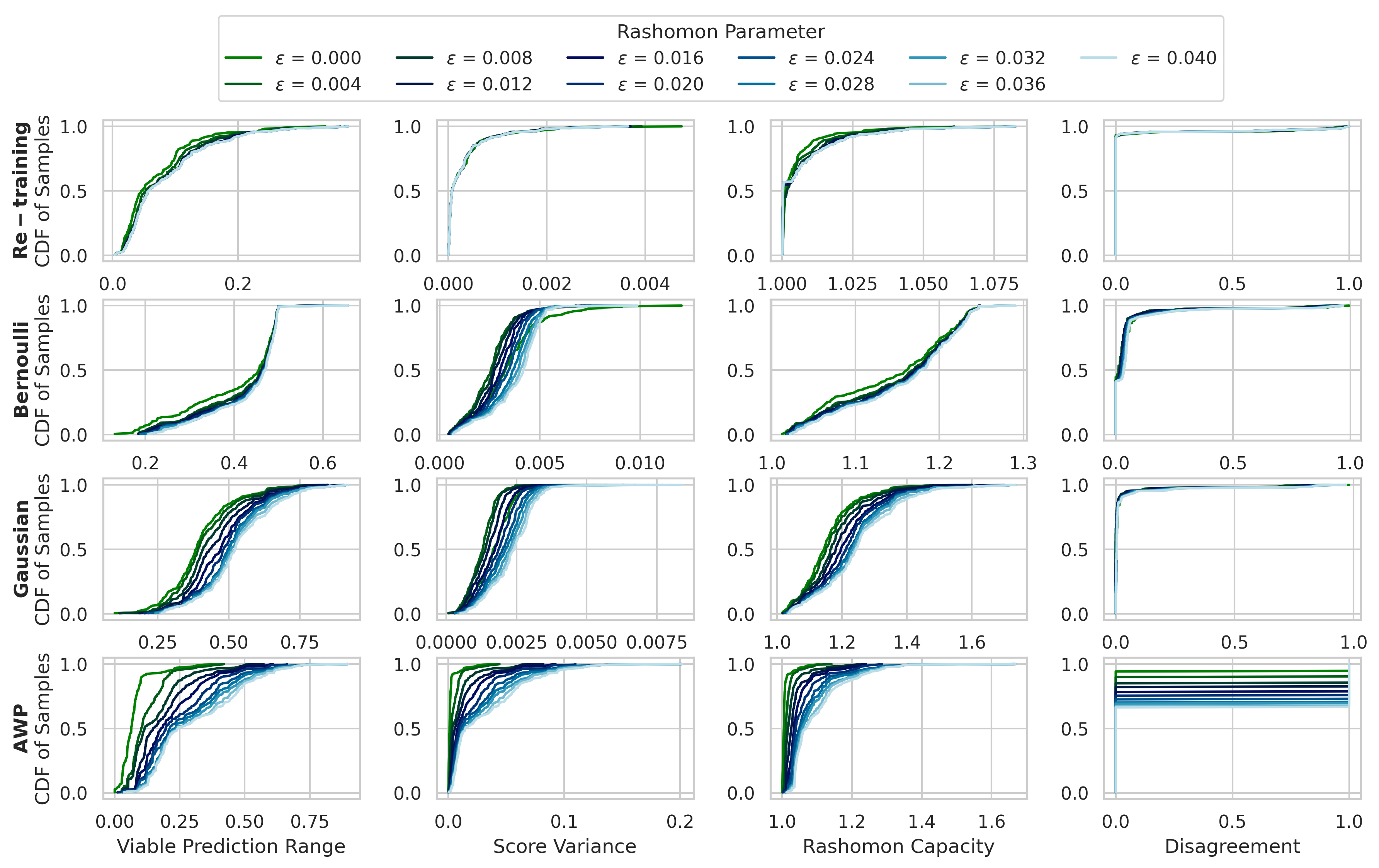}
\end{center}
\caption{Cumulative distributions of the predictive multiplicity metrics that are defined across all samples in the UCI Mammography dataset.}\label{fig:mammo-score-all}
\end{figure}

\begin{figure}[!tb]
\begin{center}
\includegraphics[width=\textwidth]{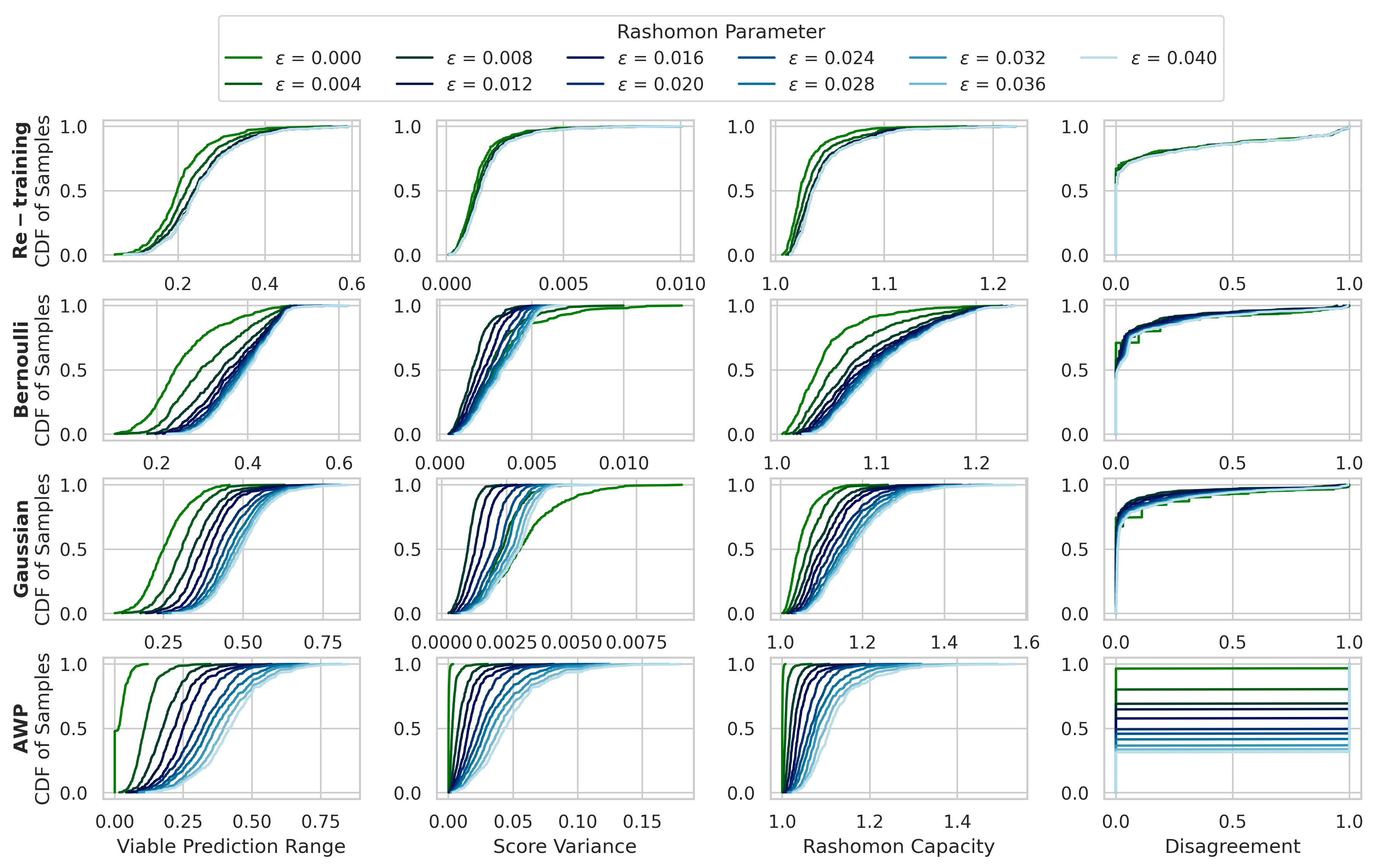}
\end{center}
\caption{Cumulative distributions of the predictive multiplicity metrics that are defined across all samples in the UCI Contraception dataset.}\label{fig:contrac-score-all}
\end{figure}

\clearpage
\begin{figure}[!tb] 
\begin{subfigure}{0.48\textwidth}
\includegraphics[width=\linewidth]{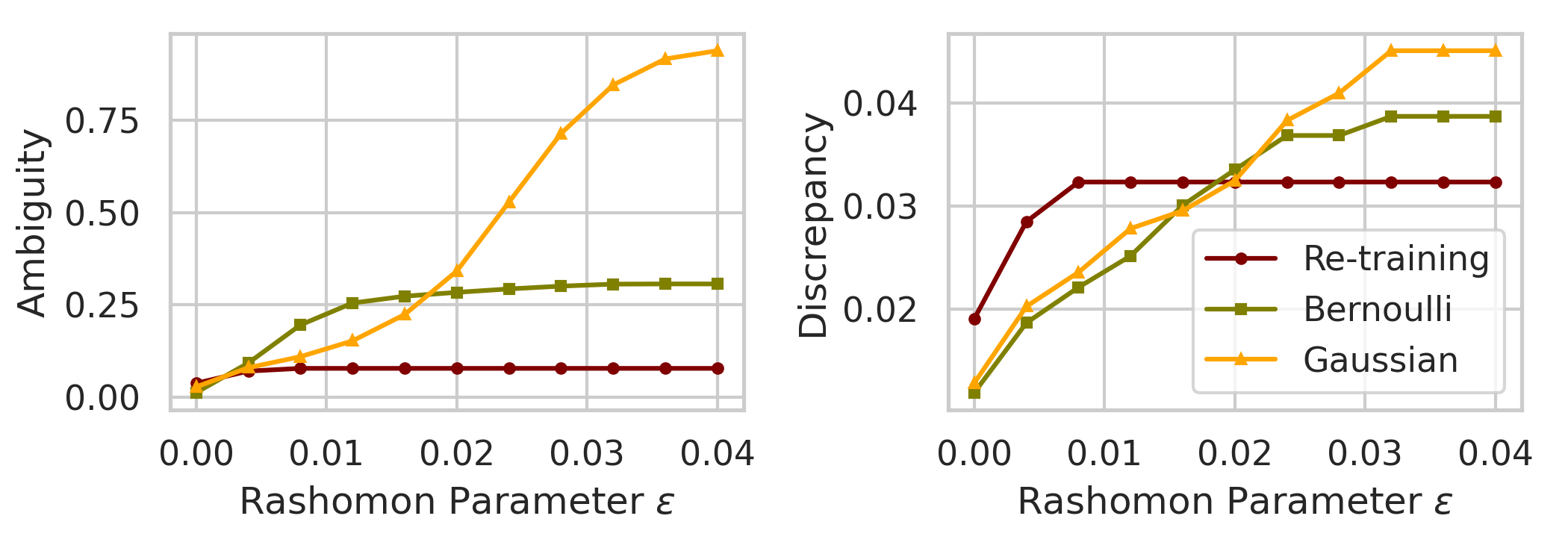}
\caption{Adult Income dataset.} \label{fig:adult-decision-all}
\end{subfigure}
\begin{subfigure}{0.48\textwidth}
\includegraphics[width=\linewidth]{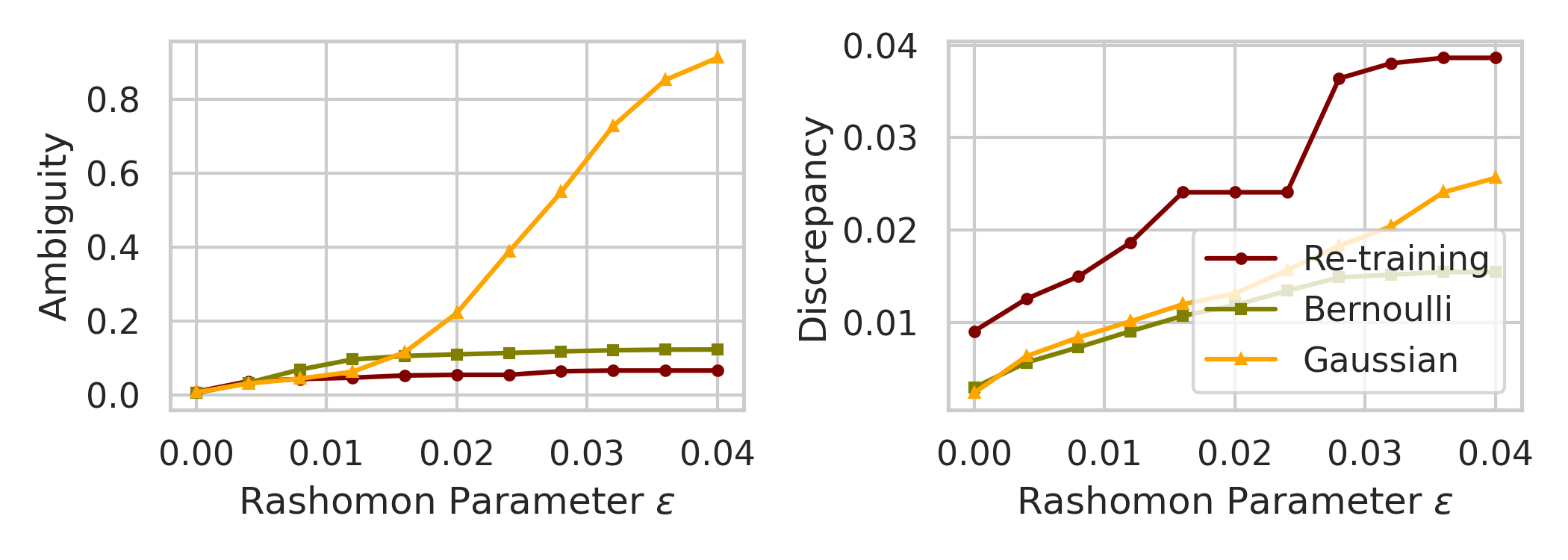}
\caption{Bank Marketing dataset.} \label{fig:bank-decision-all}
\end{subfigure}

\medskip
\begin{subfigure}{0.48\textwidth}
\includegraphics[width=\linewidth]{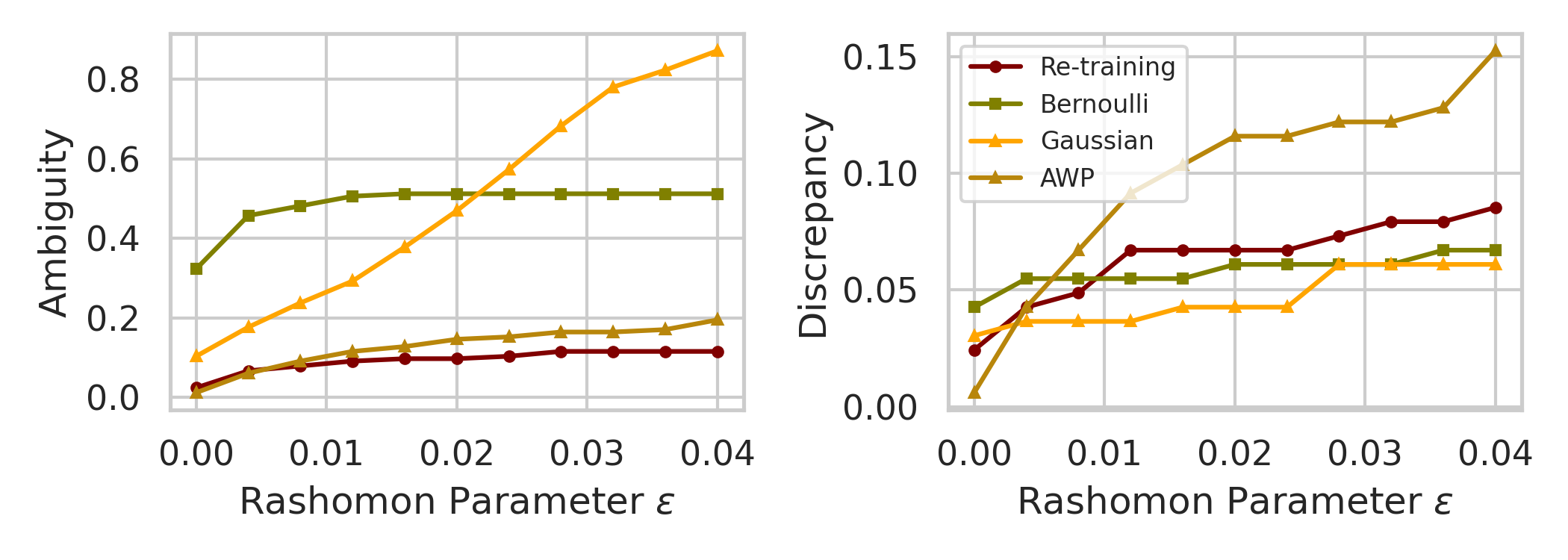}
\caption{Credit Approval dataset.} \label{fig:credit-decision-all}
\end{subfigure}
\begin{subfigure}{0.48\textwidth}
\includegraphics[width=\linewidth]{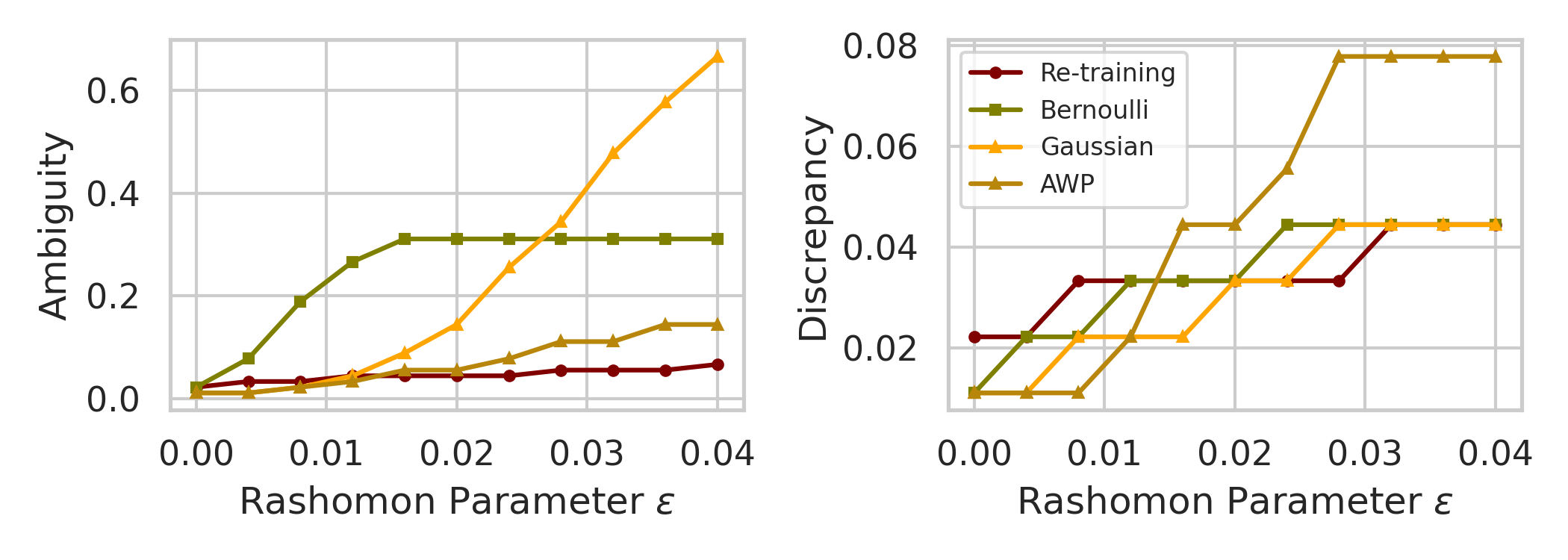}
\caption{Dermatology dataset.} \label{fig:derma-decision-all}
\end{subfigure}

\medskip
\begin{subfigure}{0.48\textwidth}
\includegraphics[width=\linewidth]{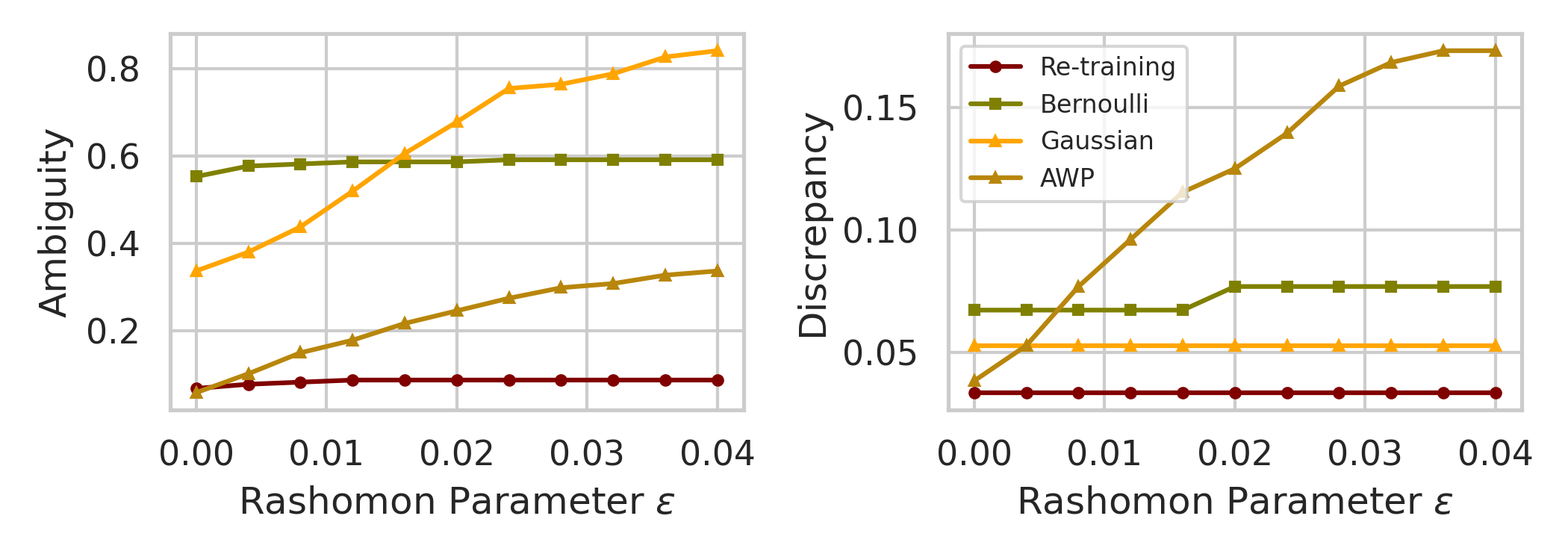}
\caption{Mammography dataset.} \label{fig:mammo-decision-all}
\end{subfigure}
\begin{subfigure}{0.48\textwidth}
\includegraphics[width=\linewidth]{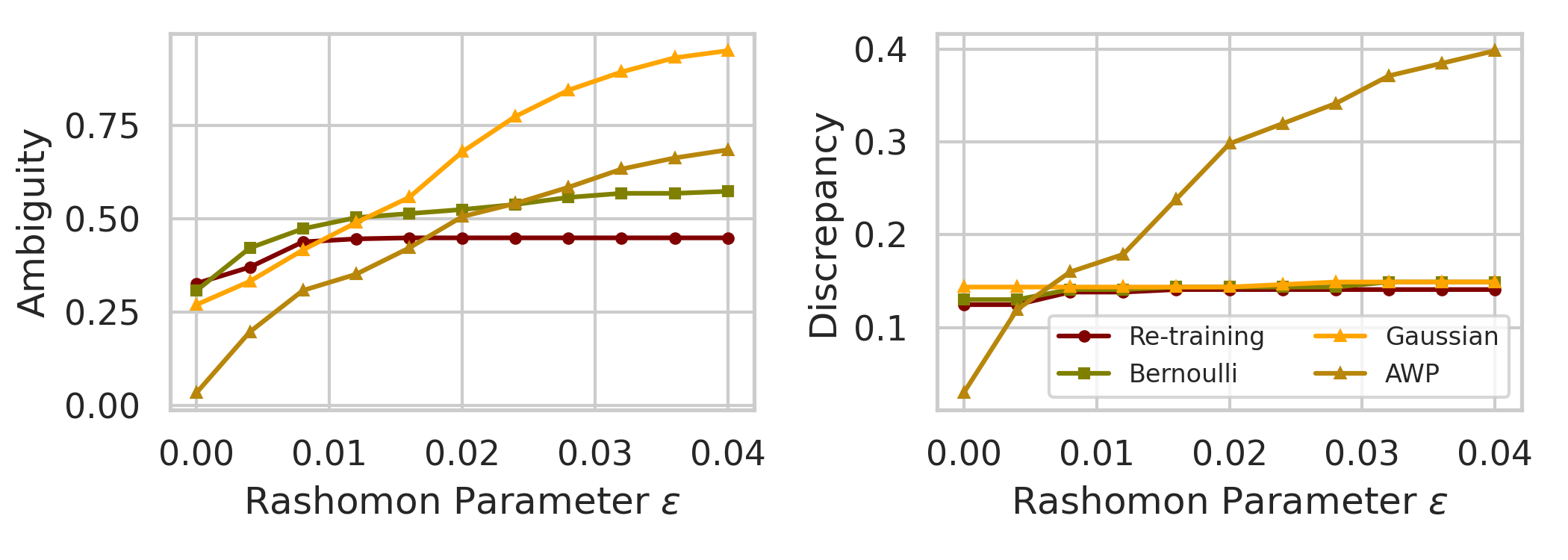}
\caption{Contraception dataset.} \label{fig:contrac-decision-all}
\end{subfigure}

\caption{Ambiguity and discrepancy of the UCI datasets.} \label{fig:all-datasets-decision}
\end{figure}

\begin{table}[!b]
\small
\caption{\small Raw runtime values on the UCI datasets.}
\label{tab:uci-data-runtime}
\begin{center}
\begin{tabular}{lrrrr}
\toprule
\multicolumn{1}{c}{\multirow{2}{*}{\bf Dataset}}  & \multicolumn{4}{c}{\bf Runtime (seconds per model)} \\
\cmidrule(lr){2-5}
               & \multicolumn{1}{c}{Re-training} & \multicolumn{1}{c}{AWP} & \makecell{Bernoulli\\Dropout} & \makecell{Gaussian\\Dropout} \\
\midrule
Adult Income    & $19.35\pm 0.68$ & ---             & $0.0614\pm 0.24$ & $0.0633\pm 0.24$ \\
Bank  Deposit   & $7.78\pm 2.94$  & ---             & $0.2243\pm 0.42$ & $0.2200\pm 0.41$ \\
Credit Approval & $0.51\pm 0.56$  & $10.18\pm 2.29$ & $0.0019\pm 0.04$ & $0.0019\pm 0.04$ \\
Dermatology     & $0.11\pm 0.34$  & $4.27\pm 0.96$  & $0.0038\pm 0.06$ & $0.0038\pm 0.06$ \\
Mammography     & $0.31\pm 0.52$  & $10.09\pm 1.62$ & $0.0024\pm 0.05$ & $0.0024\pm 0.05$ \\
Contraception   & $0.56\pm 0.67$  & $21.09\pm 3.78$ & $0.0071\pm 0.08$ & $0.0071\pm 0.08$ \\
\bottomrule
\end{tabular}
\end{center}
\end{table}

\clearpage
\begin{table}[!t]
\small
\caption{\small The efficiency of obtaining models from the Rashomon set with different Rashomon parameters $\epsilon$ on the UCI datasets. Each number is the percentage of the obtained models that are in the Rashomon set by different methods. For most datasets and $\epsilon$, dropout has higher percentage compared to re-training. }
\label{tab:uci-data-rashomon-ratio}
\begin{center}
\begin{tabular}{lcrrr}
\toprule
\multicolumn{1}{c}{\multirow{2}{*}{\bf Dataset}}  & \multirow{2}{*}{\makecell{\bf Rashomon\\ \bf Parameter $\epsilon$}} & \multicolumn{3}{c}{\bf Methods}\\
\cmidrule(lr){3-5}
 &  & \multicolumn{1}{c}{Re-training} & \makecell{Bernoulli\\ Dropout} & \makecell{Gaussian\\ Dropout}\\
\midrule
\multirow{4}{*}{Adult Income} & $0.001$ & $29.50$\% & $0.00$\% & $75.00$\% \\
& $0.002$ & $55.25$\% & $50.00$\% & $82.67$\% \\
& $0.003$ & $71.00$\% & $66.67$\% & $85.00$\% \\
& $0.004$ & $81.25$\% & $75.00$\% & $87.38$\% \\
\midrule
\multirow{4}{*}{Bank Marketing}            & $0.001$ & $2.89$\% & $0.00$\% & $75.00$\% \\
& $0.002$ & $9.89$\% & $50.00$\% & $83.17$\% \\
& $0.003$ & $18.44$\% & $66.67$\% & $85.43$\% \\
& $0.004$ & $24.67$\% & $75.00$\% & $87.50$\% \\
\midrule
\multirow{4}{*}{Credit Approval}  & $0.001$ & $0.33$\% & $0.00$\% & $63.00$\% \\
& $0.002$ & $0.67$\% & $50.00$\% & $71.25$\% \\
& $0.003$ & $2.00$\% & $50.00$\% & $71.00$\% \\
& $0.004$ & $2.67$\% & $59.00$\% & $72.00$\% \\
\midrule
\multirow{4}{*}{Dermatology}      & $0.001$ & $0.78$\% & $0.00$\% & $66.75$\% \\
& $0.002$ & $1.33$\% & $50.00$\% & $74.80$\% \\
& $0.003$ & $1.89$\% & $50.00$\% & $78.50$\% \\
& $0.004$ & $2.11$\% & $61.33$\% & $79.14$\% \\
\midrule
\multirow{4}{*}{Mammography}      & $0.001$ & $20.78$\% & $0.00$\% & $57.50$\% \\
& $0.002$ & $25.22$\% & $50.00$\% & $59.50$\% \\
& $0.003$ & $30.78$\% & $52.33$\% & $65.38$\% \\
& $0.004$ & $38.11$\% & $57.75$\% & $69.38$\% \\
\midrule
\multirow{4}{*}{Contraception}    & $0.007$ & $92.00$\% & $0.00$\% & $66.00$\% \\
& $0.008$ & $95.56$\% & $59.00$\% & $69.43$\% \\
& $0.010$ & $97.89$\% & $65.50$\% & $73.75$\% \\
& $0.011$ & $98.89$\% & $64.43$\% & $75.40$\% \\ 
\bottomrule
\end{tabular}
\end{center}
\end{table}

 {
\begin{figure}[!t]
\begin{center}
\includegraphics[width=.95\textwidth]{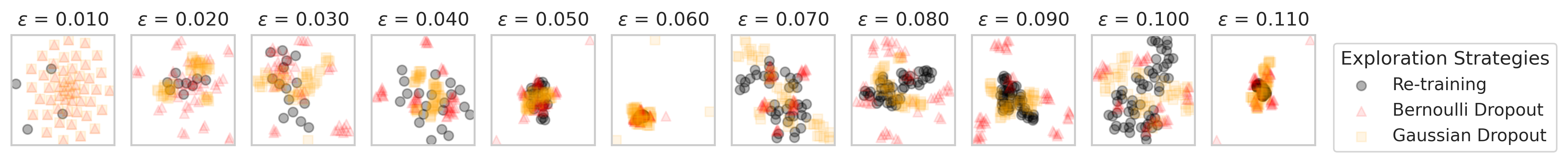}
\end{center}
\caption{Comparisons of the model weights of logistic regression obtained from the re-training and dropout strategies on UCI Adult Income dataset with t-SNE visualization \citep{van2008visualizing}.}\label{fig:adult-rebuttal-lr-exploration}
\end{figure}
}

\clearpage
\subsection{Additional Results on the MS COCO dataset}\label{app:add-exp-coco}
In Figure~\ref{fig:mscoco-yolov3-score-all} and Figure~\ref{fig:mscoco-yolov3-decision-all}, we report the estimation of all predictive multiplicity metrics that corresponds to the example shown in Figure~\ref{fig:yolov3-detection} for the MS COCO dataset.
There are a significant portion of samples ($30\% \sim 50\%$) that disagree with each other on the human bounding boxes found by the Yolov3 detector. 
Due to extremely high time complexity, we do not include the re-training strategy and the adversarial weight perturbation algorithm.

In Figure~\ref{fig:mscoco-maskrcnn-score-all} and Figure~\ref{fig:mscoco-maskrcnn-decision-all}, we implement another detector Mask R-CNN \citep{MaskRCNN_He_2017_ICCV}, which is a state-of-the-art model for instance segmentation, developed on top of a region-based convolutional neural networks called Faster R-CNN.
The Mask R-CNN detector makes a larger portion of the samples have higher predictive multiplicity, in terms of viable prediction range and the Rashomon Capacity, compared against the Yolov3 detector.

\begin{figure}[!b]
\begin{center}
\includegraphics[width=.7\textwidth]{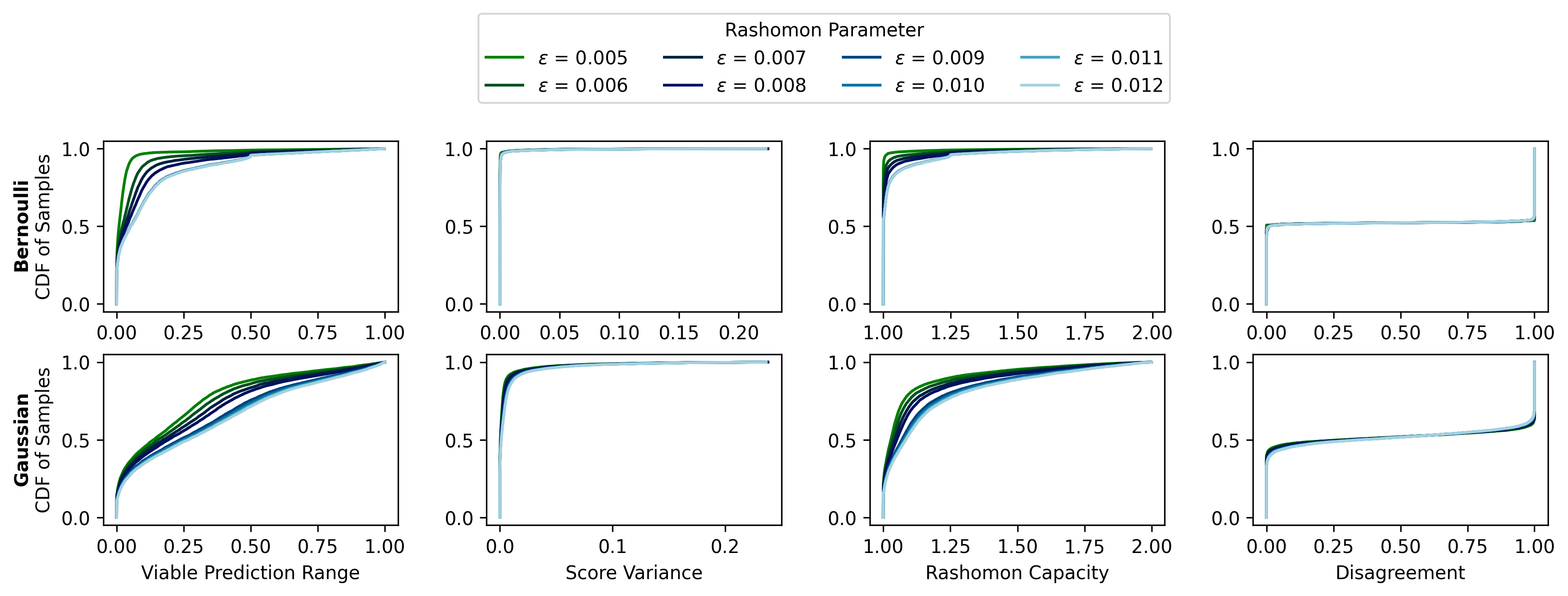}
\end{center}
\caption{Cumulative distributions of the predictive multiplicity metrics that are defined across all samples in the MS COCO dataset with the YoloV3 detector.}\label{fig:mscoco-yolov3-score-all}
\end{figure}

\begin{figure}[!b]
\begin{center}
\includegraphics[width=.7\textwidth]{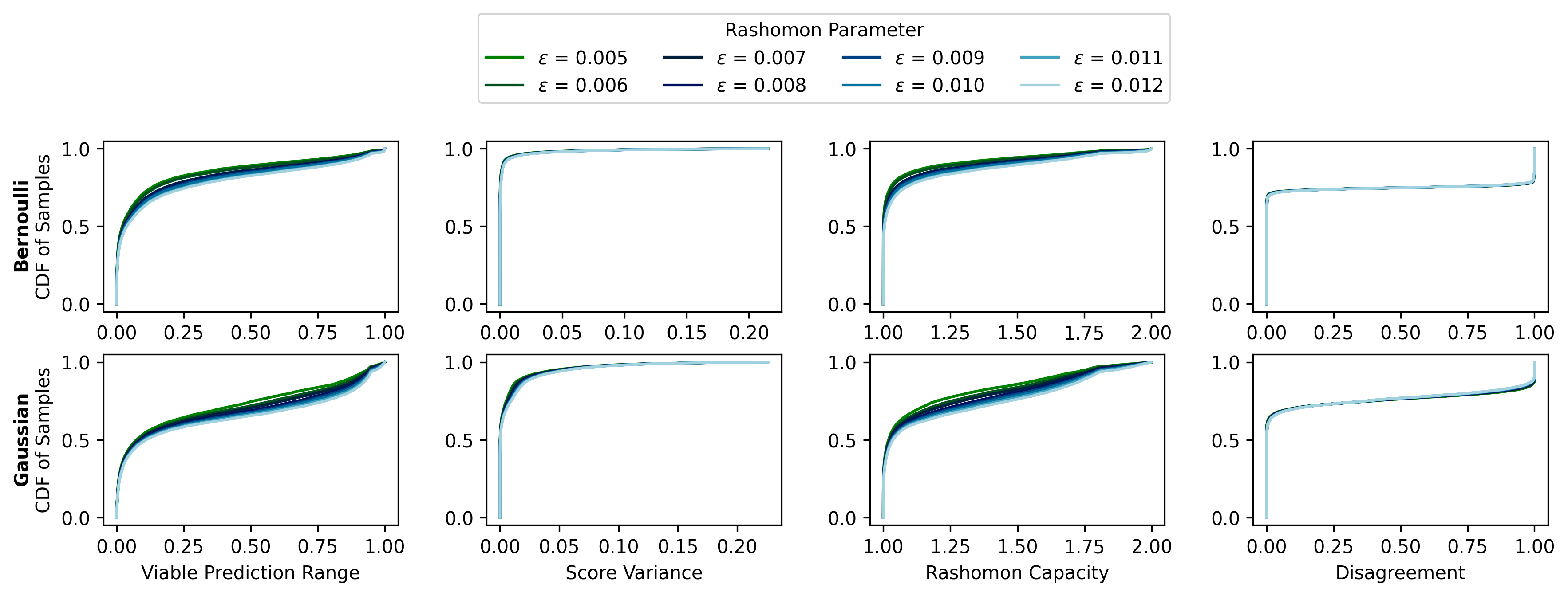}
\end{center}
\caption{Cumulative distributions of the predictive multiplicity metrics that are defined across all samples in the MS COCO dataset with the Mask R-CNN detector.}\label{fig:mscoco-maskrcnn-score-all}
\end{figure}

\begin{figure}[!b] 
\begin{center}
\begin{subfigure}{0.48\textwidth}
\includegraphics[width=\linewidth]{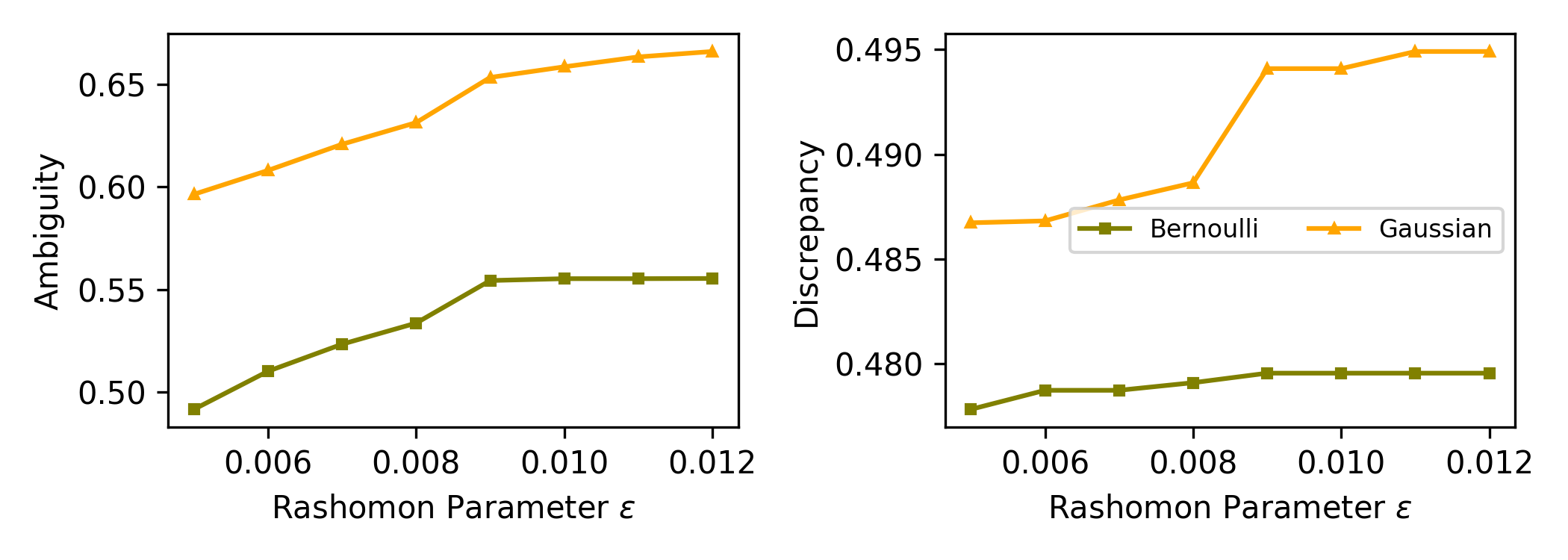}
\caption{The YoloV3 detector.} \label{fig:mscoco-yolov3-decision-all}
\end{subfigure}
\begin{subfigure}{0.48\textwidth}
\includegraphics[width=\linewidth]{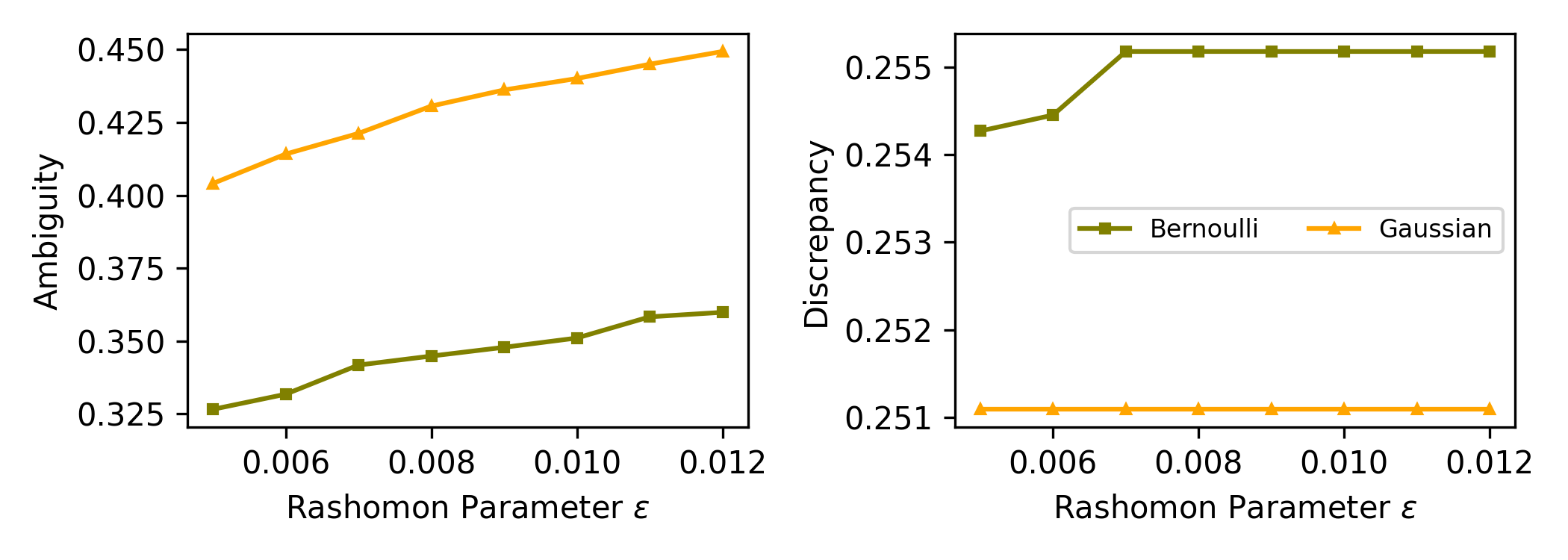}
\caption{The Mask R-CNN detector.} \label{fig:mscoco-maskrcnn-decision-all}
\end{subfigure}    
\end{center}
\caption{Ambiguity and discrepancy of the MS COCO dataset with the YoloV3 and Mask R-CNN detectors.} \label{fig:mscoco-all-datasets-decision}
\end{figure}

\clearpage
\subsection{Additional experiments on CIFAR-10/-100 datasets}\label{app:add-exp-cifar}
\vspace{-.5em}
We measure the predictive multiplicity metrics on more challenging image datasets CIFAR-10/-100 \citep{krizhevsky2009learning} with convolutional neural network architectures including VGG16 \citep{simonyan2014very} and ResNet50 \citep{he2016deep}.
CIFAR-10 consists of 60,000 $32\times32$ colored tiny images divided into ten classes, with each class representing a distinct object category such as airplanes, dogs, and cars. 
CIFAR-100 offers a more challenging dataset containing 100 classes with 600 images each, providing a broader range of object categories, including fine-grained distinctions like various types of birds and flowers. 
We train CIFAR-10 with VGG16 for 7 epochs with learning rate 0.001, and CIFAR-100 with ResNet50 for 40 epochs with learning rate 0.001---the performance of the bases models is summarized in Table~\ref{tab:cifar-10-100-base-model-results}.
We obtain 50 Bernoulli and Gaussian dropout models for each 5 values $p$ and $\alpha$ spread evenly in $[0,0.008]$ and $[0,0.1]$ respectively for CIFAR-10, and similarly for CIFAR-100 where $p$ and $\alpha$ spread evenly in $[0,0.002]$ and $[0,0.05]$.
For the re-training results, we re-train 20 times for each epoch 4,5,6,7,8,9 for CIFAR-10, and 20 times for each epoch 20,25,30,35 for CIFAR-100.
The estimates of predictive multiplicity metrics are summarized in Figure~\ref{fig:cifar-all-loss-all-legend}, and runtime/speedup in Table~\ref{tab:cifar-data-runtime-complete}.
Since VGG16 and ResNet50 are large-scale models that contain many local minima, re-training can easily find different local minima that have similar performance.
On the other hand, the dropout methods can only search models at the neighborhood of the given pre-trained model. 
Therefore, re-training outperforms dropout in terms of exploring diverse models in the Rashomon set, at the expense of a much larger runtime. 
Finally, Figure~\ref{fig:cifar10-score-all} and Figure~\ref{fig:cifar100-score-all} summarize the cumulative distributions of the predictive multiplicity metrics that are defined across all samples for CIFAR-10 and CIFAR-100 respectively, and Figure~\ref{fig:cifar-all-datasets-decision} shows the ambiguity and discrepancy.

\vspace{-2em}
\begin{table}[b]
\small
\caption{CIFAR-10/-100 dataset base model performance.}
\vspace{-1em}
\label{tab:cifar-10-100-base-model-results}
\begin{center}
\begin{tabular}{cccccc}
\toprule
\multirow{2}{*}{\bf Dataset}  & \multirow{2}{*}{\bf Architecture}  & \multicolumn{2}{c}{\bf Training}  & \multicolumn{2}{c}{\bf Test}\\
\cmidrule(lr){3-4} \cmidrule(lr){5-6}
               &  & Loss & Accuracy & Loss & Accuracy \\
\midrule
CIFAR-10  & VGG-16    & 0.2692 & 90.85\% & 0.5743 & 81.63\% \\
CIFAR-100 & ResNet-50 & 0.0014 & 99.96\% & 2.1591 & 59.53\% \\
\bottomrule
\end{tabular}
\end{center}
\end{table}

\vspace{-3em}
\begin{figure}[!b] 
\begin{center}
\begin{subfigure}{0.30\textwidth}
\includegraphics[width=\linewidth]{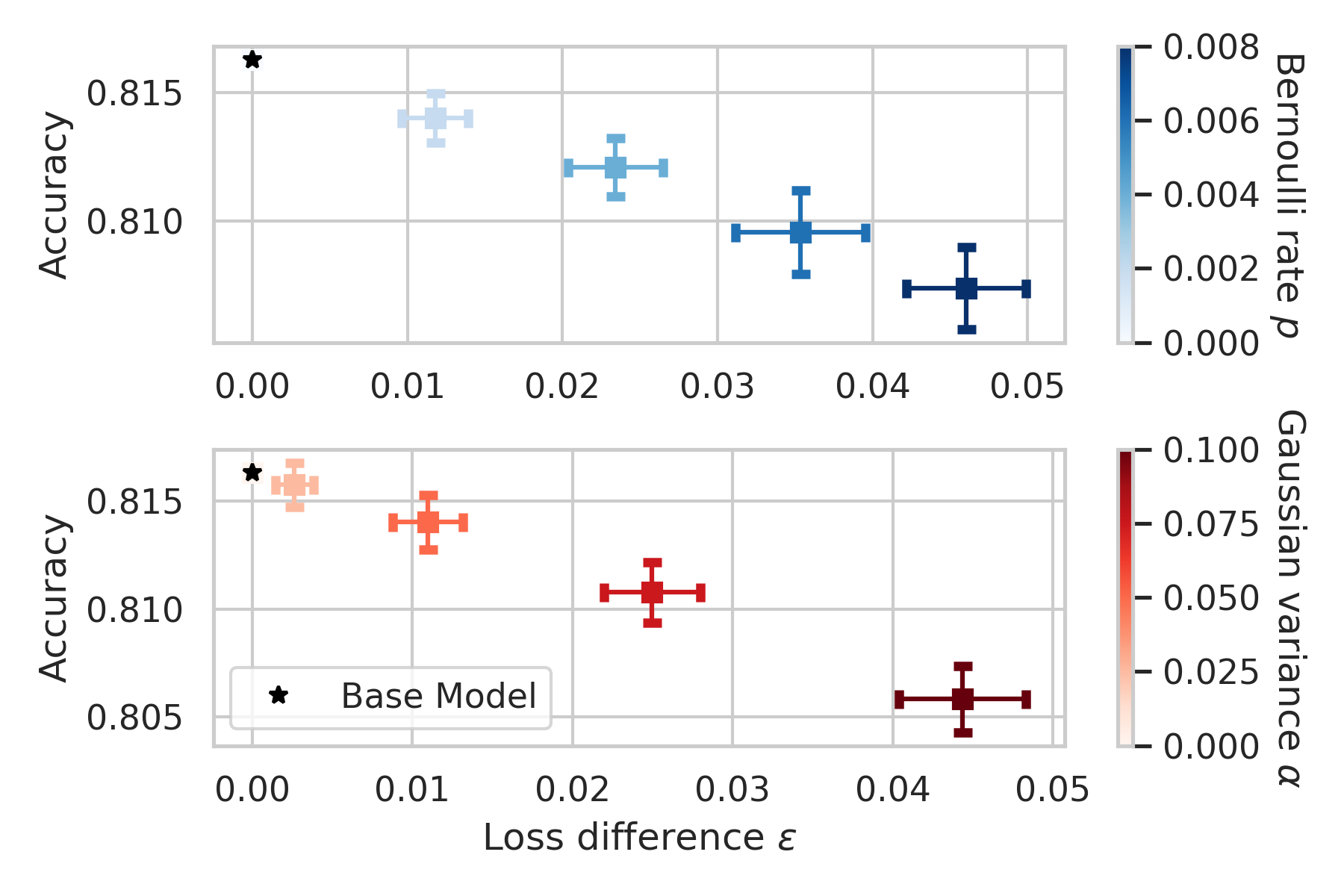}
\caption{CIFAR-10 dataset.} \label{fig:cifar10-loss-acc}
\end{subfigure}
\begin{subfigure}{0.30\textwidth}
\includegraphics[width=\linewidth]{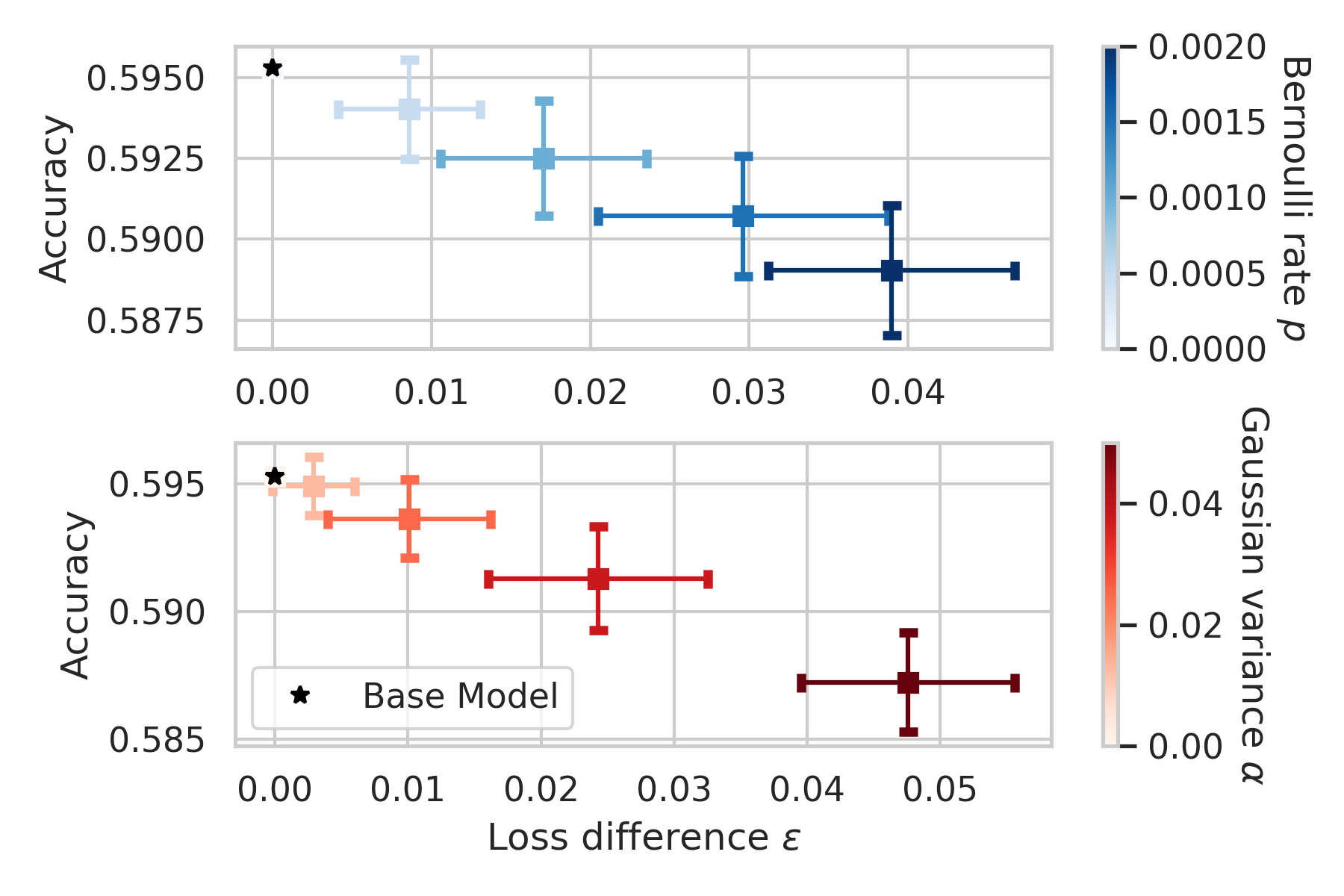}
\caption{CIFAR-100 dataset.} \label{fig:cifar100-loss-acc}
\end{subfigure}
\end{center}
\caption{Loss and accuracy deviations versus Bernoulli and Gaussian dropouts on CIFAR-10/-100 datasets.} \label{fig:cifar-all-datasets-loss-acc}
\end{figure}

\vspace{-2em}
\begin{figure}[!b]
\begin{center}
\includegraphics[width=.6\textwidth]{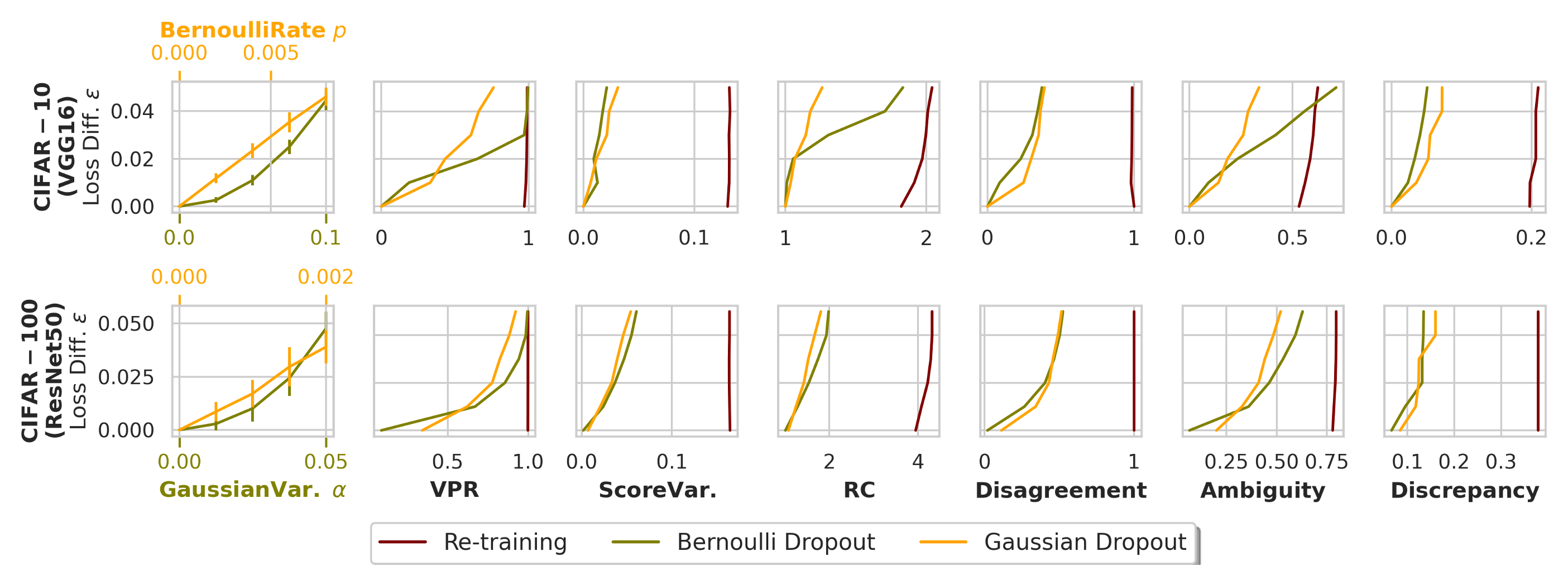}
\end{center}
\vspace{-1em}
\caption{\small
Loss vs. dropout parameters and the corresponding predictive multiplicity metrics of the baselines with CIFAR-10/-100 datasets. 
}\label{fig:cifar-all-loss-all-legend}
\end{figure}

\clearpage
\begin{table}[t]
\caption{\small Raw runtime values and speedup on CIFAR-10/-100 datasets. All runtimes are evaluated with the same computational platform.}
\label{tab:cifar-data-runtime-complete}
\begin{center}\small
\begin{tabular}{ccccr}
\toprule
\multirow{2}{*}{\bf Dataset}  & \multicolumn{3}{c}{\bf Runtime (seconds per model)} & \multicolumn{1}{c}{\bf Speedup} \\
\cmidrule(lr){2-4}
               & Re-training & \makecell{Bernoulli\\Dropout} & \makecell{Gaussian\\Dropout} & \makecell{Gaussian Dropout\\over Re-training} \\
\midrule
CIFAR-10 (VGG16)     & $485.10\pm 10.29$  & $1.8720\pm 0.33$ & $4.1040\pm 0.93$ & $118.20\times$ \\
CIFAR-100 (ResNet50) & $3985.45\pm 1626.27$  & $1.0880\pm 0.28$ & $1.3200\pm 0.47$ & $3019.28\times$ \\
\bottomrule
\end{tabular}
\end{center}
\end{table}

\begin{figure}[!tb]
\begin{center}
\includegraphics[width=.8\textwidth]{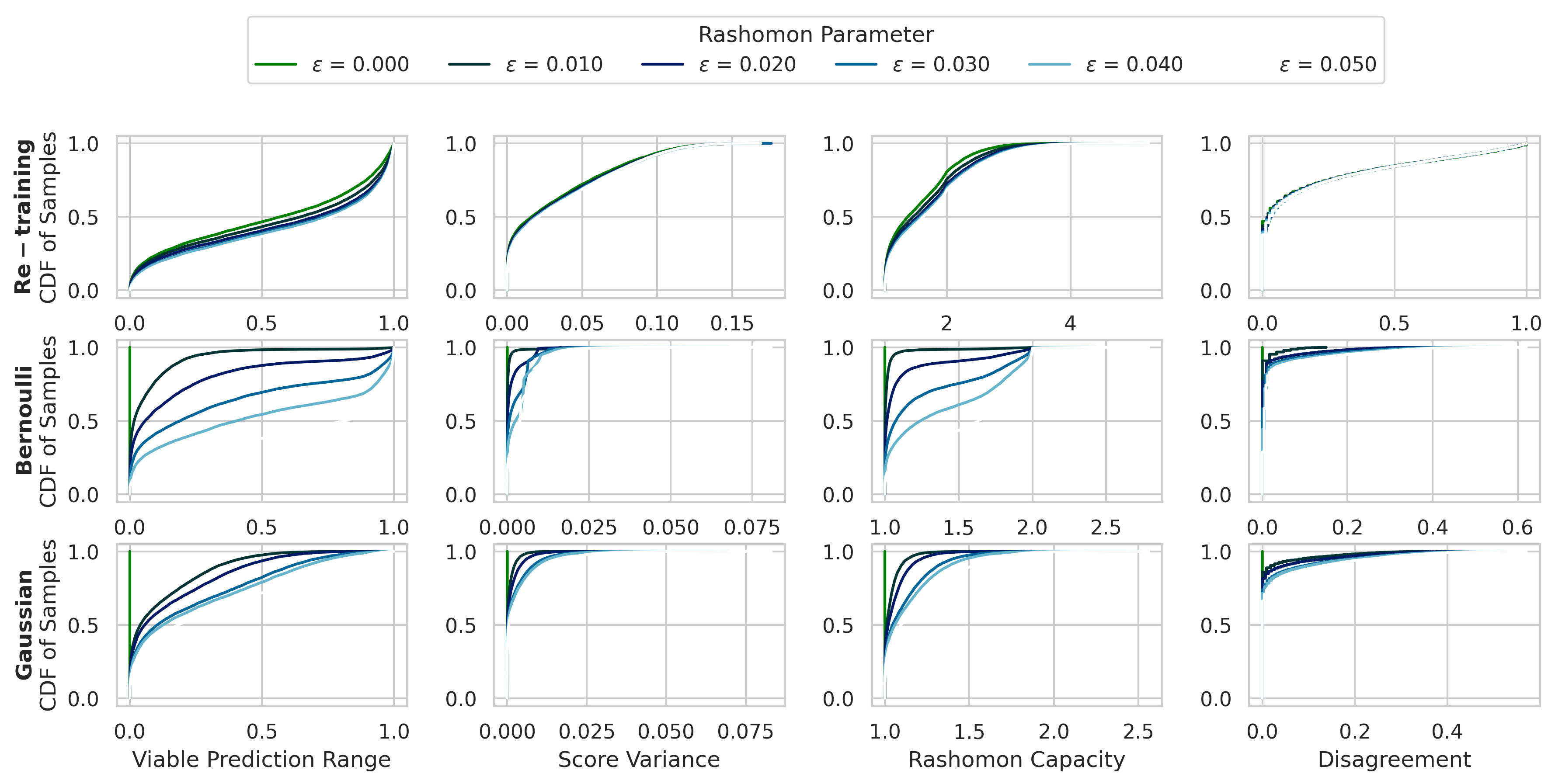}
\end{center}
\caption{Cumulative distributions of the predictive multiplicity metrics that are defined across all samples in CIFAR-10 dataset.}\label{fig:cifar10-score-all}
\end{figure}

\begin{figure}[!tb]
\begin{center}
\includegraphics[width=.8\textwidth]{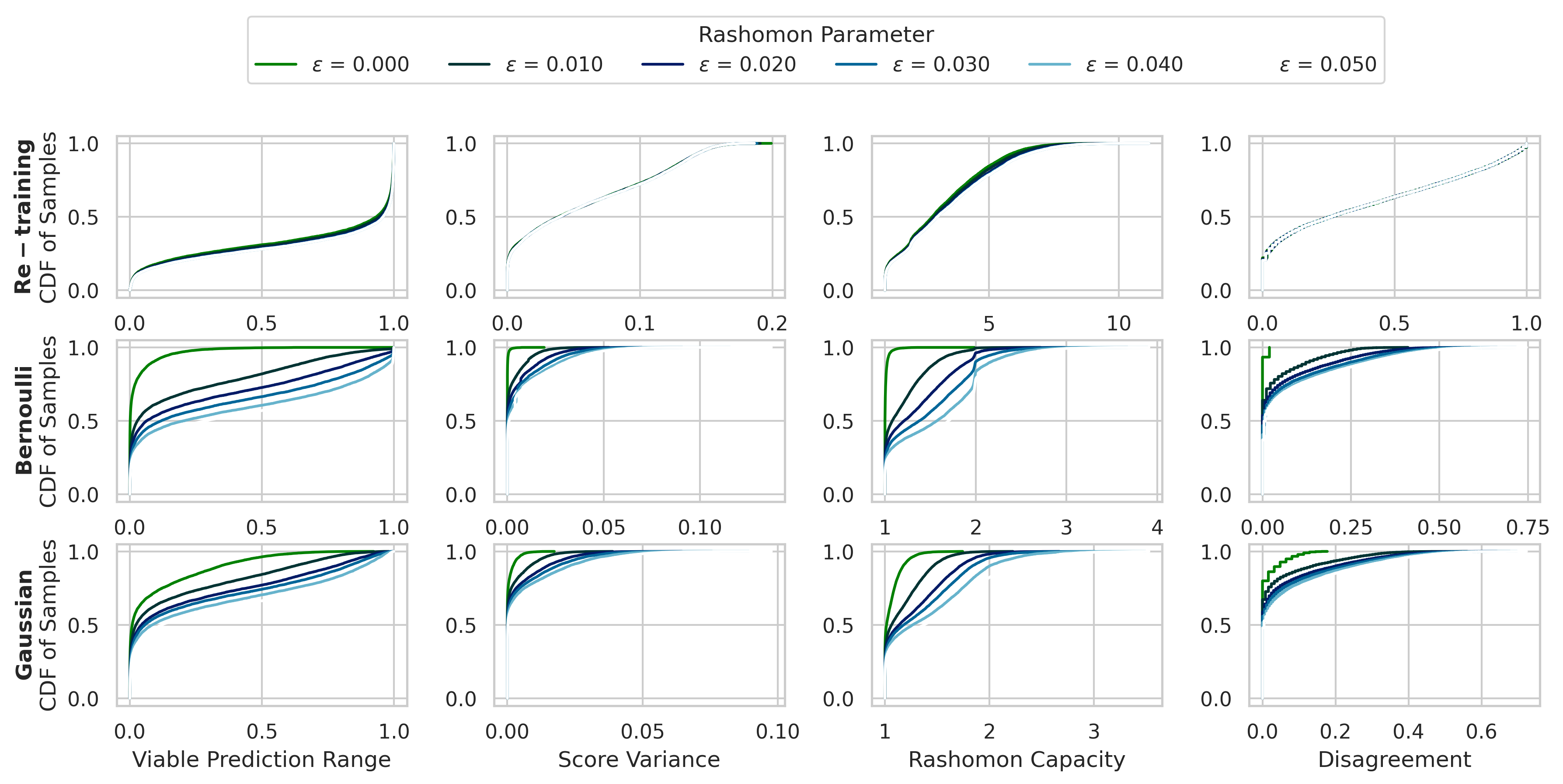}
\end{center}
\caption{Cumulative distributions of the predictive multiplicity metrics that are defined across all samples in CIFAR-100 dataset.}\label{fig:cifar100-score-all}
\end{figure}

\begin{figure}[!tb] 
\begin{center}
\begin{subfigure}{0.48\textwidth}
\includegraphics[width=\linewidth]{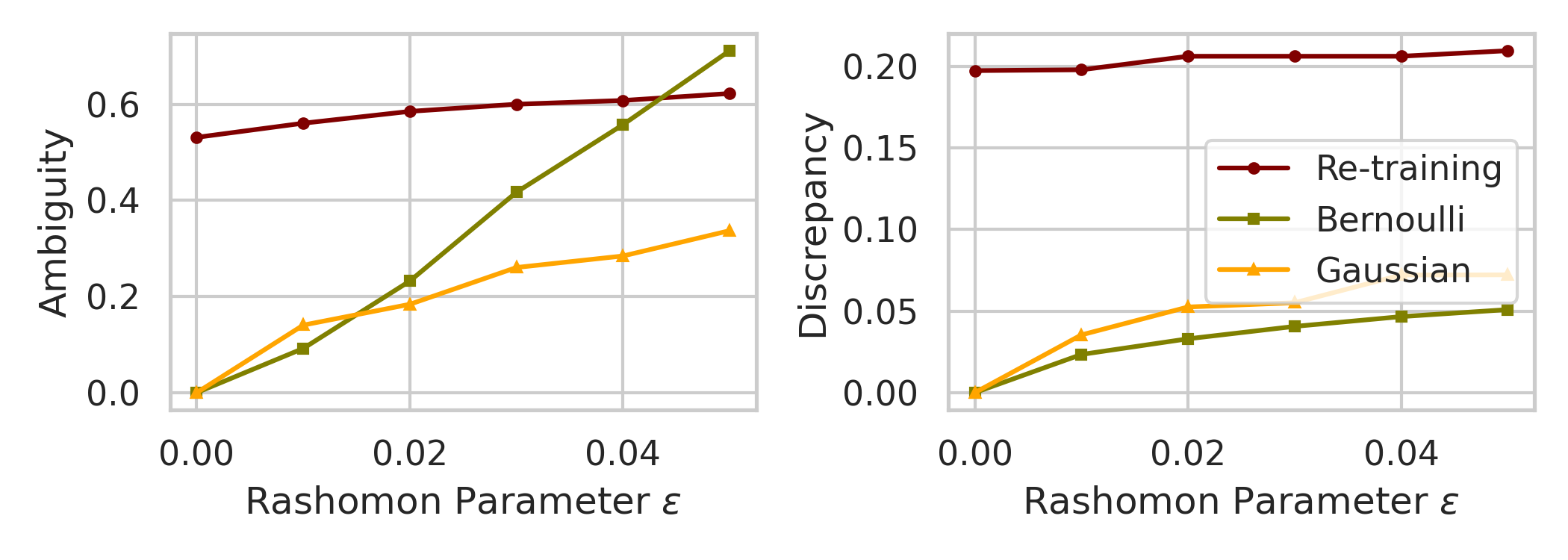}
\caption{CIFAR-10 dataset.} \label{fig:cifar10-decision-all}
\end{subfigure}
\begin{subfigure}{0.48\textwidth}
\includegraphics[width=\linewidth]{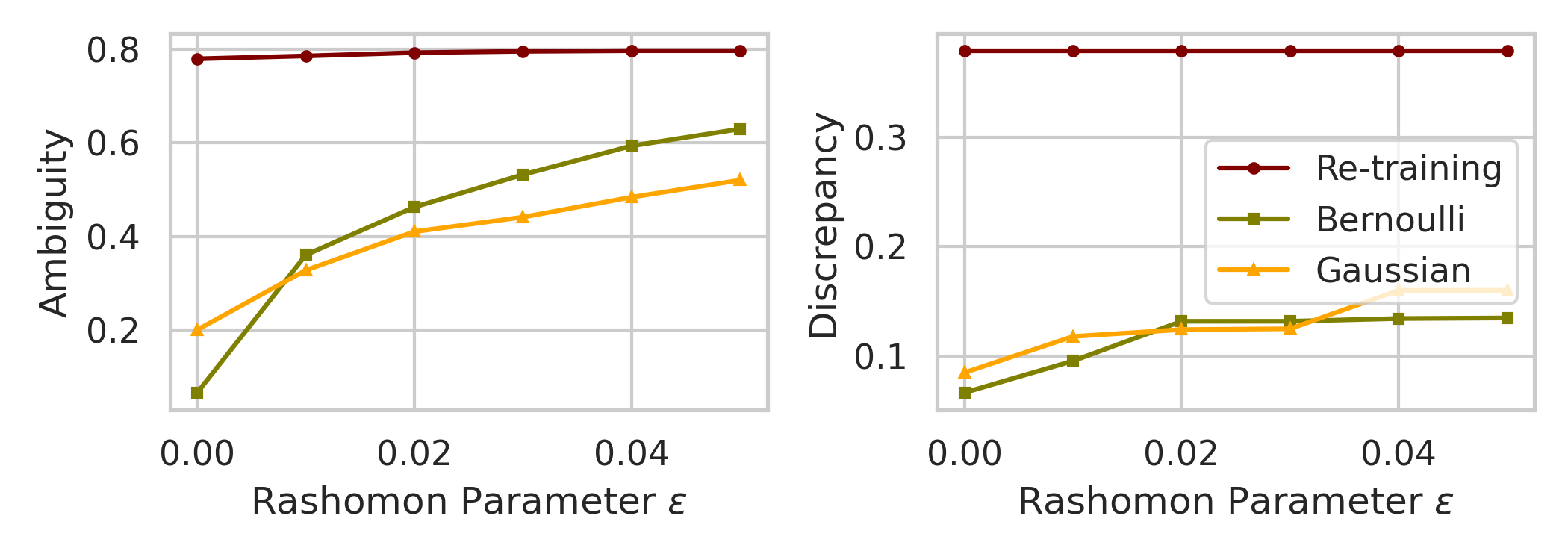}
\caption{CIFAR-100 dataset.} \label{fig:cifar100-decision-all}
\end{subfigure}    
\end{center}
\caption{Ambuguity and dsicrepancy of CIFAR-10/-100 datasets.} \label{fig:cifar-all-datasets-decision}
\end{figure}

Finally, we implement the re-training + dropout strategy, as mentioned in Section~\ref{sec:discussion}, to better explore the Rashomon set. 
In Figure~\ref{fig:cifar10-rebuttal-retrain-and-dropout-bernoulli} and~\ref{fig:cifar10-rebuttal-retrain-and-dropout-gaussian}, the predictive multiplicity metrics presented in each row are all estimated from 100 models in the Rashomon set. 
The first row is 100 dropout models obtained from 1 model via re-training. 
The second row is 50 dropout models obtained separately from 2 different model via re-training (again 100 models in total). 
The third row is 25 dropout models obtained separately from 4 different model via re-training, and the fourth is 20 dropout models obtained separately from 5 different model via re-training.
Observing multiplicity metrics such as the viable prediction range, the score variance and the Rashomon Capacity, it is clear that with more diverse model via re-training, the estimates of the multiplicity metrics have a higher cumulant. 
Take the viable prediction range as an example, around 20\% of the samples has up to value 0.5 if we apply dropout on only one re-training model (first row). 
However, there are 50\% of the samples has up to value 0.5 if we apply dropout on 5 diverse re-training models (last row).

\newpage
\begin{figure}[!b]
\begin{center}
\includegraphics[width=.95\textwidth]{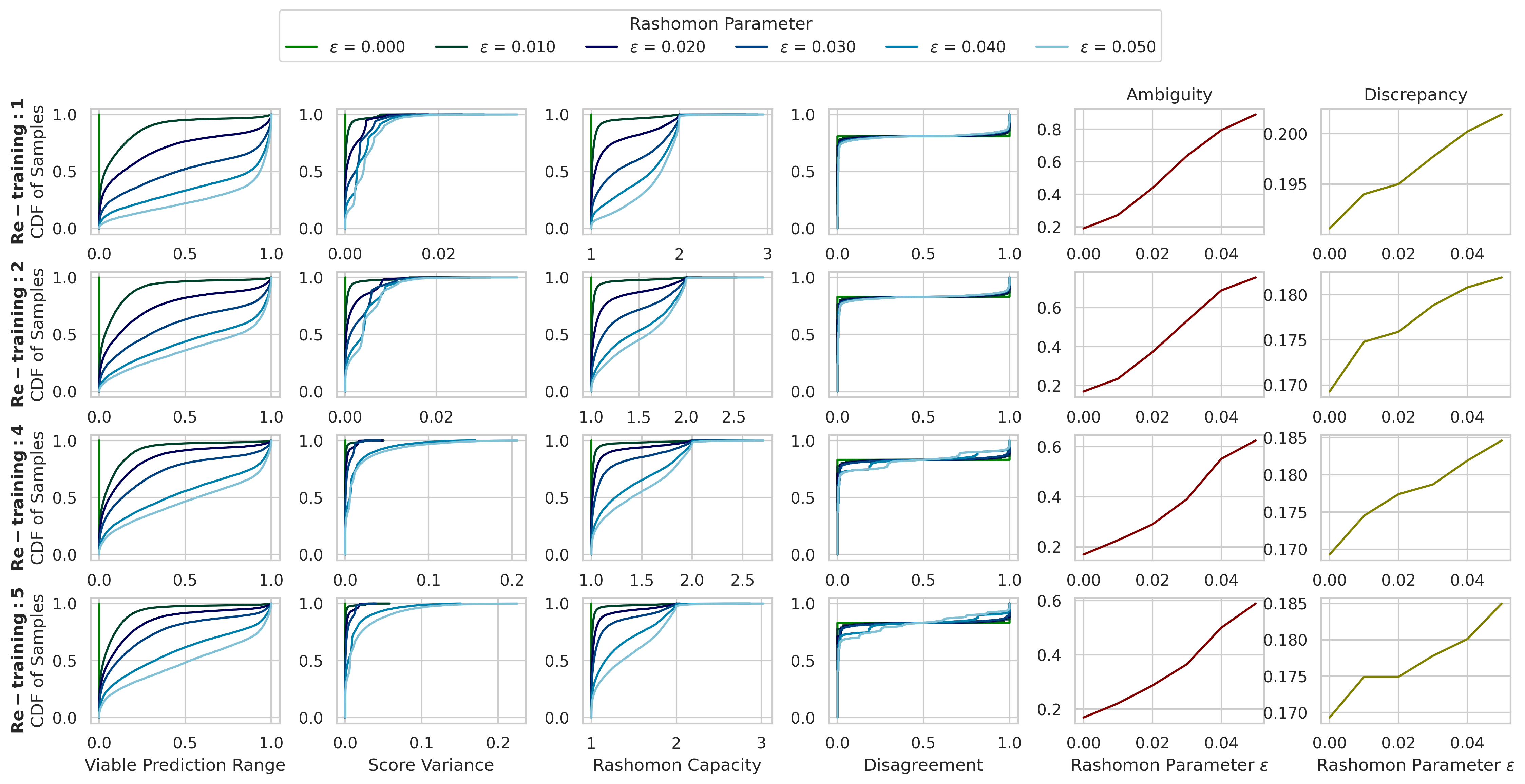}
\end{center}
\caption{Re-training + Bernoulli dropout to explore the Rashomon set on the CIFAR-10 dataset with VGG-16. }\label{fig:cifar10-rebuttal-retrain-and-dropout-bernoulli}
\end{figure}

\begin{figure}[!b]
\begin{center}
\includegraphics[width=.95\textwidth]{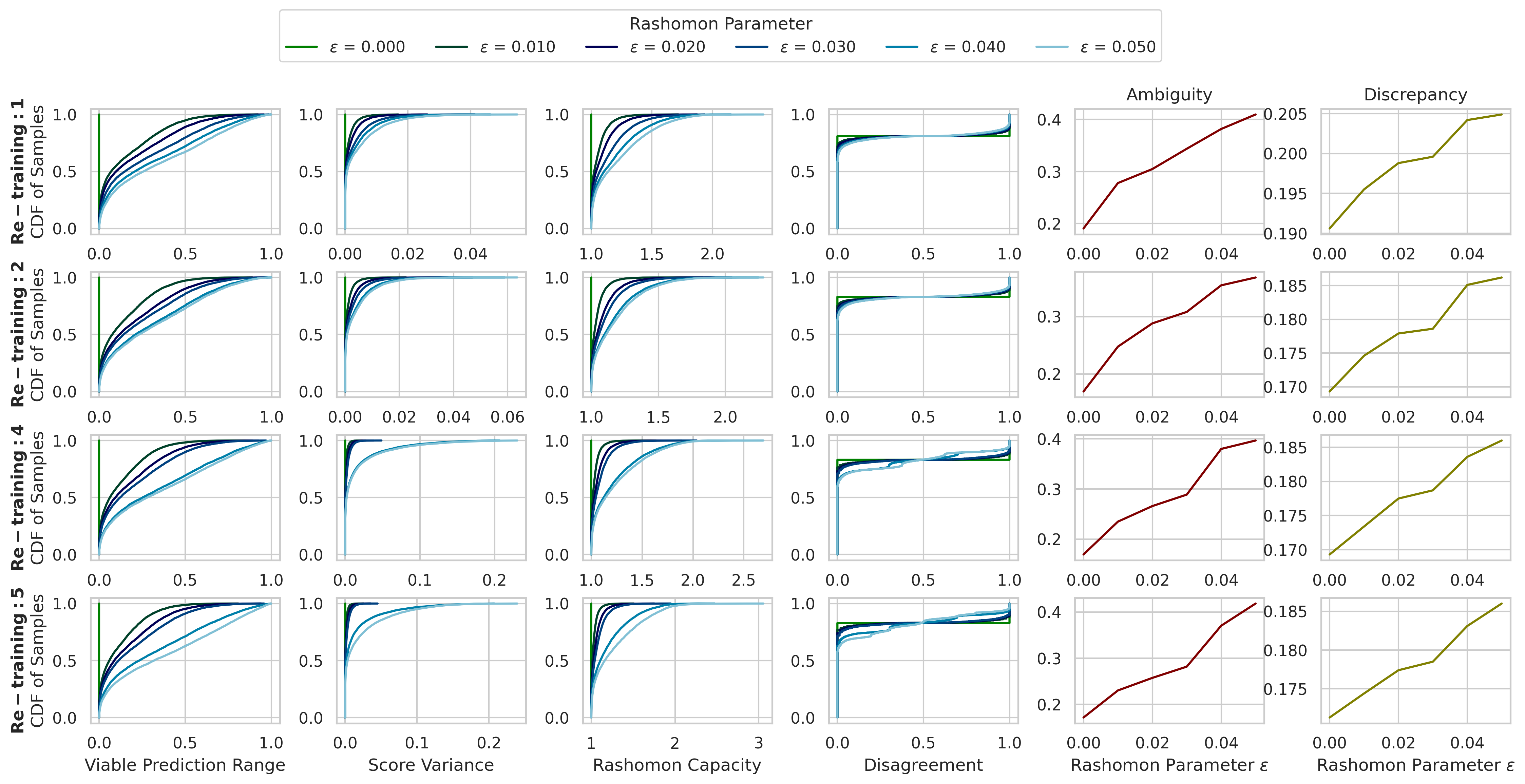}
\end{center}
\caption{Re-training + Gaussian dropout to explore the Rashomon set on the CIFAR-10 dataset with VGG-16.. }\label{fig:cifar10-rebuttal-retrain-and-dropout-gaussian}
\end{figure}

\clearpage
\subsection{Ablation Study}\label{app:add-exp-ablation}
\vspace{-.5em}
We use the UCI Adult Income dataset to perform ablation studies on the width and depth of the feed-forward neural networks, and use CIFAR-10/-100 for different neural network architectures.

We train the UCI Adult Income dataset on a neural network with different numbers of layers $K \in \{1, 2, 3, 4, 5\}$, and the number of neurons in each layers is 200. Figure~\ref{fig:adult-ablation-layer} shows the loss and accuracy deviations caused by Bernoulli and Gaussian dropouts.
As the number of layers increases, the loss deviation also increases, which is consistent with the theoretical bound derived in (\ref{eq:ffnn-loss}).
On the other hand, Figure~\ref{fig:adult-ablation-neuron} shows that for a neural network with only one hidden layer and  different number of neurons $\{200, 400, 600, 800, 1000\}$, it does not have a significant effect on the loss deviation under the same dropout parameter. 
The reason is in (\ref{eq:ffnn-loss}), the effect of number of layers $K$ is polynomial while the effect of the number of neurons $m$ is linear. 

Finally, we discuss the Rashomon effect with different neural network architectures on CIFAR-10/-100 datasets.
Note that VGG-16, ResNet-18 and ResNet-50 have 138M, 11M and 25.6M parameters.
We compare the loss deviation using Figure~\ref{fig:cifar-all-datasets-loss-acc} for VGG-16 and Figure~\ref{fig:cifar-ablation} for ResNet-18/-50.
For CIFAR-10, Bernoulli dropout on different architectures has similar performance, while Gaussian dropout leads to the largest loss deviation on ResNet-18. 
It is because ResNet-18 has the least number of parameters. 
Moreover, ResNet-18 and ResNet-50 have similar loss deviations on CIFAR-100.

\begin{figure}[!b]
\begin{center}
\includegraphics[width=.6\textwidth]{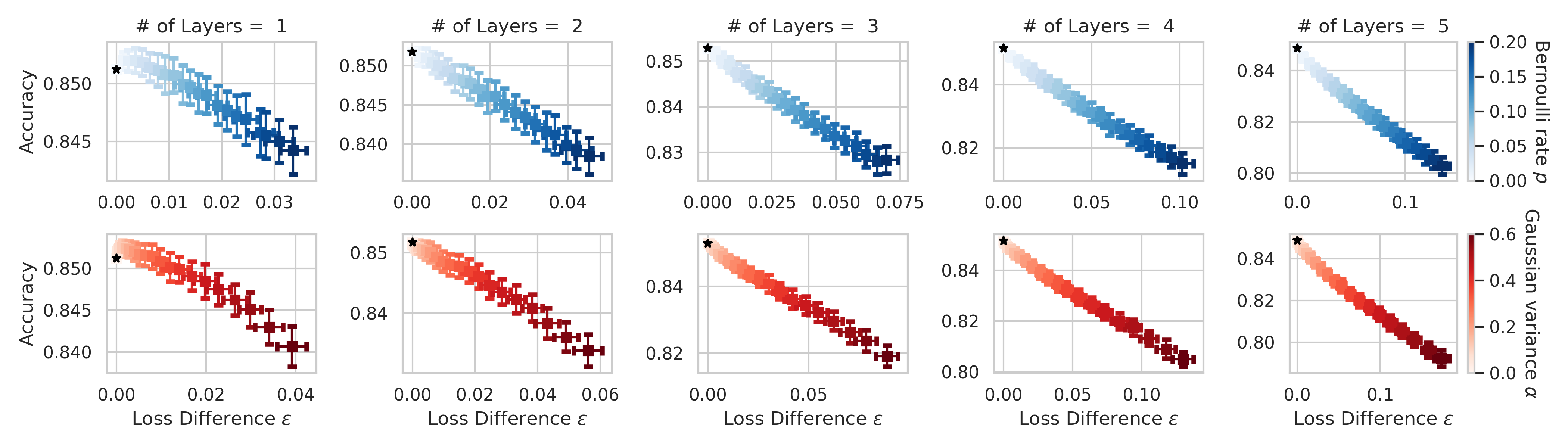}
\end{center}
\caption{Loss and accuracy deviations versus Bernoulli and Gaussian dropouts on on Adult Income dataset with different numbers of layers.}\label{fig:adult-ablation-layer}
\end{figure}

\begin{figure}[!b]
\begin{center}
\includegraphics[width=.6\textwidth]{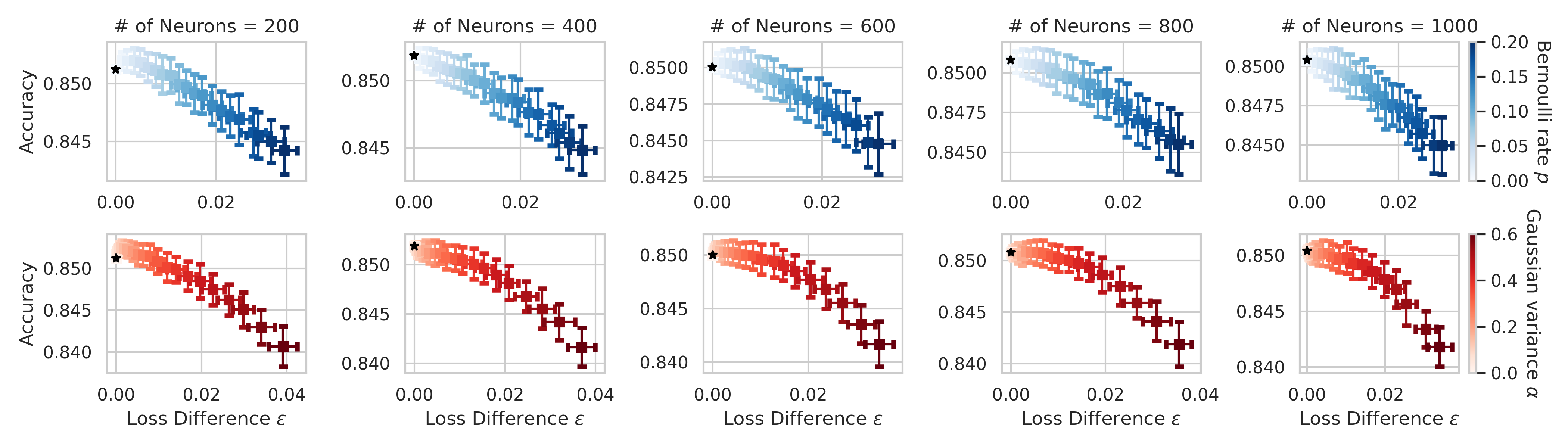}
\end{center}
\caption{Loss and accuracy deviations versus Bernoulli and Gaussian dropouts on on Adult Income dataset with different numbers of neurons.}\label{fig:adult-ablation-neuron}
\end{figure}

\begin{figure}[!b] 
\begin{center}
\begin{subfigure}{0.32\textwidth}
\includegraphics[width=\linewidth]{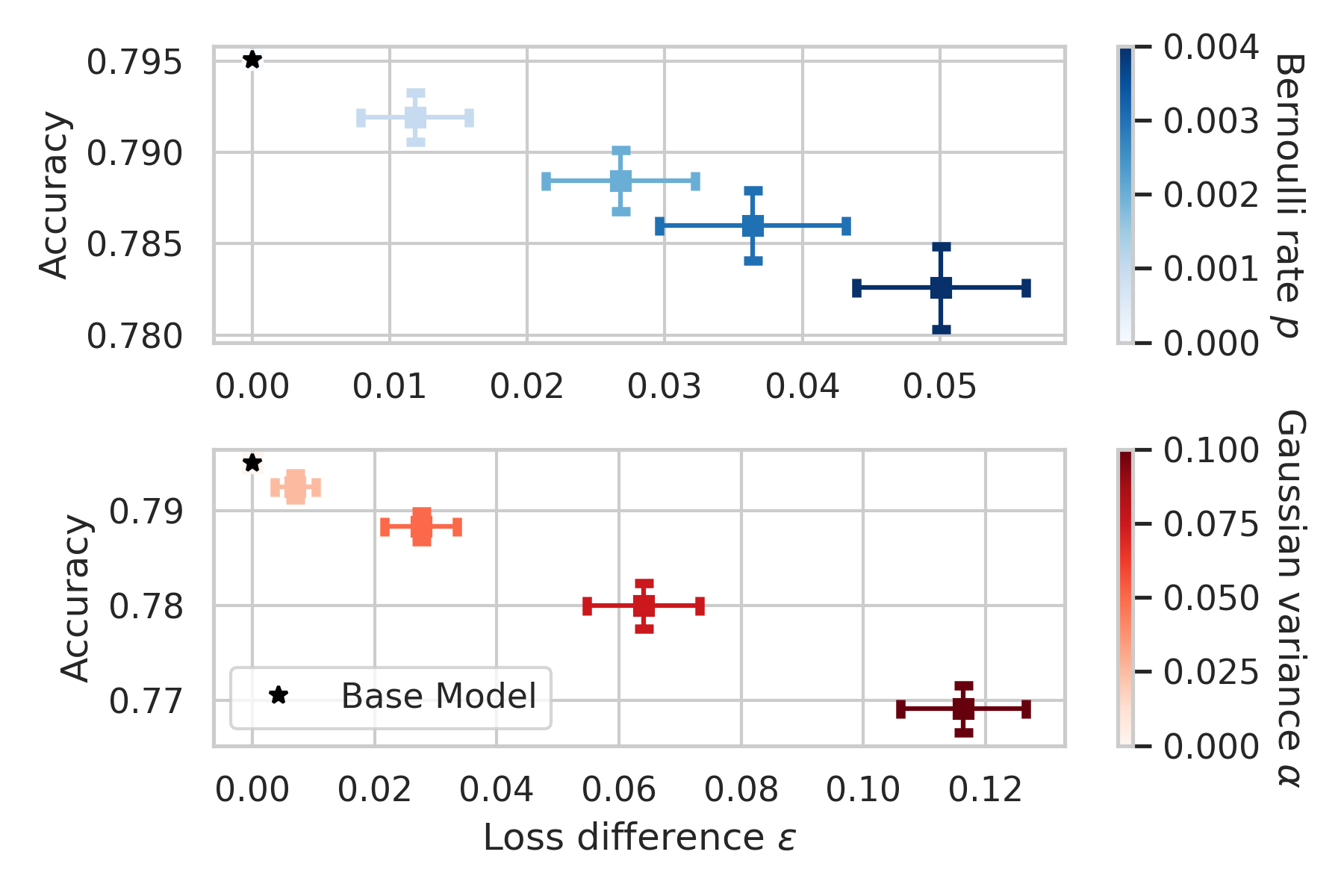}
\caption{CIFAR-10 with ResNet-18.} \label{fig:cifar10-resnet18}
\end{subfigure}
\begin{subfigure}{0.32\textwidth}
\includegraphics[width=\linewidth]{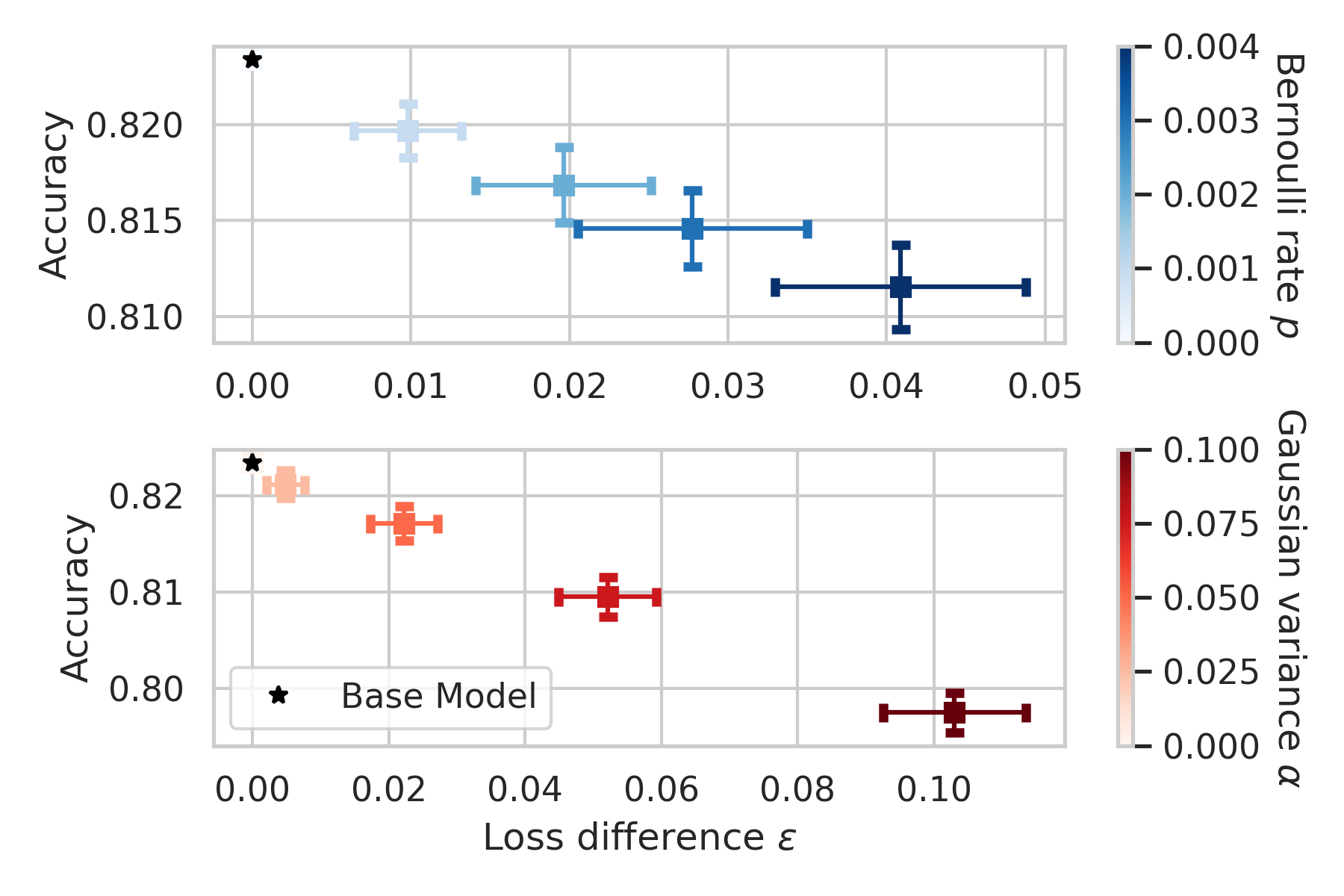}
\caption{CIFAR-10 with ResNet-50.} \label{fig:cifar10-resnet50}
\end{subfigure}
\begin{subfigure}{0.32\textwidth}
\includegraphics[width=\linewidth]{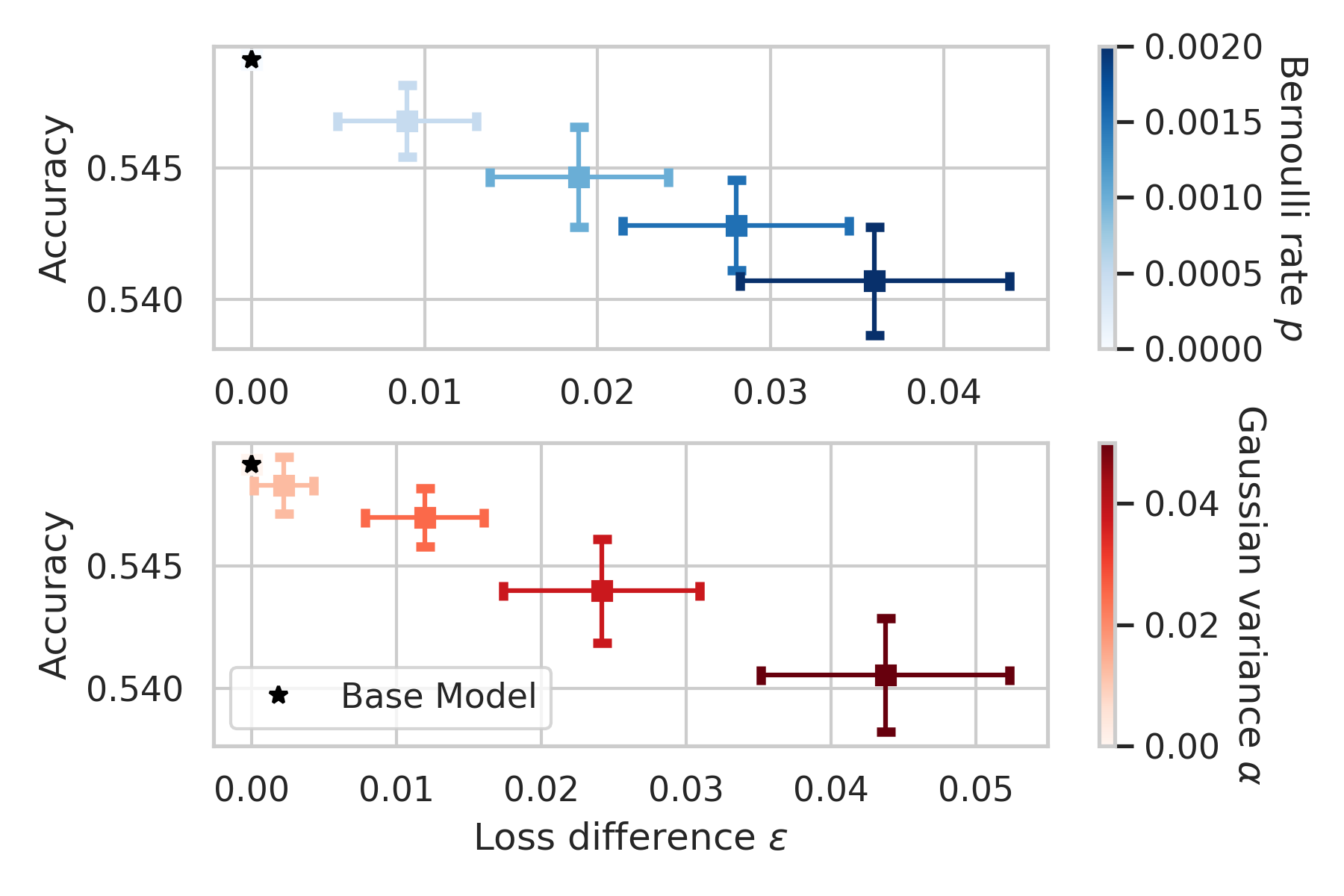}
\caption{CIFAR-100 with ResNet-18.} \label{fig:cifar100-resnet18}
\end{subfigure}
\end{center}
\caption{Loss and accuracy deviations versus Bernoulli and Gaussian dropouts on CIFAR-10/-100 datasets with different neural network architectures.} \label{fig:cifar-ablation}
\end{figure}

\clearpage
Moreover, we discuss whether model calibration could either reduce or exacerbate predictive multiplicity.
If a perfectly calibrated classifier assigns a 50\% score to a sample (e.g., in binary classification), it does not necessarily mean that this sample has high multiplicity. A perfectly calibrated classifier is one whose predicted classes matches the true classes on average across samples (e.g., samples predicted to be 50\% of one class have true outcomes matching that class 50\% of the time). However, this does not necessarily translate to a (in)consistent set of predictions for a single target sample across equally calibrated classifiers. It may be the case that all calibrated models drawn from the Rashomon set assign the same 50\% probability for that sample (no multiplicity). Conversely, some models may assign higher and lower confidence for that sample (high multiplicity) yet, on average, still be well-calibrated. Again, this happens because calibration (like accuracy and loss) is an average metric across all samples.

We implement Platt scaling \citep{platt1999probabilistic} to calibrate the scores produced by the neural networks for the Credit Approval dataset, and compare the score-based predictive multiplicity metrics with and without calibration in Figure~\ref{fig:credit-rebuttal-cali}. 
We observe that the score-based predictive multiplicity metrics are not impacted with calibration, for the exact reasons we have provided above.

\begin{figure}[!t]
\begin{center}
\includegraphics[width=.95\textwidth]{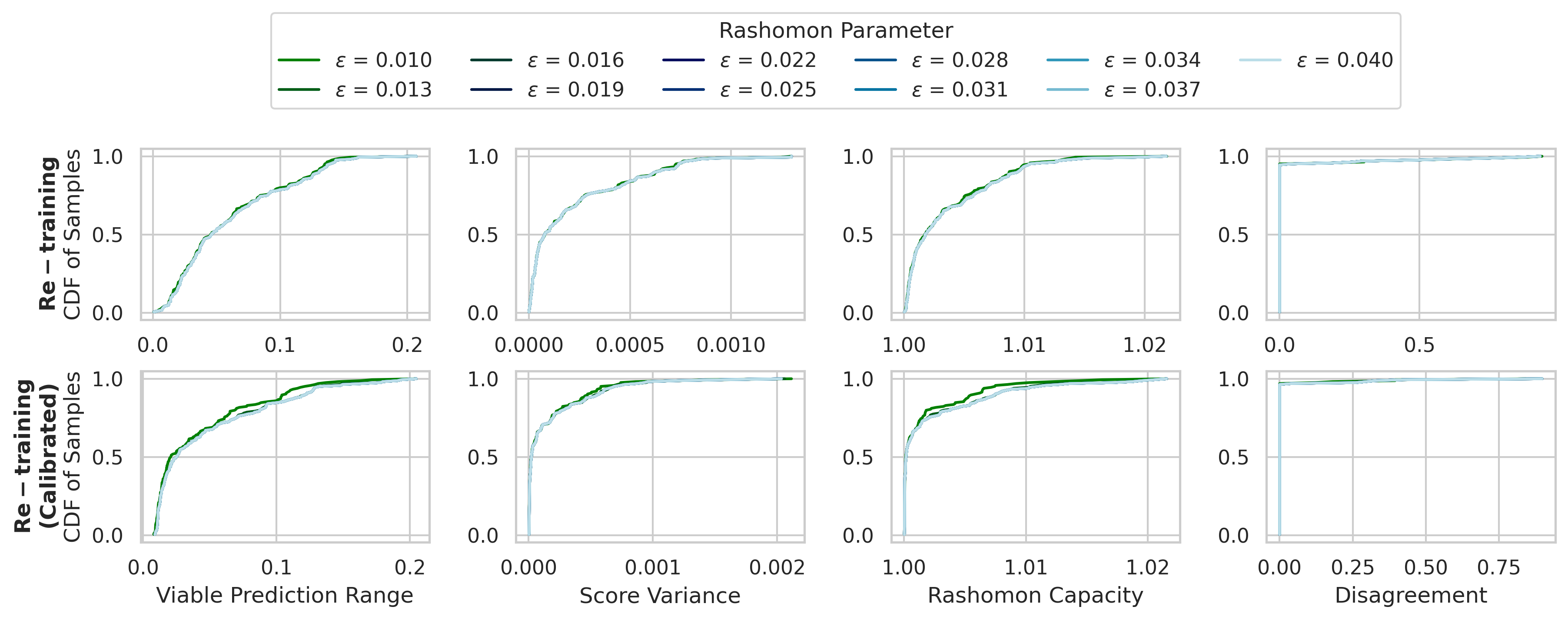}
\end{center}
\caption{Score-based predictive multiplicity metrics of the Credit Approval dataset with and without model calibration. Calibration, as an score manipulation on average performance, will not affect predictive multiplicity.}\label{fig:credit-rebuttal-cali}
\end{figure}

\end{document}